\newtheorem{assume}{\bf Assumption}
\newtheorem{thm}{\bf Theorem}
\newtheorem{prop}{\bf Proposition}
\newenvironment{proof}{{\it Proof:}}{\hfill $\blacksquare$\par}
\begin{document}

\title{IMFL-AIGC: Incentive Mechanism Design for Federated Learning Empowered by Artificial Intelligence Generated Content\\
}


\author{Guangjing Huang, Qiong Wu, Jingyi Li, Xu Chen
	\IEEEcompsocitemizethanks{
	\IEEEcompsocthanksitem G. Huang, J. Li and X. Chen are with the School of Computer Science and Engineering, Sun Yat-sen University, Guangzhou, China (e-mail: huanggj27@mail2.sysu.edu.cn;  lijy573@mail2.sysu.edu.cn; chenxu35@mail.sysu.edu.cn).
    \IEEEcompsocthanksitem Q. Wu is with the Department of Electronic and Computer Engineering, The Hong Kong University of Science and Technology, Hong Kong (e-mail: eeqiongwu@ust.hk).
	}
}

\maketitle

\begin{abstract}
	Federated learning (FL) has emerged as a promising paradigm that enables clients to collaboratively train a shared global model without uploading their local data. To alleviate the heterogeneous data quality among clients, artificial intelligence-generated content (AIGC) can be leveraged as a novel data synthesis technique for FL model performance enhancement. Due to various costs incurred by AIGC-empowered FL (e.g., costs of local model computation and data synthesis), however, clients are usually reluctant to participate in FL without adequate economic incentives, which leads to an unexplored critical issue for enabling AIGC-empowered FL. To fill this gap, we first devise a data quality assessment method for data samples generated by AIGC and rigorously analyze the convergence performance of FL model trained using a blend of authentic and AI-generated data samples. We then propose a data quality-aware incentive mechanism to encourage clients' participation. In light of information asymmetry incurred by clients' private multi-dimensional attributes, we investigate clients' behavior patterns and derive the server's optimal incentive strategies to minimize server's cost in terms of both model accuracy loss and incentive payments for both complete and incomplete information scenarios. Numerical results demonstrate that our proposed mechanism exhibits highest training accuracy and reduces up to $53.34\%$ of the server's cost with real-world datasets, compared with existing benchmark mechanisms.
\end{abstract}

\begin{IEEEkeywords}
Federated learning, incentive mechanism, crowdsourcing, artificial intelligence-generated content
\end{IEEEkeywords}

\section{Introduction}
\IEEEPARstart{T}{he} fast proliferation of edge devices (e.g., mobile devices and wearable devices) in modern society has led to the rapid growth of data generated from massive distributed sources, which further promotes the advancement of a wide range of artificial intelligent applications (e.g., autonomous driving and healthcare) \cite{nguyen20216g, zhou2019edge}. However, due to the increasing privacy concerns \cite{voigt2017eu} and limited network bandwidth, the predominant mechanism that gathers extensive data from dispersed devices to the cloud for centralized model training becomes impractical. To reap the benefits of the scattered data without privacy risk, federated learning (FL) \cite{mcmahan2017communication} has emerged as a promising paradigm that enables to collaboratively train a shared global model by aggregating locally-computed updates uploaded by clients (e.g., mobile devices). By decoupling model training from the need of direct access of the local data on the devices, FL realizes distributed and privacy-preserving model training.

\begin{figure}[t]
	\centering
	\includegraphics[width=0.9\linewidth]{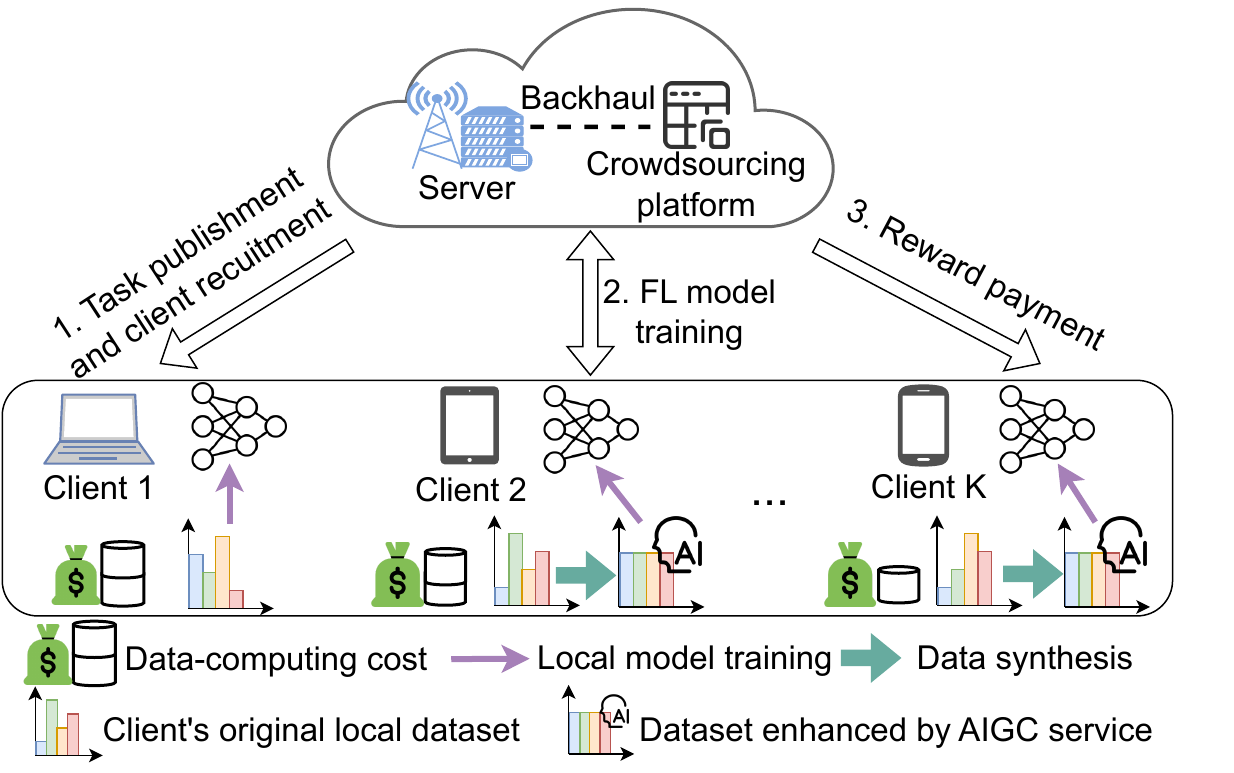}
	\vspace{-10pt}
	\caption{AIGC-empowered federated learning scenario.}
	\vspace{-15pt}
	\label{intro_FL}
\end{figure}

Nevertheless, most existing FL frameworks usually make an optimistic assumption that clients participate in FL model training voluntarily and ignore the inevitable data-computing cost (e.g., battery and CPU resource consumption for local model updates) incurred by inevitable training \cite{shayan2018biscotti}, \cite{yu2020fairness}. While in reality, rational clients would be reluctant to participate in model training without sufficient economic compensation \cite{21incentivetmc}, \cite{24incentive}.
By paying rewards to compensate the cost of clients reasonably, incentive mechanism has garnered significant attention from researchers and become the essential financial catalyst for making FL a reality \cite{pang2022incentive}.

Despite the promising benefits of privacy preservation, federated learning performance remains constrained in mobile application scenarios due to challenges such as non-IID (a.k.a. non independent and identically distributed) nature of data from scattered mobile devices \cite{li2022federated}. 
In certain mobile applications, such as autonomous vehicle training \cite{20vehicles}, \cite{24Synve} and health monitoring \cite{wang2023applications}, the lack of specific labeled data further exacerbates this issue, as the characteristics of users' datasets vary according to their activities.
To alleviate the issue of non-IID data distribution and data scarcity inherent in FL, the great explosion of artificial intelligence-generated content (AIGC) service \cite{du2023enabling}  has opened up a compelling avenue for the clients to generate high-quality data (e.g., images and videos) with generative AI techniques at a rapid pace.
For example, by leveraging generative adversarial networks, clients are able to collectively train a generative model to augment their local data towards yielding an IID dataset \cite{jeong2018communication}. Empowered with generative AI services (e.g., Stable Diffusion \cite{ho2022classifier}), the learning performance of FL can be significantly improved by utilizing a pre-trained diffusion model to synthesize customized local data \cite{li2023filling}. Consequently, integrating AIGC as a data synthesis tool at the client side can substantially improve data quality and enhance FL model performance in real-world applications.

However, the additional costs of data generation by AIGC service would mitigate clients' willingnesses to participate in federated learning without sufficient economic compensation, thereby presenting new challenges to current incentive mechanisms which does not consider the complex clients' behaviors on whether to adopt AIGC service \cite{li2022federated}, \cite{li2023filling}, \cite{24Fedmatch}.
In light of potential benefits and challenges of using AIGC technique in the context of FL, it is non-trivial to devise efficient incentive mechanism for such a complicated AIGC-empowered FL scenario to encourage clients to contribute their data (e.g., local data or rectified data by AIGC service) and resources for FL model training while minimizing server's own cost (e.g., payments for client recruiting, model accuracy loss, etc.). First, an indispensable preliminary step for effective incentive mechanism involves performance assessment of the final converged model prior to commencing FL model training. As the model performance is jointly influenced by many factors, i.e., the number of global training iterations, clients' attributes including the quantity and quality of the local data, it is non-trivial to evaluate the final model performance \cite{nie2018stackelberg, edirimannage2023qarma}. Moreover, the data quality changes resulted from clients' adoption of AIGC service for data synthesis presents key challenges for model performance evaluation. Second, faced the temptation of being rewarded, clients may choose to generate a high-quality dataset by leveraging AIGC service (e.g., reaching IID dataset by replenishing data samples in minority classes) for local model updates, which may further affect the server's decision-making regarding client recruitment, and hence make the design of optimal incentive strategy much more involved. Third, clients' individual  attributes (e.g., data quality and data-computing cost) also bring difficulty to the incentive mechanism design due to information asymmetry (i.e., such private client's information may not be available to the server for decision making prior to the FL training process).

To overcome the aforementioned challenges, in this paper, we propose a data quality-aware incentive mechanism in AIGC-empowered FL scenario. As illustrated in Fig. \ref{intro_FL}, the server publishes a training task and encourages clients' participation through a crowdsourcing platform (e.g., Amazon Mechanical Turk) with a data quality-aware reward allocation mechanism. By analyzing the convergence performance bound in AIGC-empowered FL scenario, we formulate the server's cost in terms of model accuracy loss and payment to clients. Furthermore, we reveal server's optimal incentive strategy to minimize its cost by fully studying clients' rational behaviors under different information settings (i.e., complete and incomplete information scenarios).

In summary, this paper makes the following contributions:

\begin{itemize}
	\item \textit{Incentive mechanism design for AIGC-empowered FL}. We first devise a data quality assessment method for data samples generated by AIGC and then propose a data quality-aware incentive mechanism to encourage clients' participation. By characterizing the clients' complex behavior patterns (e.g., whether to participate in FL, whether to use generated data samples), we design a data quality-aware reward allocation mechanism for the participating clients and derive the server's optimal incentive strategy. To the best of our knowledge, this is the first paper to study the incentive mechanism design for AIGC-empowered FL.
	\item \textit{Analysis of model training performance for AIGC-empowered FL}. We derive a novel convergence upper bound of the gap between FL model training loss and the optimal loss value in AIGC-empowered FL scenarios where clients may choose to adopt AIGC-enhanced dataset for local model training instead of their original local dataset.	
	\item \textit{Investigation on the impact of information asymmetry and the adoption of AIGC service}.
	Compared with complete information scenario, the server may suffer from high cost to hedge the risk of no or very few clients participating in FL model training in incomplete information scenario. Hence, the server tends to boost the incentive reward especially in the case with a small number of candidate clients. Facing with small candidate client size of lower data quality, the adoption of AIGC service for data sysnthesis can bring a much more significant gain in server's cost reduction for both complete and incomplete information scenarios.
	
	
	\item \textit{Performance evaluation}. Extensive performance evaluations based on real-world datasets validate our theoretical analysis and show that our incentive mechanism exhibits superior performance, e.g., achieving up-to 53.34\% server's cost reduction and highest training accuracy, compared with the benchmark mechanisms. 
	
\end{itemize}

\section{System Model}
In this section, we first introduce AIGC-empowered FL paradigm. We further characterize clients' attributes, clients' utility functions and server's utility function. Finally, we propose data quality-aware incentive mechanism under complete and incomplete information scenarios. For readability, we summarize the key notations in Table \ref{table_all_ke}.

\subsection{AIGC-empowered Federated Learning}\label{ger_a}
As illustrated in Fig. \ref{intro_FL}, we consider a AIGC-empowered FL scenario, which comprises of one server and a set $\mathcal{K}=\{1,...,K\}$ of candidate clients who can choose to leverage AIGC service for high-quality data synthesis\footnote{As an initial thrust and for ease of exposition, in this paper we focus on the scenarios that the client candidates who have their local datasets and meanwhile the required capability for data synthesis  (e.g., subscribers of some AIGC services) will be considered for FL, due to the wide penetration of AIGC adoptions worldwide. We will further study the more general cases wherein some clients do not possess the capability for data synthesis.}. As a result, each client can participate in FL with its original local dataset  $\mathcal{D}_{k}$ or AIGC-enhanced dataset $\mathcal{D}_{k}^{A}$ aided with data synthesis.

\begin{table}[t]
	\setlength{\abovecaptionskip}{-0.02cm}
	\renewcommand{\arraystretch}{1.3}
	\caption{List of key notations.}
	\label{table_all_ke}
	\centering
	\scriptsize
	\begin{tabular}{c|c}
		\hline
		Symbol & Description \\
		\hline
		$\mathcal{K}$ & The set of candidate clients\\
		\hline
		$\mathcal{D}_{k}$   &Local dataset of client $k$\\
		\hline
		$\mathcal{D}_{k}^{A}$ & AIGC-enhanced dataset of client $k$\\
		\hline
		$F_{k,L}$  &Loss function of client $k$ over its local dataset \\
		\hline
		$F_{k,A}$  &Loss function for client $k$ over its AIGC-enhanced dataset\\
		\hline
		$p^{k}_{a}(y=i)$  &Client $k$ 's proportion of data samples for data synthesis\\
		\hline
		$d_{k}$ &Data size of client $k$\\
		\hline
		$\lambda_{k}$  & Parameter for data quality of client $k$'s local dataset\\
		\hline
		$\lambda_{k,A}$  &Parameter for data quality of client $k$'s AIGC-enhanced dataset\\
		\hline
		$s_{k}$  & Data-computing cost per unit data sample for client $k$\\
		\hline
		$\mathcal{N}$& The set of participating clients\\
		\hline
		$\mathcal{N}^{L}$& The set of participating clients using local datasets\\
		\hline
		$\mathcal{N}^{A}$& The set of participating clients using AIGC-enhanced datasets\\
		\hline
		$r$& The uniform unit data reward benchmark $r$ \\
		\hline
		$T$& The number of global iterations\\
		\hline
		$R(\mathcal{N}_{r})$& Total payment to participating clients given benchmark $r$ \\
		\hline
	\end{tabular}
\end{table}

\begin{figure*}[t]
	\begin{minipage}[t]{0.29\textwidth}
		\setlength{\abovecaptionskip}{-0.05cm}
		\centering
		\includegraphics[width=0.87\linewidth]{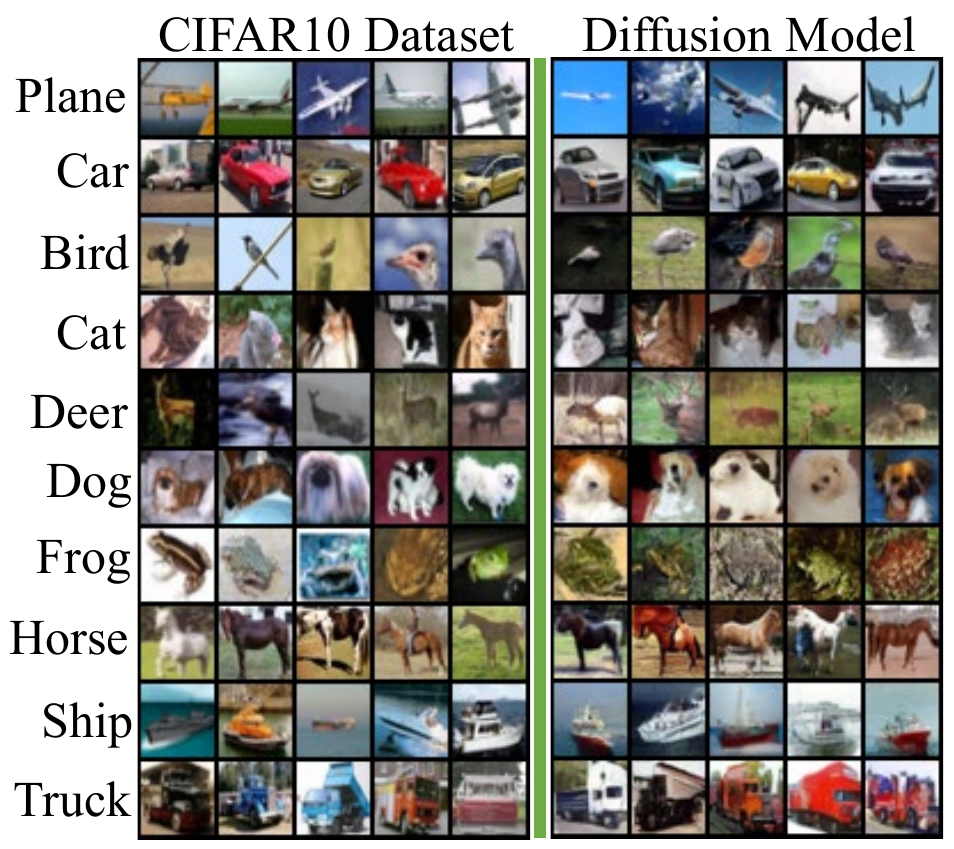}
		\caption{Comparison between real-world data samples and generated data samples.}
		\label{cifar_diffusion}
	\end{minipage}
	\begin{minipage}[t]{0.34\textwidth}
		\setlength{\abovecaptionskip}{-0.05cm}
		\centering
		\includegraphics[width=0.87\linewidth]{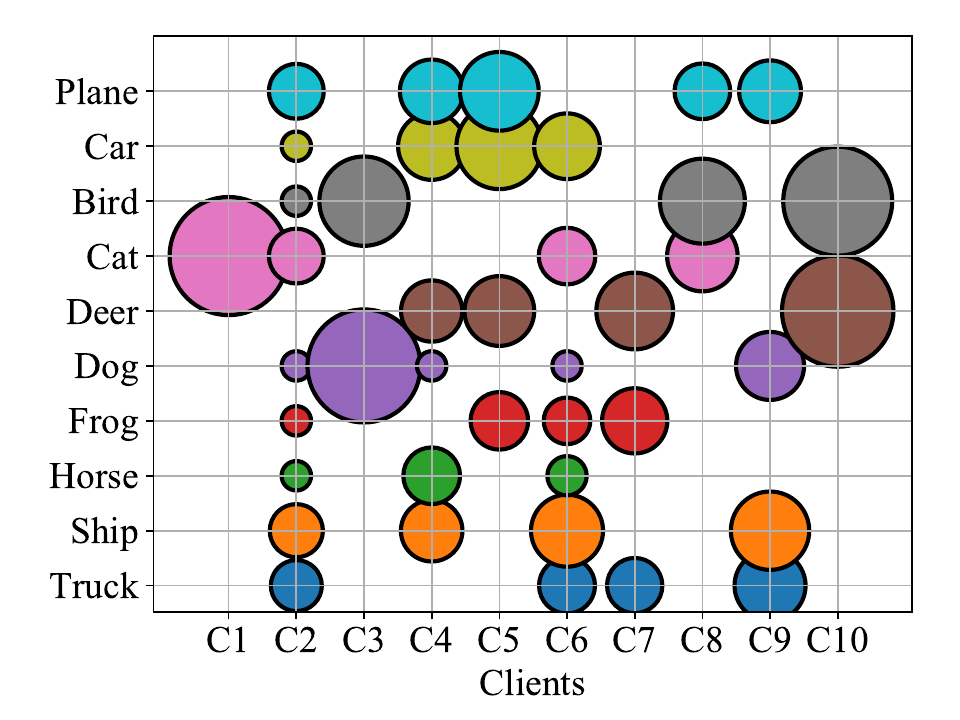}
		\caption{Distribution among classes of the local datasets for 10 randomly selected clients.}
		\label{local_dis}
	\end{minipage}
	\begin{minipage}[t]{0.34\textwidth}
		\setlength{\abovecaptionskip}{-0.05cm}
		\centering
		\includegraphics[width=0.87\linewidth]{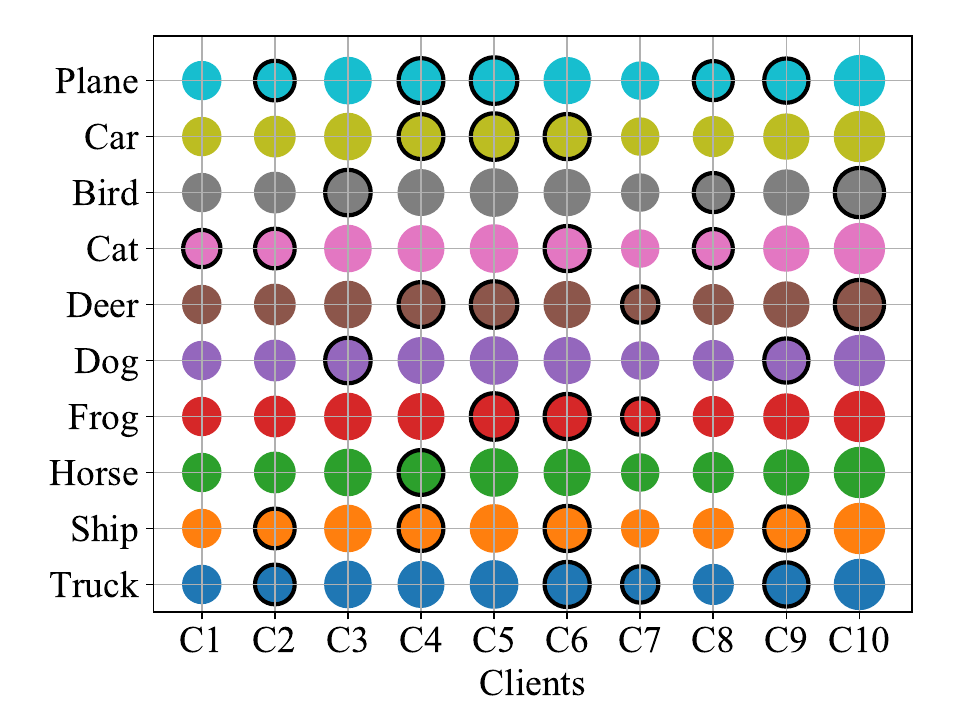}
		\caption{Distribution among classes of the AIGC-enhanced dataset for 10 randomly selected clients.}
		\label{arg_dis}
	\end{minipage}
\end{figure*}

For a general classification task, each data sample $(\boldsymbol{x},y)$ in the local dataset $\mathcal{D}_k$ distributes over $\mathcal{X} \times \mathcal{Y}$ following the distribution $\boldsymbol{p}$ \cite{2018non}, where $\mathcal{X}$ is a compact space of data features $\boldsymbol{x}$ and $\mathcal{Y}=\{1,...,Y\}$ represents the label space for ground-truth $y$. The number of total data samples of client $k$ is denoted as $d_k$. To conduct model training, a function (e.g., neural network) $f(\boldsymbol{x})=(f_{1}(\boldsymbol{x}),...,f_{Y}(\boldsymbol{x})): \mathcal{X}\rightarrow \mathcal{M}$ parameterized over the hypothesis class $\boldsymbol{w}$ is employed. Here, $f_{i}(\boldsymbol{x})$ quantifies the probability of data sample $(\boldsymbol{x},y )$ belonging to the $i$-th class and $\mathcal{M}=\{(m_{1},...,m_{Y}) \vert  \sum_{ i\in \mathcal{Y}}m_{i}=1,m_{i}\geq 0,\forall i\in \mathcal{Y}\}$. The loss function for client $k$ over its local dataset $\mathcal{D}_{k}$ can be defined with the widely-used cross-entropy loss as:
\begin{eqnarray}
\notag
F_{k,L}(\boldsymbol{w})=\mathbb{E}_{\boldsymbol{x}, y \sim \boldsymbol{p}}\left[\sum_{i=1}^{Y} -p^{k}(y=i) \log f_{i}(\boldsymbol{w}; \boldsymbol{x})\right]\\
=\sum_{i=1}^{Y} p^k(y=i)  \mathbb{E}_{x|y=i}\ell_{i}(\boldsymbol{w}; \boldsymbol{x}),
\end{eqnarray}
where $\ell_{i}(\boldsymbol{w}; \boldsymbol{x})=-\log f_{i}(\boldsymbol{w}; \boldsymbol{x})$. $p^{k}(y=i)$ denotes the proportion of data samples belonging to the $i$-th class in $\mathcal{D}_{k}$. Accordingly, the goal of FL is to minimize a global loss function, a weighted average of loss functions of all clients, i.e., $\min_{\boldsymbol{w}}F(\boldsymbol{w})= \sum_{k\in \mathcal{K}}\frac{d_{k}}{d(\mathcal{K})}
F_{k}(\boldsymbol{w})$,
where $d(\mathcal{K})=\sum_{k\in \mathcal{K}}d_{k}$ is the total number of data samples of all clients in $\mathcal{K}$.

Moreover, each client $k \in \mathcal{K}$ can also conduct data synthesis by adopting AIGC service (e.g., Dall-E or Stable Diffusion) to obtain a higher-quality AIGC-enhanced dataset $\mathcal{D}_{k}^{A}$, which   is a mixture of local real-world data samples and generated data samples under IID data distribution. As an illustration, we utilize diffusion model \cite{ho2020denoising} to mimic the provisioning of  AIGC service for the clients due to its impressive capability in
generating photo-realistic images with rich texture as shown in Fig. \ref{cifar_diffusion}. For client $k$ with non-IID data distribution, the data synthesis process is performed as follows: for each class $i$, given the proportion of data samples with class $i$ in the global data (denoted as $p(y=i)$) 
\footnote{In this paper, we consider global data follows IID distribution, since the server incentivizes numerous candidate clients in crowdsourcing platform, reaching a large number of data samples from each class.
Hence, we have $p(y=i)=\frac{1}{Y}$ for each class $i$ in IID data distribution, which can be used as the consensus of all clietns for data synthesis.} 
and the proportion of data samples with class $i$ in $\mathcal{D}_k$ (denoted as $p^{k}(y=i)$), the proportion of data samples for data synthesis is calculated as $p^{k}_{a}(y=i)=p(y=i)-p^{k}(y=i)$. 
For simplicity, we adopt this data synthesis process by keeping the datasize $d_k$ unchanged  \footnote{Such data synthesis process not only improves data quality (mimic IID distribution), but also saves large computing cost of local training at resource-constrained client devices incurred by generated samples from AIGC service. 
In reality, specifying a uniform data augmentation manner contributes to controlling the data quality of each client and predicting clients' behavior especially in incomplete information scenarios. 
Compared with training directly over the datasets only involving generated samples, in this paper the mixture of real-world and generated samples at clients' sides incurs lower cost for data synthesis and possesses higher data quality instead. We verify this fact by evaluating the data quality of AIGC-enhanced datasets in theoretical perspective and conducting experiments as elaborated in Section \ref{clients_att} and Section \ref{PEdis}, respectively}, i.e., $|\mathcal{D}^{A}_{k}|=|\mathcal{D}_{k}|=d_{k}$. Hence, $p^{k}_{a}(y=i)> 0$ indicates that client $k$ generates $p^{k}_{a}(y=i)d_{k}$ data samples with label $i$ by utilizing AIGC service. While client $k$ removes the redundant $p^{k}_{a}(y=i)d_{k}$ data samples with label $i$ from its local dataset when $p^{k}_{a}(y=i)\leq 0$. As depicted in Fig. \ref{local_dis}, we randomly select 10 clients with non-IID data distributions, the area of a circle represents the amount of data samples for a target class. By conducting data synthesis discussed above, we replenish data samples for the minority classes (circles without black borders as depicted in Fig. \ref{arg_dis}) and obtain higher-quality (e.g., more IID) AIGC-enhanced datasets for each client.

We assume that each generated data sample distributes over $\mathcal{X} \times \mathcal{Y}$ following distribution $\boldsymbol{p}_{a}$. 
By obtaining the definitions of $f_{i,a}(\boldsymbol{w};\boldsymbol{x})$ and $\ell_{i,a}(\boldsymbol{w};\boldsymbol{x})$ which are similar with $f_{i}(\boldsymbol{w};\boldsymbol{x})$ and $\ell_{i}(\boldsymbol{w};\boldsymbol{x})$, respectively, we define the loss function for client $k$ over its AIGC-enhanced dataset $\mathcal{D}_{k}^{A}$ as:
\begin{eqnarray}\label{gen_loss}
\notag
F_{k,A}(\boldsymbol{w})
=& &\!\!\!\!\!\!\!\!\!\sum_{i\in \mathcal{I}_{k}} p^k_{a}(y=i)  \mathbb{E}_{x|y=i}\ell_{i,a}(\boldsymbol{w}; \boldsymbol{x})\\
& &\!\!\!\!\!\!\!\!\!\!+\sum_{i \notin \mathcal{I}_{k}} p(y=i)  \mathbb{E}_{x|y=i}\ell_{i}(\boldsymbol{w}; \boldsymbol{x}),
\end{eqnarray}
where $\mathcal{I}_{k}=\{i\vert i\in \mathcal{Y}, p_{a}^{k}(y=i)>0 \}$ denotes the set of labels injected by generated data samples for client $k$. 

To reap the benefits from clients' local (or AIGC-enhanced) dataset, we adopt the widely-accepted synchronous FL paradigm, which entails multiple iterations between the server and clients for global model training until reaching a predetermined global iteration numbers $T$. Specifically, each iteration consists of both local model training process and global model aggregation process as follows: 

1) Local model training on $\mathcal{D}_{k}$ or $\mathcal{D}_{k}^{A}$: The server incentivizes a set of clients $\mathcal{N}\subset \mathcal{K}$ to participate in FL.
In each global iteration $t$, each client $k\in \mathcal{N}$ receives current global model $\boldsymbol{w}^{t-1}$ from the server and conducts $h$-step local updates based on $\boldsymbol{w}^{t-1}$ through mini-batch stochastic gradient descent (SGD) algorithm over its local or AIGC-enhanced dataset. For ease of presentation, we denote $\boldsymbol{w}_{k}(t,t_{l})$ as local model of client $k$ at $(t_{l}-(t-1)h)$-th step in global iteration $t$. Here, $t_{l}\in ((t-1)h,th]$ represents the $t_l$-th local update starting from the beginning of global model training. According to the definition, the model parameters of client $k$ evolve as:
\begin{eqnarray}\label{thm1_eq1}
\boldsymbol{w}_{k}(t, t_{l})=\left\{
\begin{aligned}
&\boldsymbol{w}_{k}(t, t_{l}-1)-\eta \nabla F_{k}(\boldsymbol{w}_{k}(t, t_{l}-1);\boldsymbol{\xi}_{k}^{t,t_{l}}),
\\
&\quad \qquad \qquad\qquad\qquad\qquad \mbox{if}\  t_{l}\!\!\!\!\!  \mod h \! \neq\! 0,
\\
&\boldsymbol{w}(t, t_{l}),\  \mbox{if}\  t_{l} \!\!\! \mod h =0,
\end{aligned}
\right.
\end{eqnarray}
where 
\begin{eqnarray}\label{thm1_eq2}
\notag
\!\!\!\!\!\!\!\boldsymbol{w}(t, t_{l})=\sum_{k\in \mathcal{N}}\!\!\frac{d_{k}\!\!\left[ \boldsymbol{w}_{k}(t, t_{l}\!-\!1)\!-\!\eta \nabla F_{k}(\boldsymbol{w}_{k}(t, t_{l}\!-\!1);\boldsymbol{\xi}_{k}^{t,t_{l}})\right]}{d(\mathcal{N})}.\!\!\!\!\!\!\!\!\!\!\!\!\!\!\!\!\!\!\!\!\!\!\!\!\\
\end{eqnarray}
Here, $\nabla F_{k}(\boldsymbol{w}_{k}(t, t_{l}-1);\boldsymbol{\xi}_{k}^{t,t_{l}})$ denotes the gradient of client $k$ and $\boldsymbol{\xi}_{k}^{t,t_{l}}$ represents the mini-batch data samples. For ease of presentation, we have  $\nabla F_{k}(\boldsymbol{w}_{k}(t, t_{l}-1);\boldsymbol{\xi}_{k}^{t,t_{l}})=\nabla F_{k,L}(\boldsymbol{w}_{k}(t, t_{l}-1);\boldsymbol{\xi}_{k,L}^{t,t_{l}})$ for local dataset and $\nabla F_{k}(\boldsymbol{w}_{k}(t, t_{l}-1);\boldsymbol{\xi}_{k}^{t,t_{l}})=\nabla F_{k,A}(\boldsymbol{w}_{k}(t, t_{l}-1);\boldsymbol{\xi}_{k,A}^{t,t_{l}})$ for AIGC-enhanced dataset. Accordingly, we have $\boldsymbol{\xi}_{k}^{t,t_{l}}=\boldsymbol{\xi}_{k,L}^{t,t_{l}}$ for local dataset and  $\boldsymbol{\xi}_{k}^{t,t_{l}}=\boldsymbol{\xi}_{k,A}^{t,t_{l}}$ for AIGC-enhanced dataset. In this paper, we adopt a widely-used FL setting that the batchsize $B_{k}$ of each client $k$ is in the same proportion to its data size by setting $d_{k}=hB_{k}$ \cite{bourtoule2021machine, ding2020optimal, tran2019federated}. After $h$-step local updates, each client $k$ uploads its local model to the server.


2) Global model aggregation: At global iteration $t$, the server receives all local models from client set $\mathcal{N}$, and conducts model aggregation to obtain a new global model $\boldsymbol{w}^{t} = \boldsymbol{w}(t, t_l)$ where $t_l\mod h = 0$.

\subsection{Client's Attributes}\label{clients_att}
Clients in FL scenarios greatly vary in device usage patterns (due to diverse physical characteristics and behavioral habits of users) and computation resource budgets (including computation capacity, memory, etc.), which results in diverse data-computing cost, data amounts and distributions among clients, and further affects FL model training performance, Hence, we identify each client $k$ by a tuple of attributes $\langle d_{k},  \lambda_{k},s_{k} \rangle$, each of which is characterized as follows:








\subsubsection{Data quality of local and AIGC-enhanced datasets} We adopt  $\Vert \nabla F_{k,L}(\boldsymbol{w})-\nabla F (\boldsymbol{w})\Vert \leq \lambda_{k}$ for any $\boldsymbol{w}$ to characterize the data quality of client $k \in \mathcal{K}$, where $\lambda_{k}$ is defined as the upper bound of the gradient difference between the local loss function and the global loss function \cite{20DPkang}, with larger $\lambda_{k}$ being poorer data quality. Specifically, $\lambda_{k}$ can be estimated by the widely-adopted average earth mover's distance (EMD), which measures the data distribution heterogeneity (e.g., non-IID degree) among clients \cite{2018non}, i.e.,
\begin{eqnarray}
\label{lambda_local}
\notag
& &\Vert \nabla F_{k,L}(\boldsymbol{w})-\nabla F (\boldsymbol{w})\Vert\\
\notag
&=&\Vert \sum_{i=1}^{Y} [p^{k}(y=i)-p(y=i)] \nabla \mathbb{E}_{x|y=i}\ell_{i}(x,\boldsymbol{w}) \Vert\\
\notag
&\leq& \sum_{i=1}^{Y} \Vert p^{k}(y=i)-p(y=i) \Vert \Vert \mathbb{E}_{x|y=i}\ell_{i}(x,\boldsymbol{w})\Vert\\
&\leq&EMD_{k}\cdot g_{data}=\lambda_{k},
\end{eqnarray}
where $EMD_{k}= \sum_{i=1}^{Y} \Vert p^{k}(y=i)-p(y=i)\Vert$ and $g_{data}=\max_{i\in \mathcal{Y}}\Vert \mathbb{E}_{x|y=i}\ell_{i}(x,\boldsymbol{w})\Vert$ for any $\boldsymbol{w}$.

According to the definition in \eqref{gen_loss}, the data quality of AIGC-enhanced dataset $\mathcal{D}_{k}^{A}$ can be characterized by $\lambda_{k,A}$, which is defined as the upper bound of the gradient difference between the loss function calculated based on the AIGC-enhanced dataset and the global loss function:
\begin{eqnarray}\label{lambda_ai}
\notag
& &\Vert \nabla F_{k,A}(\boldsymbol{w})-\nabla F (\boldsymbol{w})\Vert\\
\notag
&\leq& \sum_{i\in \mathcal{I}_{k}} p_{a}^{k}(y=i) \Vert\nabla \mathbb{E}_{x|y=i}\ell_{i,a}(\boldsymbol{w}; \boldsymbol{x})-\nabla \mathbb{E}_{x|y=i}\ell_{i}(\boldsymbol{w}; \boldsymbol{x})\Vert\\
&\leq&\frac{EMD_{k}}{2}\cdot g_{diff}= \lambda_{k,A},
\end{eqnarray}
where $g_{diff}=max_{i\in \mathcal{Y}}\Vert\nabla \mathbb{E}_{x|y=i}\ell_{i,a}(\boldsymbol{w}; \boldsymbol{x})-\nabla \mathbb{E}_{x|y=i}\ell_{i}(\boldsymbol{w}; \boldsymbol{x})\Vert$, for any $\boldsymbol{w}$, which characterizes the maximum gradient error between generated and real-world data samples among all classes. Intuitively, a lower $g_{diff}$ indicates that distribution of the generated data is more similar to real-world data, i.e., higher generated data quality.


Based on the above discussions, we construct the relationship of the data quality between local dataset and AIGC-enhanced dataset for client $k$ based on the upper bounds, i.e., 
\begin{eqnarray}
\lambda_{k, A}=\theta\lambda_{k},
\end{eqnarray}
where $\theta=\frac{g_{diff}}{2g_{data}}$, and it is a constant given a fixed FL task and AIGC model.
In reality, the data quality of AIGC-enhanced dataset still depends on performance of AIGC model. We utilize $\theta$ to characterize the difference of data distribution between generated data samples (e.g., $\boldsymbol{p}_{a}$) and real-world data samples (e.g., $\boldsymbol{p}$). A lower $\theta$ indicates higher performance of generated data samples by AIGC (approaching data distribution of real-world data). Intuitively, $\theta<1$ means that introducing generated data can improve the data quality for each client \footnote{In this paper, we propose a gradient-based data quality evaluation method for both real-world and generated data samples. This evaluation approach (e.g., $\theta$) is closely related to classes of dataset (e.g., the label space $\mathcal{Y}$), allowing us to establish the relationship between the quality of local dataset and AIGC-enhanced dataset, which enables the derivation of a rational strategy for the server. 
Consequently, our technique can be readily extended to NLP tasks \cite{torfi2020natural}, such as sentiment analysis \cite{dos2014deep} and topic categorization \cite{johnson2015semi}. For more complex NLP tasks, a robust data quality evaluation is crucial for the effective incorporation of generated data samples in model training, which will be considered in future work.}. Compared with the data quality of an IID dataset only involving generated samples $g_{diff}$, the AIGC-enhanced dataset $\mathcal{D}_{k}^{A}$ possess higher data quality since $\lambda_{k,A}<g_{diff}$. 

\noindent \textbf{Remark} 1: In practice, the server can publish a small-scale public dataset (SPD) with IID data distribution on the crowd-sourcing platform, and then the value of $\lambda_k$ can be estimated by a client $k$ through recording the gradient differences on the initial model between its local dataset and the SPD as a testing process of model training.
Besides, $g_{data}$ can be estimated by recording maximum gradient norm based on the initial model for each class in SPD. The generated data quality $g_{diff}$ can be determined by recording the maximum gradient error for each class between samples in SPD and generated data samples by AIGC service. Then, parameter $\theta$ can be determined, which can be regarded as the consensus of all clients and the server.



\subsubsection{Data-computing cost}
We denote the data-computing cost per unit data sample by $s_{k}$. As each client $k$ conducts $h$-step local updates with mini-batch size $B_{k}=\frac{d_{k}}{h}$ in one global iteration in our settings, the data-computing cost for local model training of client $k$ can be calculated as $d_{k}s_{k}$ in one global iteration. We assume that the stochastic gradient is unbiased and has a bounded variance over local dataset $\mathcal{D}_{k}$, i.e., for the mini-batch $\boldsymbol{\xi}_{k}\subset \mathcal{D}_{k}$, we have $\mathbb{E}\{\nabla F_{k,L}(\boldsymbol{w};\boldsymbol{\xi}_{k})\}=\nabla F_{k,L}(\boldsymbol{w})$ and $\Vert \nabla F_{k,L}(\boldsymbol{w};\boldsymbol{\xi}_{k})-\nabla F_{k,L}(\boldsymbol{w})\Vert^{2}\leq \frac{\psi^{2}}{B_{k}}$ \footnote{Different from some widely-used assumptions (e.g., $\mathbb{E}\{\nabla F_{k}(\boldsymbol{w};\boldsymbol{\xi}_{k})\}=\nabla F_{k}(\boldsymbol{w})$ and $\mathbb{E}\{\Vert \nabla F_{k}(\boldsymbol{w};\boldsymbol{\xi}_{k})-\nabla F_{k}(\boldsymbol{w})\Vert^{2}\} \leq \frac{\psi^{2}}{B_{k}}$)\cite{21DPmeetsFL}, \cite{2012optimalbatch}, we use the upper bound of $\Vert \nabla F_{k}(\boldsymbol{w};\boldsymbol{\xi}_{k})-\nabla F_{k}(\boldsymbol{w})\Vert^{2}$ in this paper.}. Here, $\psi$ is the sample variance, which is set to be a constant for all clients for simplicity. Intuitively, a larger $B_{k}$ (i.e., larger $d_{k}$) decreases the error of local gradient.

\subsection{Client's Utility Function}
To encourage clients' participation, we adopt a data quality-aware reward allocation mechanism. Given the uniform unit data reward benchmark $r$ published by the server, the final reward received by client $k$ is further discounted by its data quality in one global iteration, i.e., $rd_{k}(1-\frac{\lambda_{k}}{\lambda})$, where $\lambda$ is the largest non-IID degree tolerated by the server. Considering both the reward and data-computing cost, the utility of client $k$ in one global iteration when participating in FL with its local dataset $\mathcal{D}_{k}$ can be calculated as \footnote{Since the size of the uploaded model for each client is fixed, we neglect communication cost for simplicity.
}:
\begin{eqnarray}
U_{k}=rd_{k}(1-\frac{\lambda_{k}}{\lambda})- d_{k}s_{k}.
\end{eqnarray}

When client $k$ chooses to participate in FL with its AIGC-enhanced dataset with higher quality, it should further bear the cost of utilizing AIGC service for data synthesis \footnote{To generate AIGC-enhanced dataset, client $k$ needs to submit a custom order (data amount for different categories) to the AIGC service provider. In this paper, we assume AIGC service provider is a trusted third-party organization that charges for its services.}. Considering the communication and computing overheads and information hijacking risks (privacy-preserving overhead) when adopting AIGC service, we embrace a charge-per-utilization framework, wherein clients utilizing once-generated data are obligated to remit payment to AIGC service provider. We use $s_{AI}$ to denote the unit cost or payment of a client for one generated data sample (which can also factorize the overheads mentioned above). Hence, the payment for generated data samples in one global iteration is calculated as $d_{k}p_{k}^{+}s_{AI}$, where $d_{k}p_{k}^{+}=d_{k}\sum_{i\in\mathcal{I}_{k}}p^{k}_{a}(y=i)$, $\mathcal{I}_{k}=\{i\vert i\in \mathcal{Y}, p_{a}^{k}(y=i)>0 \}$. The utility of client $k$ in one global iteration when participating in FL with AIGC-enhanced dataset can be calculated as:
\begin{eqnarray}
U_{k}=rd_{k}(1-\frac{\lambda_{k, A}}{\lambda})- d_{k}s_{k}-d_{k}p_{k}^{+}s_{AI}.
\end{eqnarray}



\subsection{Server's Cost}
Aiming at obtaining a high-usaged model with low payments, the server strikes a trade-off between FL training performance (e.g., model accuracy loss) and payments to the participating clients, by minimizing the server's cost:
\begin{eqnarray}\label{server_cost}
\mathcal{C}_{server}=\gamma_{1}M_{loss}+ \gamma_{2}R_{total},
\end{eqnarray}
where $\gamma_{1}$ and $\gamma_{2}$ balances the trade-off between model accuracy loss $M_{loss}$ and total payment $R_{total}$ to the clients.


However, the calculation of model accuracy loss in AIGC-empowered FL faces great challenges, i.e., the heterogeneity of data quality and datasizes among clients, the mixture of different kinds of datasets (e.g., local or AIGC-enhanced dataset) used for local model updates due to clients' different choices. To address this issue, we  derive the convergence bound of the difference between the training loss in such AIGC-empowered FL scenario and optimal loss value over the local datasets of all clients. For ease of presentation, we define $F_{k}(\boldsymbol{w})=F_{k,L}(\boldsymbol{w})$ for local dataset and  $F_{k}(\boldsymbol{w})=F_{k,A}(\boldsymbol{w})$ for AIGC-enhanced dataset. For the purpose of convergence analysis, we first introduce the following widely-used assumptions on $F_{k}$ \cite{20DPkang}.
\begin{assume}\label{assum1} The training loss function satisfies the following properties:
	\begin{itemize}
		\item \textit{1)} $F_{k}(\boldsymbol{w})$ is $\beta$-Lipschitz, i.e., 	$\Vert  F_{k}(\boldsymbol{w})- F_{k}(\boldsymbol{w}') \Vert \leq \beta \Vert\boldsymbol{w}-\boldsymbol{w}'\Vert$ for any $ \boldsymbol{w}$ and $\boldsymbol{w}'$;
		\item \textit{2)} $F_{k}(\boldsymbol{w})$ is $\rho$-Lipschitz smooth, i.e., 	$\Vert \nabla F_{k}(\boldsymbol{w})- \nabla F_{k}(\boldsymbol{w}') \Vert \leq \rho \Vert\boldsymbol{w}-\boldsymbol{w}'\Vert$ for any $ \boldsymbol{w}$ and $\boldsymbol{w}'$;
		\item \textit{3)} $F_{k}(\boldsymbol{w})$ satisfies $\mu$-strong convex. Thus, $F_{k}(\boldsymbol{w})$ also satisfies Polyak-Lojasiewicz condition with parameter $\mu$, i.e., $F_{k}(\boldsymbol{w})-F_{k}(\boldsymbol{w}^{\ast})\leq \frac{1}{2\mu} \Vert\nabla F_{k}(\boldsymbol{w}) \Vert$ for any $\boldsymbol{w}$. Here, $\boldsymbol{w}^{\ast}$ is the optimal solution.
	\end{itemize}
\end{assume}
Due to limited space, we summarize the convergence analysis result as follows:
\begin{thm}\label{thm2}
	\textit{
		Based on Assumption \ref{assum1}, given the number of global iterations $T$, the set of participating clients is denoted as $\mathcal{N} \subseteq \mathcal{K}$ , i.e., $\mathcal{N}=\mathcal{N}^{L}\cup \mathcal{N}^{A}$. Here, $\mathcal{N}^{L}$ and $\mathcal{N}^{A}$ denote the set of participating clients with local datasets and AIGC-enhanced datasets, respectively. By setting $\eta< \frac{1}{\rho}$, the convergence upper bound is given as:	
		\begin{eqnarray}\label{coverge_1}
		F(\boldsymbol{w}(T, Th))-  F(\boldsymbol{w}^{\ast})\leq\phi^{hT}\Theta+(1-\phi^{hT})\kappa_{1} \Lambda(\mathcal{N}),
		\end{eqnarray}
		where
		\begin{small}
			\begin{eqnarray}
			\notag
			\Lambda(\mathcal{N})\!\!=\!\!\left[\sum_{k\in\mathcal{N}^{L}}\!\!\nu_{k}(\mathcal{N})\!\!\left(\frac{\psi}{\sqrt{B_{k}}}+\!\!\lambda_{k}\right)\!\!+\!\!\sum_{k\in\mathcal{N}^{A}}\!\!\nu_{k}(\mathcal{N})\!\!\left(\frac{\psi}{\sqrt{B_{k}}}+\lambda_{k,A}\!\!\right)\!\right].\!\!\!\!\!\!\!\!\!\!\!\!\!\!\!\!\\
			\end{eqnarray}
		\end{small}
		Here, $\Theta=F(\boldsymbol{w}(0, 0))-F(\boldsymbol{w^{\ast}})$, $\phi=1-2\mu\eta+2\mu\rho\eta^2$, $\kappa_{1}=\frac{\beta[(\eta\rho+1)^{h}-1]}{\rho(1-\phi^{h})}$, and  $\nu_{k}(\mathcal{N})=\frac{B_{k}}{\sum_{k\in \mathcal{N}}B_{k}}$. Here, $\boldsymbol{w}(0, 0)$ is the initial model parameters.}
\end{thm}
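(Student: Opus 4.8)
The plan is to analyze a virtual averaged model sequence and to establish a two-level recursion: a per-local-step contraction that composes into a per-global-iteration inequality, which is then unrolled across the $T$ iterations. First I would define the virtual aggregated model $\bar{\boldsymbol{w}}(t,t_{l})=\sum_{k\in\mathcal{N}}\nu_{k}(\mathcal{N})\boldsymbol{w}_{k}(t,t_{l})$. Because $d_{k}=hB_{k}$ forces $\nu_{k}(\mathcal{N})=d_{k}/d(\mathcal{N})$, this virtual model coincides with the true global model at every aggregation step (where all $\boldsymbol{w}_{k}$ are reset to a common value), so controlling $F(\bar{\boldsymbol{w}})$ suffices to control $F(\boldsymbol{w}(T,Th))$. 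Its one-step evolution is $\bar{\boldsymbol{w}}(t,t_{l})=\bar{\boldsymbol{w}}(t,t_{l}-1)-\eta\sum_{k\in\mathcal{N}}\nu_{k}(\mathcal{N})\nabla F_{k}(\boldsymbol{w}_{k}(t,t_{l}-1);\boldsymbol{\xi}_{k}^{t,t_{l}})$. The target one-global-iteration recursion should read $F(\boldsymbol{w}^{t})-F(\boldsymbol{w}^{\ast})\leq\phi^{h}[F(\boldsymbol{w}^{t-1})-F(\boldsymbol{w}^{\ast})]+\frac{\beta[(\eta\rho+1)^{h}-1]}{\rho}\Lambda(\mathcal{N})$. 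Unrolling this over $t=1,\dots,T$ and summing the geometric series $\sum_{t=0}^{T-1}\phi^{ht}=\frac{1-\phi^{hT}}{1-\phi^{h}}$ immediately yields the claimed bound with $\kappa_{1}=\frac{\beta[(\eta\rho+1)^{h}-1]}{\rho(1-\phi^{h})}$. I note that, since the footnote's variance hypothesis gives a deterministic bound on $\Vert\nabla F_{k,L}(\boldsymbol{w};\boldsymbol{\xi}_{k})-\nabla F_{k,L}(\boldsymbol{w})\Vert^{2}$ rather than a bound in expectation, the whole argument can be carried out pathwise with $\psi/\sqrt{B_{k}}=\sqrt{\psi^{2}/B_{k}}$ treated as a uniform deviation bound.

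Next I would derive the per-local-step descent inequality. Applying the $\rho$-Lipschitz smoothness of $F$ between $\bar{\boldsymbol{w}}(t,t_{l}-1)$ and $\bar{\boldsymbol{w}}(t,t_{l})$ and substituting the update produces a descent inequality whose cross term is $-\eta\langle\nabla F(\bar{\boldsymbol{w}}(t,t_{l}-1)),\sum_{k}\nu_{k}\nabla F_{k}(\boldsymbol{w}_{k};\boldsymbol{\xi}_{k})\rangle$. I would rewrite $\sum_{k}\nu_{k}\nabla F_{k}(\boldsymbol{w}_{k};\boldsymbol{\xi}_{k})=\nabla F(\bar{\boldsymbol{w}})+\sum_{k}\nu_{k}[\nabla F_{k}(\boldsymbol{w}_{k};\boldsymbol{\xi}_{k})-\nabla F(\bar{\boldsymbol{w}})]$ and bound the residual by Cauchy--Schwarz, using $\Vert\nabla F(\bar{\boldsymbol{w}})\Vert\leq\beta$ (a consequence of the $\beta$-Lipschitz property) to extract the $\beta$ prefactor. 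By the triangle inequality each residual $\Vert\nabla F_{k}(\boldsymbol{w}_{k};\boldsymbol{\xi}_{k})-\nabla F(\bar{\boldsymbol{w}})\Vert$ splits into a stochastic-noise term bounded by $\psi/\sqrt{B_{k}}$, a data-quality divergence term bounded by $\lambda_{k}$ for $k\in\mathcal{N}^{L}$ and by $\lambda_{k,A}$ for $k\in\mathcal{N}^{A}$ (invoking \eqref{lambda_local} and \eqref{lambda_ai}), and a drift term $\rho\Vert\boldsymbol{w}_{k}-\bar{\boldsymbol{w}}\Vert$ handled separately. Applying the Polyak--Lojasiewicz inequality $\Vert\nabla F(\bar{\boldsymbol{w}})\Vert^{2}\geq 2\mu(F(\bar{\boldsymbol{w}})-F(\boldsymbol{w}^{\ast}))$ both to the leading $-\eta\Vert\nabla F\Vert^{2}$ term and to the quadratic $\frac{\rho\eta^{2}}{2}\Vert\cdot\Vert^{2}$ term produces exactly the contraction factor $\phi=1-2\mu\eta+2\mu\rho\eta^{2}$ on the gap, leaving an additive error proportional to $\eta\beta$ times the noise, divergence, and drift contributions; the condition $\eta<\frac1\rho$ guarantees $\phi<1$.

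Finally I would control the client drift. Since all clients begin a global iteration from the common point $\boldsymbol{w}^{t-1}$, I would write the recursion for $\Vert\boldsymbol{w}_{k}(t,t_{l})-\bar{\boldsymbol{w}}(t,t_{l})\Vert$ and bound the gradient differences via $\rho$-smoothness, obtaining geometric growth of the form $(1+\eta\rho)^{j}$ after $j$ local steps, where the per-step injection is again governed by the terms aggregated in $\Lambda(\mathcal{N})$. Summing the per-step errors across the $h$ steps of one global iteration, the drift amplification telescopes into the factor $(\eta\rho+1)^{h}-1$, and collecting the $\eta\beta$ prefactor together with the $1/\rho$ from the geometric sum gives the additive constant $\frac{\beta[(\eta\rho+1)^{h}-1]}{\rho}\Lambda(\mathcal{N})$, completing the per-iteration recursion. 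The main obstacle is precisely this drift analysis: tracking how the heterogeneous local trajectories --- some trained on local data with divergence $\lambda_{k}$, some on AIGC-enhanced data with divergence $\lambda_{k,A}$ --- accumulate over $h$ unsynchronized steps, and showing that their weighted aggregate drift amplifies by exactly $(1+\eta\rho)^{h}$ while the divergence and noise terms collapse into a single $\Lambda(\mathcal{N})$ without cross terms that would spoil the clean geometric-sum structure. Matching the precise constants $\beta/\rho$ and $1-\phi^{h}$ appearing in $\kappa_{1}$ will require careful bookkeeping of these nested geometric series.
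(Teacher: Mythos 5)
Your skeleton is the right one and contains every ingredient the stated constants demand: a virtual weighted-average sequence that coincides with $\boldsymbol{w}(t,t_l)$ at aggregation steps, a client-drift recursion of the form $a_j\leq(1+\eta\rho)a_{j-1}+\eta\chi_k$ whose solution gives the $\frac{1}{\rho}[(\eta\rho+1)^{h}-1]$ factor, a per-round contraction $\phi^{h}$ from $\rho$-smoothness plus the PL condition under $\eta<\frac{1}{\rho}$, and the geometric unrolling that converts the per-round additive constant $\frac{\beta[(\eta\rho+1)^{h}-1]}{\rho}\Lambda(\mathcal{N})$ into $(1-\phi^{hT})\kappa_{1}\Lambda(\mathcal{N})$. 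You also correctly isolate the only genuinely new ingredient relative to standard FL convergence proofs --- substituting the bound \eqref{lambda_ai} on $\Vert\nabla F_{k,A}-\nabla F\Vert$ for clients in $\mathcal{N}^{A}$ so that $\lambda_{k,A}$ replaces $\lambda_{k}$ inside $\Lambda(\mathcal{N})$ --- and you rightly observe that the paper's pathwise variance assumption lets the whole argument avoid expectations.

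Where you depart from the argument the constants point to is in how the drift enters the function-value recursion. You inject it at every local step through the cross term $-\eta\langle\nabla F(\bar{\boldsymbol{w}}),e\rangle$ with $e=\sum_{k}\nu_{k}\nabla F_{k}(\boldsymbol{w}_{k};\boldsymbol{\xi}_{k})-\nabla F(\bar{\boldsymbol{w}})$. The cleaner organization is to compare, over one global round, the averaged FL trajectory against an auxiliary trajectory $\boldsymbol{v}$ running \emph{exact} gradient descent on $F$ from the common starting point $\boldsymbol{w}^{t-1}$: the contraction $F(\boldsymbol{v})-F(\boldsymbol{w}^{\ast})\leq\phi^{h}[F(\boldsymbol{w}^{t-1})-F(\boldsymbol{w}^{\ast})]$ then carries no error terms, the drift recursion bounds $\Vert\bar{\boldsymbol{w}}-\boldsymbol{v}\Vert\leq\frac{(1+\eta\rho)^{h}-1}{\rho}\Lambda(\mathcal{N})$ at the end of the round, and the $\beta$-Lipschitz continuity of $F$ applied to \emph{function values} (Assumption 1, part 1) converts that distance into exactly the additive constant. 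Your route buys a single unified recursion, but it leaves the quadratic term $\frac{\rho\eta^{2}}{2}\Vert e\Vert^{2}$ of the descent lemma unaccounted for: linearizing it requires $\Vert e\Vert\leq 2\beta$ and $\rho\eta<1$, which duplicates the $\eta\beta\Vert e\Vert$ contribution of the cross term and inflates the additive constant by a factor you cannot easily absorb into $\kappa_{1}$ (while a Young-inequality split instead would make the bound quadratic in $\Lambda$). You also need $\phi^{h-j}\leq 1$ to collapse $\sum_{j}\phi^{h-j}\eta(1+\eta\rho)^{j-1}$ into $\frac{(1+\eta\rho)^{h}-1}{\rho}$. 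So as written your outline yields a bound of the correct form but with a looser constant; switching the drift accounting to the end-of-round trajectory comparison repairs this, and everything else in your plan is sound.
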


The proof is given in Subsection A in the separate supplementary file. 

\noindent\textbf{Remark} 2: $\Lambda(\mathcal{N})$ represents the error of training gradient caused by clients $\mathcal{N}$. The upper bound in \eqref{coverge_1} converges from $\Theta$ (e.g., when $T=0$) to $\kappa_{1}\Lambda(\mathcal{N})$ (e.g., when $T \rightarrow \infty$) with the increase of the number of global iterations $T$.
Hence, we assume that $\Theta>\kappa_{1}\Lambda(\mathcal{N})$ in mathematics due to the fact that FL model training with multiple iterations improves the model performance and leads to a lower convergence upper bound.
Furthermore, the value of $\Lambda(\mathcal{N})$ decreases as the amount of data from each participating client increases and as the value of $\lambda_{k}$ (or $\lambda_{k, A}$) from each participating client.
This indicates that the server can enhance the final model performance by selecting clients with substantial data volumes and dataset distributions closely approximating the global data distribution (indicated by lower value of $\lambda_{k}$ or $\lambda_{k, A}$).

Based on Theorem \ref{thm2}, we use the right-hand side of \eqref{coverge_1} to represent the model accuracy loss:
\begin{eqnarray}\label{M_loss}
\notag
M_{loss}
\!\!\!&=&\!\!\!\phi^{hT}\Theta + (1-\phi^{hT})\kappa_{1}\left[\sum_{k\in\mathcal{N}}\frac{\psi\sqrt{d_{k}}}{\sqrt{h}d(\mathcal{N})}\right.\\
\!\!\!& &\!\!\!+
\left.\sum_{k\in\mathcal{N}^{L}}\frac{d_{k}}{d(\mathcal{N})}\lambda_{k}+\sum_{k\in\mathcal{N}^{A}}\frac{d_{k}}{d(\mathcal{N})}\lambda_{k,A}\right],
\end{eqnarray}
where $d(\mathcal{N})=\sum_{k\in \mathcal{N}}{d}_{k}$. The equation \eqref{M_loss} originates from $d_{k}=hB_{k}$.
For simplicity, the number of steps for local updates $h$ is set as a constant in this paper.

Prior to commencing with FL model training, the server incentivizes a subset of clients $\mathcal{N}=\mathcal{N}^{L}\cup \mathcal{N}^{A}$ with a uniform unit data reward benchmark $r$. Hence, the total payment to clients after one global iteration is calculated as: $R_{pay}=\sum_{k\in \mathcal{N}^{L}}rd_{k}(1-\frac{\lambda_{k}}{\lambda})+\sum_{k\in \mathcal{N}^{A}}rd_{k}(1-\frac{\lambda_{k,A}}{\lambda})$. As a result, the total payment after the overall FL training process is 
$R_{total}=TR_{pay}$.

\subsection{Data Quality-Aware Incentive Mechanism}
To minimize the server's cost defined in \eqref{server_cost}, we introduce a crowd-sourcing platform for AIGC-empowered FL scenario and devise a data quality-aware incentive mechanism for client recruitment. Specifically, the server first publishes its training task including the relevant small-scale public dataset to the crowd-sourcing platform. Based on acquired common information, the server publishes the number of global iterations $T$ and the uniform unit data reward benchmark $r$ to incentivize clients' participation. In response to server's strategy $(T,r)$, each client $k$ should determine its strategy \footnote{In this paper, we focus on incentive mechanism design for AIGC-empowered FL scenario. Consequently, we assume that each client is selfish, but has no malicious intention (e.g., inject noise). As for model security issue, the server can adopt a lightweight method to check the quality of models uploaded by the clients through a validation dataset to identify potential malicious behaviors \cite{zhang2022robust}.}, i.e., whether to participate in FL and which dataset (e.g., local or AIGC-enhanced dataset) to be utilized.

However, each client may be reluctant to disclose its private attributes (e.g., data quality and data-computing cost) to the server prior to joining in FL in practices, which poses great challenge for the incentive strategy making of the server due to information asymmetry. In light of this, we study the optimal incentive strategy for the server under different information settings as follows:
\begin{itemize}
	\item Complete information scenario: 
	The server knows the multi-dimensional attributes $(d_{k},\lambda_{k},s_{k})$ of each client $k$, which can also serve as the benchmark and provide insights for more practical scenario of incomplete information.
	
	\item Incomplete information scenario: 
	The server does not know clients' actual data quality and unit data-computing cost ($\lambda_{k}$, $s_{k}$) for incentive mechanism decision making beforehand, but it knows the value of $d_{k}$ and the probability distributions of $\lambda_{k}$ and $s_{k}$ for each client $k$	\footnote{Following most existing studies \cite{ding2020optimal, ding2020incentive}, we assume that the server 
	is aware of the contributed datasize of each participating clients for global weighted aggregation, while is not able to reach any other information about clients' local datasets.
		Besides, for the decision making, the server can obtain distribution information of data quality and data-computing cost through market research \cite{ding2020optimal}. Although the clients do not disclose its private attribute of data quality before joining the FL in the incomplete information scenario, the server can still determine the payments based on the uploaded model gradient parameters (for computing clients' data quality) during the first round of FL training in practice.}. 


\end{itemize}


\section{Complete information scenario}
In this section, we first study the clients' behaviors, and then derive the optimal strategy for the server to minimize the server's cost in complete information scenario.

\subsection{Client's Behavior}
The strategy of client $k$ can be defined by $\varphi_{k}(\varphi_{k,p}, \varphi_{k,a})$, where $\varphi_{k,p}$ and $\varphi_{k,a}$ indicate whether to participate in FL and whether to use generated data samples for local model updates, respectively. Hence, we define $(\varphi_{k,p}, \varphi_{k,a}) \in \{(0,0),(1,0),(1,1)\}$. As rational clients always adopt a strategy to maximize their utilities, each client $k$ may choose to not participate in FL (i.e., $\varphi_{k}(0,0)$) when its largest utility is $U_{k}(\varphi_{k}(0,0))=0$. For client $k$ which chooses to participate in FL, it conducts local model updates with local dataset when $U_{k}(\varphi_{k}(1,0)) > U_{k}(\varphi_{k}(1,1))$, and uses AIGC-enhanced dataset when $U_{k}(\varphi_{k}(1,1)) \geq U_{k}(\varphi_{k}(1,0))$. Note that for simplicity, in this study we consider the batch data generation mode such that if a client decides to use its original local dataset for FL, it will not generate new dataset; otherwise, it will adopt the AIGC service to generate a higher-quality dataset before joining FL. We will further consider the  more complicated  iterative data generation mode in future study, such that a client can gradually and iteratively generate data samples (which is time-consuming and would slow down the FL operations) to improve its dataset during FL process.

Given the values of client's attributes $\langle d_{k},\lambda_{k},s_{k}\rangle$ and the unit data reward benchmark $r$, we derive the following conclusion based on clients' rational behaviors:





\begin{thm}\label{thm3}
	\textit{
		All clients can be divided into two types: \textit{type-1} and \textit{type-2} clients, whose behaviors can be revealed by three indicators: $\zeta_{1}(k)=\frac{\lambda s_{k}}{\lambda-\lambda_{k}}$, $\zeta_{2}(k)=\frac{\lambda s_{k}+\lambda \delta  \lambda_{k}}{\lambda-\theta \lambda_{k}}$, and $\zeta_{3}=\frac{\lambda \delta }{1-\theta}$, where $\delta=\frac{s_{AI}}{2g_{data}}$. Specifically, the set of \textit{type-1} clients can be represented by $\mathcal{T}_{1}=\{k\in \mathcal{K}| \zeta_{1}(k)\leq \zeta_{3}\}$ and the set of \textit{type-2} clients can be denoted as $\mathcal{T}_{2}=\{k\in \mathcal{K}| \zeta_{3}<\zeta_{1}(k)\}$\footnote{Note that, we have $\zeta_{1}(k)<\zeta_{2}(k)<\zeta_{3}$ and  $\zeta_{3}<\zeta_{2}(k)<\zeta_{1}(k)$ for \textit{type-1} and \textit{type-2} clients, respectively. When $\zeta_{1}(k)=\zeta_{3}$, we must have $\zeta_{1}(k)=\zeta_{2}(k)=\zeta_{3}$.}.}
	\textit{
		Given the uniform unit data reward benchmark $r$, the strategy of \textit{type-1} clients can be represented as:
		\begin{eqnarray}\label{client_type1}
		\varphi_{k}=\left\{
		\begin{aligned}
		&\varphi_{k}(0,0), \qquad \mbox{if} \ r<\zeta_{1}(k),\\
		&\varphi_{k}(1,0), \qquad    \mbox{if} \ \zeta_{1}(k)\leq r< \zeta_{3},\\
		&\varphi_{k}(1,1), \qquad  \mbox{if} \ \zeta_{3}\leq r.
		\end{aligned}
		\right.
		\end{eqnarray}
		Similarly, the strategy of \textit{type-2} clients can be represented as:
		\begin{eqnarray}\label{client_type2}
		\varphi_{k}=\left\{
		\begin{aligned}
		&\varphi_{k}(0,0), \qquad \mbox{if} \ r<\zeta_{2}(k),\\
		&\varphi_{k}(1,1), \qquad    \mbox{if} \ \zeta_{2}(k)\leq r.\\
		\end{aligned}
		\right.
		\end{eqnarray}}
\end{thm}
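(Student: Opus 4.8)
The plan is to reduce the statement to comparing three affine functions of the reward benchmark $r$ and reading off each client's best response from their relative ordering. First I would write the three candidate utilities explicitly as functions of $r$: the no-participation utility is identically $0$, the local-dataset utility is $L_{10}(r):=U_{k}(\varphi_{k}(1,0))=d_{k}[r(1-\lambda_{k}/\lambda)-s_{k}]$, and the AIGC utility is $L_{11}(r):=U_{k}(\varphi_{k}(1,1))=d_{k}[r(1-\theta\lambda_{k}/\lambda)-s_{k}-p_{k}^{+}s_{AI}]$, using $\lambda_{k,A}=\theta\lambda_{k}$. The one genuinely non-routine ingredient is the identity $p_{k}^{+}=EMD_{k}/2$: since $\sum_{i}p^{k}(y=i)=\sum_{i}p(y=i)=1$, the positive deviations $\sum_{i\in\mathcal{I}_{k}}[p(y=i)-p^{k}(y=i)]$ must equal half of $EMD_{k}=\sum_{i}\lvert p^{k}(y=i)-p(y=i)\rvert$. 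Combined with $EMD_{k}=\lambda_{k}/g_{data}$ from \eqref{lambda_local}, this yields $p_{k}^{+}s_{AI}=\lambda_{k}\delta$ with $\delta=s_{AI}/(2g_{data})$, so $L_{11}(r)$ takes the clean form $d_{k}[r(1-\theta\lambda_{k}/\lambda)-s_{k}-\lambda_{k}\delta]$.

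Next I would compute the three pairwise thresholds. Solving $L_{10}(r)=0$, $L_{11}(r)=0$, and $L_{11}(r)=L_{10}(r)$ gives exactly $\zeta_{1}(k)$, $\zeta_{2}(k)$, and $\zeta_{3}$, respectively; in the last equation the factor $\lambda_{k}$ cancels (valid for $\lambda_{k}>0$), which is precisely why $\zeta_{3}$ is independent of $k$. Under the natural monotonicity assumptions $\lambda_{k}<\lambda$ and $\theta<1$, both $L_{10}$ and $L_{11}$ have positive slope and their difference $L_{11}-L_{10}$ is strictly increasing with unique root $\zeta_{3}$. The crucial structural fact I would record is therefore single crossing: $L_{10}$ and $L_{11}$ meet exactly once, at $r=\zeta_{3}$, with $L_{11}<L_{10}$ for $r<\zeta_{3}$ and $L_{11}>L_{10}$ for $r>\zeta_{3}$.

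I would then set $v:=L_{10}(\zeta_{3})=L_{11}(\zeta_{3})$ and use its sign to drive the dichotomy. Because $L_{10}$ is increasing with root $\zeta_{1}$, the type-1 condition $\zeta_{1}\le\zeta_{3}$ is equivalent to $v\ge 0$, and type-2 $\zeta_{3}<\zeta_{1}$ to $v<0$. From $v\ge 0$ and monotonicity of $L_{11}$ I get $\zeta_{2}\le\zeta_{3}$; evaluating at $r=\zeta_{1}<\zeta_{3}$ gives $L_{11}(\zeta_{1})<L_{10}(\zeta_{1})=0$, hence $\zeta_{1}<\zeta_{2}$, establishing $\zeta_{1}<\zeta_{2}<\zeta_{3}$ (collapsing to equality exactly when $\zeta_{1}=\zeta_{3}$). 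Symmetrically, $v<0$ yields $\zeta_{3}<\zeta_{2}<\zeta_{1}$, matching the ordering asserted in the footnote. On each sub-interval of $r$ the best response is whichever of $\{0,L_{10}(r),L_{11}(r)\}$ is largest, which I read directly off the crossing picture: below the first positive threshold both participation utilities are negative (choose $\varphi_{k}(0,0)$); once a utility is nonnegative, the larger one is $L_{10}$ while $r<\zeta_{3}$ and $L_{11}$ once $r\ge\zeta_{3}$, producing \eqref{client_type1} and \eqref{client_type2}.

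The main obstacle I anticipate is not any single computation but bookkeeping the boundary and tie cases consistently with the weak/strict inequalities in the statement, in particular the tie-break $U_{k}(\varphi_{k}(1,1))\ge U_{k}(\varphi_{k}(1,0))$ favoring AIGC at $r=\zeta_{3}$ and the degenerate coincidence $\zeta_{1}=\zeta_{2}=\zeta_{3}$, together with verifying the $p_{k}^{+}=EMD_{k}/2$ identity rigorously. Everything else follows from the monotonicity and single-crossing of two affine functions.
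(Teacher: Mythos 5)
Your proposal is correct and takes essentially the same route as the paper's proof: write out the three affine utilities, verify via $\sum_i p^k(y=i)=\sum_i p(y=i)=1$ that $p_k^{+}=EMD_k/2$ so that $p_k^{+}s_{AI}=\delta\lambda_k$, identify $\zeta_1,\zeta_2,\zeta_3$ as the pairwise crossing points, and read off the best response by case analysis on $r$. Your single-crossing framing (driving the type dichotomy and the footnote's orderings from the sign of the common value at $r=\zeta_3$) is a clean way to organize the same computation, and your boundary/tie-break bookkeeping matches the weak and strict inequalities in the statement.
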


The proof is given in Subsection B in the separate supplementary file. 

\noindent \textbf{Remark} 3: Fig. \ref{clientbehave} summarizes the rational behavior for each type of clients. Intuitively, \textit{Type-1} clients possess local datasets with higher data quality (e.g., lower $\lambda_{k}$) and lower data-computing cost. While \textit{type-2} clients can not directly benefit from FL process with their local datasets due to lower data quality. Given a larger data reward benchmark $r$ published by the server, \textit{type-2} clients reap the benefits of participating in FL by using AIGC-enhanced dataset for local model training.

\begin{figure}[t]
	\centering
	\includegraphics[width=0.95\linewidth]{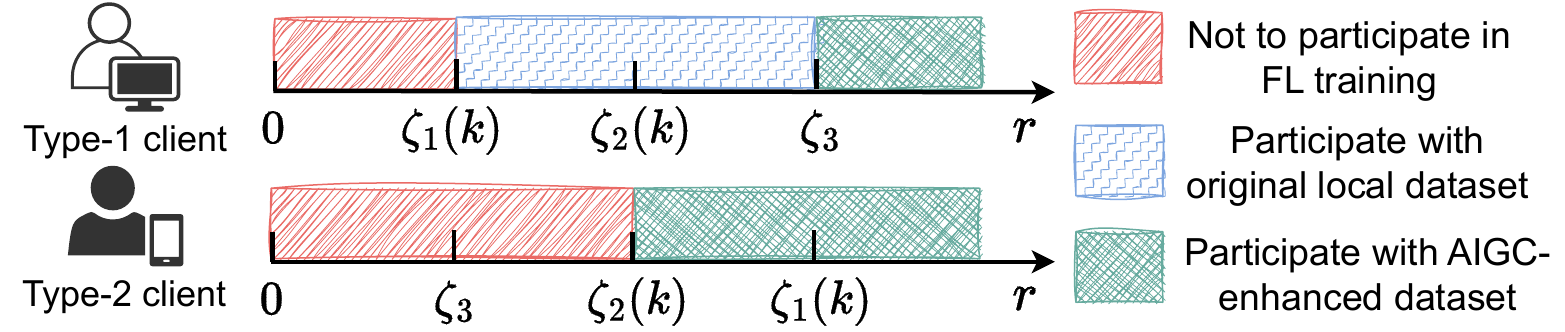}
	\vspace{-5pt}
	\caption{The rational behavior of each client $k$.}
	\label{clientbehave}
	\vspace{-10pt}
\end{figure}


\begin{figure*}[htbp]
	\centering
	\includegraphics[width=0.86\linewidth]{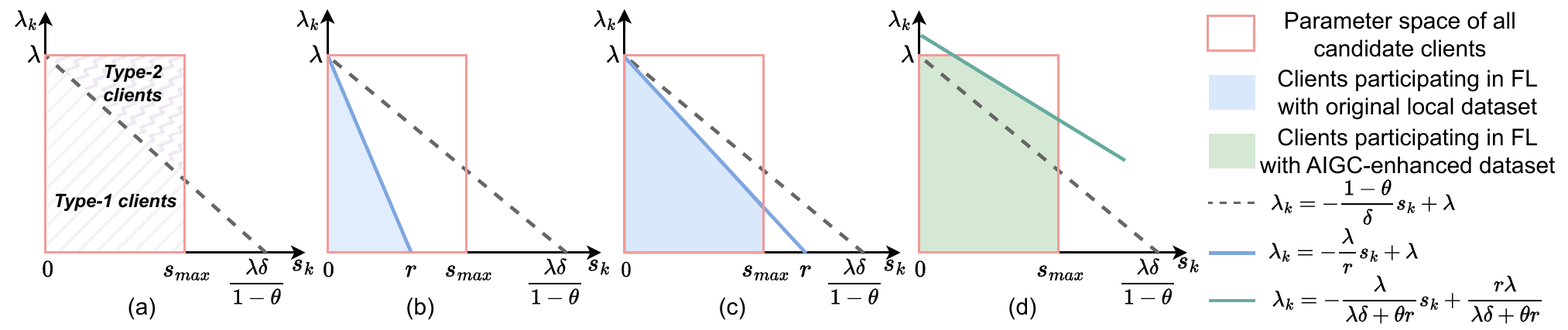}
	\vspace{-10pt}
	\caption{Parameter space of $(s_{k},\lambda_{k})$ for clients.}
	\vspace{-8pt}
	\label{figincomplte}
\end{figure*}


\subsection{Server's Optimal Strategy in Complete Information Scenario}
Considering clients' rational behaviors, the server aims to minimize its cost in \eqref{server_cost} by finding the optimal strategy:
\begin{align}
&\mathop{Minimize}\limits_{T, r} \  \mathcal{C}_{cost},\label{Scost_problem}
\\ &\mbox{s.t.}\quad T\in \{1,2,...\}, \tag{\ref{Scost_problem}{a}} \label{Scost_problema}
\\ &  \quad \quad \,  r \in (0,+\infty) \tag{\ref{Scost_problem}{b}} \label{Scost_problemb}.
\end{align}
It is challenging to directly solve problem \eqref{Scost_problem} due to the mutual coupling of the number of global iterations $T$ and the unit data reward benchmark $r$. 
In what follows, we first derive some useful insights for $r$ under any fixed $T$, based on which, we derive the optimal server's strategy.

According to clients' rational behaviors, a uniform unit data reward benchmark $r$ corresponds to a unique participating client set $\mathcal{N}_{r}=\mathcal{N}_{r}^{L}\cup \mathcal{N}_{r}^{A}$. We denote the training error during FL process caused by clients $\mathcal{N}_r$ as $\Lambda(\mathcal{N}_{r})=\sum_{k\in \mathcal{N}_{r}^{L}}\nu_{k}(\mathcal{N}_{r})\chi_{k}+\sum_{k\in \mathcal{N}_{r}^{A}}\nu_{k}(\mathcal{N}_{r})\chi_{k, A}$, where $\chi_{k}=(\frac{\psi}{\sqrt{B_{k}}}+\lambda_{k})$ and $\chi_{k,A}=(\frac{\psi}{\sqrt{B_{k}}}+\lambda_{k, A})$.
Hence, given a fixed number of global iterations $T$, the server's cost can be represented as:
\begin{eqnarray}\label{T_server_cost}
\notag
\mathcal{C}_{server}(T, r)=\!\!\!\!\!\!& &\!\!\!\!\!\!\gamma_{1}
\phi^{hT}\Theta\\
\!\!\!\!\!\!& &\!\!\!\!\!\!+(1-\phi^{hT})\gamma_{1}\kappa_{1}\Lambda(\mathcal{N}_{r})+\gamma_{2}TR(\mathcal{N}_{r}),
\end{eqnarray}
where $R(\mathcal{N}_{r})=\sum_{k\in \mathcal{N}_{r}^{L}}d_{k}(1-\frac{\lambda_{k}}{\lambda})+\sum_{k\in \mathcal{N}_{r}^{A}}d_{k}(1-\frac{\lambda_{k,A}}{\lambda})$, which indicates the payment to clients corresponding to unit data reward benchmark $r$ in one global iteration.
We next reveal the property of the optimal unit data reward benchmark $r_{o}$:

\begin{prop}\label{pro2}
	\textit{
		In complete information scenario, for any fixed number of iterations $T$, the optimal unit data reward benchmark $r_{o}$ to minimize the server's cost in \eqref{T_server_cost} must belong to the set of all clients' indicators $\boldsymbol{\zeta}$, i.e.,
		\begin{eqnarray}\label{zeta_set}
		r_{o}\in \boldsymbol{\zeta}=\bigcup_{k\in\mathcal{K}}\{\zeta_{1}(k), \zeta_{2}(k), \zeta_{3}\}.
		\end{eqnarray}}
\end{prop}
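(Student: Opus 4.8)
The plan is to exploit the fact that, for a fixed number of global iterations $T$, the cost $\mathcal{C}_{server}(T,r)$ in \eqref{T_server_cost} depends on the continuous variable $r$ only through two channels: the participating set $\mathcal{N}_r=\mathcal{N}_r^{L}\cup\mathcal{N}_r^{A}$, which enters both the accuracy-loss term $\Lambda(\mathcal{N}_r)$ and the per-iteration payment coefficient $R(\mathcal{N}_r)$, and an overall linear scaling of that payment by $r$ (since the reward paid to each recruited client is proportional to $r$, the per-iteration payment is $r\,R(\mathcal{N}_r)$ with $R(\mathcal{N}_r)$ the $r$-independent coefficient in \eqref{T_server_cost}). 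The first step is to recall from Theorem \ref{thm3} that every client $k$ switches its strategy $\varphi_k$ among non-participation, participation with $\mathcal{D}_k$, and participation with $\mathcal{D}_k^{A}$ exactly when $r$ crosses one of its indicators in $\{\zeta_1(k),\zeta_2(k),\zeta_3\}$, as governed by \eqref{client_type1} and \eqref{client_type2}. Consequently the finite set $\boldsymbol{\zeta}=\bigcup_{k\in\mathcal{K}}\{\zeta_1(k),\zeta_2(k),\zeta_3\}$ partitions $(0,+\infty)$ into finitely many half-open intervals on whose interior no client crosses a threshold.

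The second step is to show that on the interior of each such interval the sets $\mathcal{N}_r^{L}$ and $\mathcal{N}_r^{A}$ are constant; this is immediate from the threshold characterization and implies that $\Lambda(\mathcal{N}_r)$ and the coefficient $R(\mathcal{N}_r)$ are both constant there. Hence the accuracy-loss term $(1-\phi^{hT})\gamma_1\kappa_1\Lambda(\mathcal{N}_r)$ does not vary with $r$ inside the interval, while the payment term $\gamma_2 T r\,R(\mathcal{N}_r)$ is strictly increasing in $r$ whenever $R(\mathcal{N}_r)>0$, i.e.\ whenever at least one client is recruited. Therefore $\mathcal{C}_{server}(T,\cdot)$ is strictly increasing across each nonempty interval, so its minimum over that interval is attained at the left endpoint and every interior point is strictly suboptimal. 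Because the participation/switching inequalities in \eqref{client_type1}--\eqref{client_type2} hold with $\leq$, the left endpoint itself already recruits the same set as the interval interior, so the minimum is genuinely achieved there, and each left endpoint lies in $\boldsymbol{\zeta}$.

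The third step is a finite comparison: since $(0,+\infty)$ is the union of these finitely many intervals, any global minimizer is attained at one of their left endpoints, i.e.\ at some $r_o\in\boldsymbol{\zeta}$, establishing \eqref{zeta_set}. The only point requiring care is the leftmost interval $(0,\min\boldsymbol{\zeta})$, on which $\mathcal{N}_r=\emptyset$ and no training takes place; invoking the standing assumption $\Theta>\kappa_1\Lambda(\mathcal{N})$ from Remark 2 (multiple iterations strictly reduce the loss), this empty-recruitment interval is dominated by any threshold recruiting at least one client and hence cannot contain the optimum, so the minimizer need not include $0$.

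I expect the main obstacle to be the careful bookkeeping of the boundary inclusions. One must verify, using the exact placement of $<$ versus $\leq$ in \eqref{client_type1}--\eqref{client_type2} together with the orderings $\zeta_1(k)<\zeta_2(k)<\zeta_3$ for \emph{type-1} and $\zeta_3<\zeta_2(k)<\zeta_1(k)$ for \emph{type-2} clients, that at each threshold the recruited set coincides with the set on the adjacent interval to its right, so that the left endpoint attains the per-interval minimum rather than merely its infimum. Once these boundary conventions and the explicit $r$-dependence of the payment are pinned down, the strict monotonicity on each interval and the concluding finite comparison are routine.
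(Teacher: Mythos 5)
Your proposal is correct and follows essentially the same route as the paper's proof: both arguments rest on the observation that the recruited set (hence $\Lambda(\mathcal{N}_r)$) is piecewise constant in $r$ with breakpoints exactly at the indicators in $\boldsymbol{\zeta}$, while the payment grows strictly with $r$ for a fixed recruited set, so any $r_o\notin\boldsymbol{\zeta}$ is strictly dominated by the nearest indicator to its left. The paper states this as a one-step contradiction, whereas you organize it as per-interval monotonicity plus a finite comparison and additionally flag the boundary conventions and the empty-recruitment interval, which the paper's proof glosses over; this is a difference of presentation, not of substance.
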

\begin{proof}
	We set $\boldsymbol{\zeta}\gets \boldsymbol{\zeta} \cup \{\infty\}$ and sort indicators in $\boldsymbol{\zeta}$ in ascending order.
	Assume that $r_{o}\notin \boldsymbol{\zeta}$, we can search two neighboring indicators $r_{1} \in \boldsymbol{\zeta}, r_{2}\in \boldsymbol{\zeta}$ such that $r_{1}<r_{o}<r_{2}$. Then, we have $\Lambda(\mathcal{N}_{r_{1}})=\Lambda(\mathcal{N}_{r_{o}})$, since each client's strategy under reward $r_{1}$ is the same as that under reward $r_{o}$. Furthermore, we have $R(\mathcal{N}_{r_{1}})<R(\mathcal{N}_{r_{o}})$ due to $r_{1}<r_{o}$. As a result, we have $\mathcal{C}_{server}(T,r_{1})<\mathcal{C}_{server}(T,r_{o})$, which contracts with the assumption. Thus, we finish the proof of Proposition \ref{pro2}.  
\end{proof}

Based on Proposition \ref{pro2}, we next reveal the property of the optimal number of global iterations given a fixed unit data reward benchmark $r$:

\begin{prop}\label{pro3}
	\textit{Given a fixed unit data reward benchmark $r$ and the assumption $\Theta>\kappa_{1}\Lambda(\mathcal{N}_{r})$, the server's cost in \eqref{Scost_problem} is a convex function respect to $T$. By setting the first-order derivative $\frac{\partial \mathcal{C}_{cost}}{\partial T}$ to $0$, the optimal number of global iterations $T^{\ast}(r)$ can be calculated as:
		\begin{eqnarray}\label{pro3_eq}
		T^{\ast}(r)=\log_{\phi^h}\left[\frac{\gamma_{2}R(\mathcal{N}_r)}{\gamma_{1}(-\ln \phi^h)(\Theta-\kappa_{1}\Lambda(\mathcal{N}_r))}\right].
		\end{eqnarray}}
\end{prop}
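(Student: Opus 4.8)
The plan is to treat $r$ as fixed, so that the induced participating set $\mathcal{N}_r$, its training error $\Lambda(\mathcal{N}_r)$, and the one-round payment $R(\mathcal{N}_r)$ are all constants, leaving $T$ as the only free variable in \eqref{T_server_cost}. First I would regroup the cost into the compact form $\mathcal{C}_{server}(T,r)=\gamma_{1}(\Theta-\kappa_{1}\Lambda(\mathcal{N}_r))\phi^{hT}+\gamma_{1}\kappa_{1}\Lambda(\mathcal{N}_r)+\gamma_{2}R(\mathcal{N}_r)\,T$, which isolates the only nonlinear dependence on $T$ inside the single exponential term $\phi^{hT}$ and exhibits a constant term plus a term linear in $T$.

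A preliminary step is to pin down the sign of $\ln\phi^{h}$. Since $\phi=1-2\mu\eta+2\mu\rho\eta^{2}=1-2\mu\eta(1-\rho\eta)$ and the step size satisfies $\eta<\frac{1}{\rho}$, the factor $1-\rho\eta$ is positive, so $\phi<1$; together with $\phi>0$ this gives $\phi^{h}\in(0,1)$ and hence $\ln\phi^{h}<0$. This sign, combined with the standing assumption $\Theta>\kappa_{1}\Lambda(\mathcal{N}_r)$ (so that the coefficient $\gamma_{1}(\Theta-\kappa_{1}\Lambda(\mathcal{N}_r))$ is strictly positive), is exactly what makes the subsequent analysis go through.

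Relaxing $T$ to a continuous variable, I would then differentiate, writing $\phi^{hT}=e^{T\ln\phi^{h}}$. The first derivative is $\frac{\partial \mathcal{C}_{server}}{\partial T}=\gamma_{1}(\Theta-\kappa_{1}\Lambda(\mathcal{N}_r))\ln(\phi^{h})\,\phi^{hT}+\gamma_{2}R(\mathcal{N}_r)$, and the second derivative is $\gamma_{1}(\Theta-\kappa_{1}\Lambda(\mathcal{N}_r))[\ln(\phi^{h})]^{2}\phi^{hT}$. Because the leading coefficient is positive, $\phi^{hT}>0$, and $[\ln\phi^{h}]^{2}>0$, the second derivative is strictly positive, which establishes convexity in $T$. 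Setting $\frac{\partial \mathcal{C}_{server}}{\partial T}=0$ and solving gives $\phi^{hT}=\frac{\gamma_{2}R(\mathcal{N}_r)}{\gamma_{1}(-\ln\phi^{h})(\Theta-\kappa_{1}\Lambda(\mathcal{N}_r))}$; the sign work above guarantees the right-hand side is positive, so taking $\log_{\phi^{h}}$ of both sides yields the claimed expression \eqref{pro3_eq} for $T^{\ast}(r)$.

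The computation itself is routine single-variable calculus; the only real obstacles are bookkeeping ones. The main one is the chain of sign verifications — confirming $\phi^{h}\in(0,1)$ from $\eta<\frac{1}{\rho}$ and invoking $\Theta>\kappa_{1}\Lambda(\mathcal{N}_r)$ — since if either condition failed, the argument of the logarithm could become nonpositive and the stationary point would fail to exist. A secondary point worth noting is that $T$ is genuinely integer-valued, so the optimal integer iteration count is obtained by comparing the cost at $\lfloor T^{\ast}(r)\rfloor$ and $\lceil T^{\ast}(r)\rceil$; convexity guarantees that one of these two neighbors of the continuous minimizer is optimal, so reporting the closed-form $T^{\ast}(r)$ loses no generality.
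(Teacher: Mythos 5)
Your proposal is correct and follows exactly the derivation the paper intends for Proposition~\ref{pro3} (the paper states it without a printed proof): rewrite \eqref{T_server_cost} as a positive multiple of $\phi^{hT}$ plus a term linear in $T$, use $\eta<1/\rho$ to get $\phi^{h}\in(0,1)$ and the assumption $\Theta>\kappa_{1}\Lambda(\mathcal{N}_{r})$ to make the second derivative strictly positive, then solve the first-order condition to obtain \eqref{pro3_eq}. Your added remarks on the positivity of the logarithm's argument and on rounding the continuous minimizer to $\lfloor T^{\ast}(r)\rfloor$ or $\lceil T^{\ast}(r)\rceil$ are sound and slightly more careful than the paper's treatment.
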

\textbf{Remark} 4: The assumption that $\Theta>\kappa_{1}\Lambda(\mathcal{N}_{r}) $ is reasonable since FL model training improves the model performance. We can see that the optimal $T^{\ast}(r)$ is decreasing with the increases of $R(\mathcal{N}_{r})$ and $\Lambda(\mathcal{N}_{r})$, since $\phi^{h}<1$.

Based on above analysis, we can obtain the server's strategy profile $(T^{\ast}(r), r)$ for each candidate $r$ in $\boldsymbol{\zeta}$ set. By calculating the server' s cost for each strategy profile, we choose the optimal one which has the minimum server's cost. The above procedure to obtain server's optimal strategy in complete information scenario is summarized in Algorithm \ref{completeagl}. For computational efficiency, Algorithm \ref{completeagl} can obtain the optimal strategy for the server with complexity of $O(2|\mathcal{K}|)$.

\begin{algorithm}[t]
	\small
	\caption{Server's optimal strategy in complete information scenario}
	\label{completeagl}
	\begin{algorithmic}[1]
		\Require The candidate client set $\mathcal{K}$.
		\Ensure Server's optimal strategy profile $(T_{o}, r_{o})$.
		\State $\mathcal{S} \gets \emptyset$.
		\For {each $r \in \zeta $}
		\State Calculate $T^{\ast}(r)$ based on \eqref{pro3_eq} and obtain the strategy profile $\mathcal{S}\gets \mathcal{S}\cup \{(T^{\ast}(r),r) \}$.
		\EndFor
		\State Search the optimal strategy profile in $\mathcal{S}$ such that it minimizes server's cost $\mathcal{C}_{server}$, which is denoted as $(T_{o},r_{o})$.
		\State \textbf{return} strategy profile $(T_{o},r_{o})$
	\end{algorithmic}
\end{algorithm}

\section{Incomplete Information Scenario}
In incomplete information scenario, the server is only aware of the distributions of data quality $\lambda_{k}$ and the unit data usage cost $s_{k}$, instead of knowing the exact values for client's attributes $(\lambda_{k},s_{k})$. Hence, we aim to derive the server's optimal strategy to minimize server's expected cost.

\subsection{Clients' Behaviors in Incomplete Information Scenario}
\label{subsec_client}
Without loss of generality, we assume that  $\lambda_{k}\in (0,\lambda_{max}]$ and $s_{k}\in (0,s_{max}]$ hold for all clients, where $\lambda_{max}=\lambda$ \footnote{In this paper, we set the non-iid degree  $\lambda$ tolerated by the server equaling to $\lambda_{max}$, the maximum value of $\lambda_{k}$. Our analysis can be easily extended to the case when $\lambda<\lambda_{max}$.} and $s_{max}<\zeta_{3}=\frac{\lambda \delta}{1-\theta}$ \footnote{This assumption originates from the fact $\lambda> 1$ in real-world datasets (e.g., MNIST and CIFAR10 datasets). The maximum unit data-computing cost satisfy $s_{max}<<1$ in our experimental settings. Our analysis can be extended in cases where $s_{max}\geq \zeta_{3}$.}. In incomplete information scenario, the server is aware of the probability density functions of $\lambda_{k}$ and $s_k$, which are denoted as $u(\cdot)$ and $v(\cdot)$, respectively. Here,  $\lambda_{k}$ and $s_{k}$ are independent with each other.



To intuitively show clients' behaviors associated with their personal attributes, a two-dimensional parameter space is constructed for all clients based on $s_{k}$ and $\lambda_{k}$. As depicted in Fig. \ref{figincomplte}(a), all clients fall in the parameter space $D_{space}=\{(s_{k},\lambda_{k})\vert s_{k}\in(0, s_{max}], \lambda_{k}\in (0,\lambda] \}$ following the joint probability distribution of $s_{k}$ and $\lambda_{k}$. Obviously, \textit{type-1} and \textit{type-2} clients are divided by the dotted line $\lambda_{k}=-\frac{1-\theta}{\delta}s_{k}+\lambda$. Given the uniform unit data reward benchmark $r$, we can derive the probability of each strategy adopted by a client by considering the following three cases:
\begin{itemize}
	\item Case \textbf{(\textit{i})} ($r\in (0, s_{max})$): According to clients' rational behaviors (See Theorem \ref{thm3}), each \textit{type-1} client $k$ will not participate in FL with AIGC-enhanced dataset (i.e., strategy $\varphi_{k}(1,1)$) due to $r<s_{max}<\zeta_{3}=\frac{\lambda \delta}{1-\theta}$. When $r\geq \zeta_{1}(k)\Rightarrow \lambda_{k}\leq -\frac{\lambda}{r}s_{k}+\lambda$, each \textit{type-1} client $k$ will participate in FL with its local dataset. 
	As for each \textit{type-2} client $k$, it will not participate in FL since $r<\zeta_{3}<\zeta_{2}(k)$.  Accordingly, as illustrated in Fig. \ref{figincomplte}(b), 
	client whose attribute tuple $(s_{k},\lambda_{k})$ falls within the blue region will participate in FL with its local dataset. Thus, the probability that client $k$ participates in FL can be calculated as $P_1=\iint\limits_{D_{1}} u(s_{k})v(\lambda_{k}){\rm d}s_{k}{\rm d}\lambda_{k}$,
	where $D_{1}=\{(s_{k},\lambda_{k})\vert s_{k}\in(0, r), \lambda_{k}\in (0,-\frac{\lambda}{r}s_{k}+\lambda) \}$ represents the blue region in Fig. \ref{figincomplte}(b).
	
	\item Case \textbf{(\textit{ii})} ($r\in [s_{max},\zeta_{3})$): Similar to the above case, only \textit{type-1} clients (i.e., the clients located in the blue region $D_{2}=\{(s_{k},\lambda_{k})\vert s_{k}\in(0, s_{max}), \lambda_{k}\in (0,-\frac{\lambda}{r}s_{k}+\lambda)\}$ in Fig. \ref{figincomplte}(c)) will participate in FL with its local dataset. Accordingly, the probability that client $k$ participates in FL can be denoted as $P_2=\iint\limits_{D_{2}} u(s_{k})v(\lambda_{k}){\rm d}s_{k}{\rm d}\lambda_{k}.$
	
	\item Case \textbf{(\textit{iii})} ($r\in [\zeta_{3},\frac{\lambda\delta+s_{max}}{1-\theta}]$)\footnote{	The decision of each client under $r>\frac{\lambda\delta+s_{max}}{1-\theta}$ is the same as that under $r=\frac{\lambda\delta+s_{max}}{1-\theta}$. However, the server will incur more payments to clients under $r>\frac{\lambda\delta+s_{max}}{1-\theta}$, compared with the case when $r=\frac{\lambda\delta+s_{max}}{1-\theta}$. Hence, there is no need to discuss the case when the $r\in (\frac{\lambda\delta+s_{max}}{1-\theta},\infty)$.}:
	Since $r\geq \zeta_{3}$, all \textit{type-1} clients will participate in FL with AIGC-enhanced datasets (i.e., strategy $\varphi_{k}(1,1)$). While each \textit{type-2} client $k$ participates in FL with the AIGC-enhanced dataset when $r\geq \zeta_{2}(k)\Rightarrow \lambda_{k}\leq -\frac{\lambda}{\lambda\delta+\theta r}s_{k}+\frac{r \lambda}{\lambda\delta+\theta r}$. Combining two types of clients, we summarize that client whose attribute tuple $(s_{k}, \lambda_{k})$ is located in the green region of Fig. \ref{figincomplte}(d) will participate in FL with AIGC-enhanced datasets. Consequently, the probability that client $k$ participates in FL can be written as $P_3=\iint\limits_{D_{3}} u(s_{k})v(\lambda_{k}){\rm d}s_{k}{\rm d}\lambda_{k}$
	where $D_{3}=D_{31}\cup D_{32}$, $D_{31}=\{(s_{k},\lambda_{k})\vert s_{k}\in(0, (1-\theta)r-\lambda\delta), \lambda_{k}\in (0,\lambda)\}$ and $D_{32}=\{(s_{k},\lambda_{k})\vert s_{k}\in [(1-\theta)r-\lambda\delta,s_{max}), \lambda_{k}\in (0,-\frac{\lambda}{\lambda\delta+\theta r}s_{k}+\frac{r \lambda}{\lambda\delta+\theta r}]\}$.	
\end{itemize}



\subsection{The Server's Expected Cost}
Due to the inherent uncertainties in incomplete information scenario, the server endeavors to minimize its expected cost by finding the optimal strategy:
\begin{align}
&\mathop{Minimize}\limits_{T, r} \  \mathbb{E}(\mathcal{C}_{cost}),\label{Scost_problem_in}
\\ &\mbox{s.t.}\quad T\in \{1,2,...\}, \tag{\ref{Scost_problem_in}{a}} \label{Scost_problem_ina}
\\ &  \quad \quad \,  r \in (0,+\infty) \tag{\ref{Scost_problem_in}{b}} \label{Scost_problemb_ina}.
\end{align}
It is challenging to solve problem \eqref{Scost_problem_in} due to the following two aspects. On the one hand,
the calculation for $\mathbb{E}(\mathcal{C}_{cost})$ is non-trivial under a given strategy $(T,r)$ due to the complex clients' behaviors. On the other hand, the mutual coupling of $T$ and $r$ exacerbates the difficulty to derive the optimal strategy for the server in presence of information asymmetry. 

To estimate $\mathbb{E}(\mathcal{C}_{cost})$ under a given strategy $(T,r)$, we first rewrite \eqref{Scost_problem_in} as follows:
\begin{eqnarray}\label{scost_in}
\mathbb{E}(\mathcal{C}_{cost})=\gamma_{1}\mathbb{E}(M_{loss})+ \gamma_{2}\mathbb{E}(R_{total}),
\end{eqnarray}
where
\begin{small}
	\begin{eqnarray}
	\notag
	\!\!\!& &\!\!\! \mathbb{E}(M_{loss})=\phi^{hT}\Theta + (1-\phi^{hT})\kappa_{1}\frac{\psi}{\sqrt{h}}\underbrace{\mathbb{E}\left[\sum_{k\in\mathcal{N}_r}\frac{\sqrt{d_{k}}}{d(\mathcal{N}_r)}\right]}_{\boldsymbol{M}_{1}}\\
	\!\!\!& &\!\!\!+
	\underbrace{\mathbb{E}\left[\sum_{k\in\mathcal{N}^{L}_r}\frac{d_{k}}{d(\mathcal{N}_r)}\lambda_{k}+\sum_{k\in\mathcal{N}^{A}_r}\frac{d_{k}}{d(\mathcal{N}_r)}\lambda_{k,A}\right]}_{\boldsymbol{M}_{2}},
	\end{eqnarray}
\end{small}
and 
\begin{small}
	\begin{eqnarray}
	\mathbb{E}(R_{total})=\gamma_{2}Tr \underbrace{\mathbb{E}(\sum_{k\in \mathcal{N}_{r}}d_{k})}_{\boldsymbol{M}_{3}}(1-\frac{\mathbb{E}(\lambda_{k})}{\lambda}).
	\end{eqnarray}
\end{small}
To calculate \eqref{scost_in}, we next introduce how to estimate $\boldsymbol{M}_1$, $\boldsymbol{M}_{2}$ and $\boldsymbol{M}_{3}$ for cases \textbf{(\textit{i})}, \textbf{(\textit{ii})} and \textbf{(\textit{iii})}:

\subsubsection{\textbf{Estimation of $\boldsymbol{M}_{1}$}}
The challenge for estimating $\boldsymbol{M}_{1}$ is twofold: First, when $d(\mathcal{N}_r)=\sum_{k\in \mathcal{N}_r}d_{k}=0$ (e.g., no client participates in FL), the term $\sum_{k\in\mathcal{N}_r}\frac{\sqrt{d_{k}}}{d(\mathcal{N}_r)}$ is undefined.
Second, the value of $d(\mathcal{N}_r)=\sum_{k\in \mathcal{N}_r}d_{k}$ is determined by the set of the recruited client $\mathcal{N}_r$.

In light of the first challenge, we utilize a large positive constant $\Omega$ to characterize the cost of gradient error when there is no client participating in FL (i.e., $d(\mathcal{N}_r)=0$) \footnote{According to assumption $\Vert \nabla F_{k}(\boldsymbol{w};\boldsymbol{\xi}_{k})-\nabla F_{k}(\boldsymbol{w})\Vert^{2}\leq \frac{\psi^{2}}{B_{k}}$, the term $\sum_{k\in\mathcal{N}_r}\frac{\sqrt{d_{k}}}{d(\mathcal{N})}$ characterizes the gradient error from stochastic sampling. When there is no data for federated training, we consider that the value of this term tends to $\infty$.}. Based on this, we construct a random variable $Q$ as follows:
\begin{small}
	\begin{eqnarray}\label{Q_ran}
	Q=\left\{
	\begin{aligned}
	&\sum_{k\in\mathcal{N}_r}\frac{\sqrt{d_{k}}}{d(\mathcal{N}_r)}, \qquad \mbox{if} \  d(\mathcal{N}_{r})\neq 0,\\
	&\Omega, \qquad    \mbox{if} \ d(\mathcal{N}_{r})= 0.\\
	\end{aligned}
	\right.
	\end{eqnarray}
\end{small}
By supplementing the definition of $\sum_{k\in\mathcal{N}_r}\frac{\sqrt{d_{k}}}{d(\mathcal{N})}$, the value of $\mathbb{E}(Q)$ can be regarded as the estimated value of $\boldsymbol{M}_{1}$. Given unit data reward benchmark $r$, directly calculating $\mathbb{E}(Q)$ necessitates an algorithm with complexity of $O(2^{K})$, which inevitably incurs a significant expense.

\begin{figure}[t]
	\centering
	\includegraphics[width=0.85\linewidth]{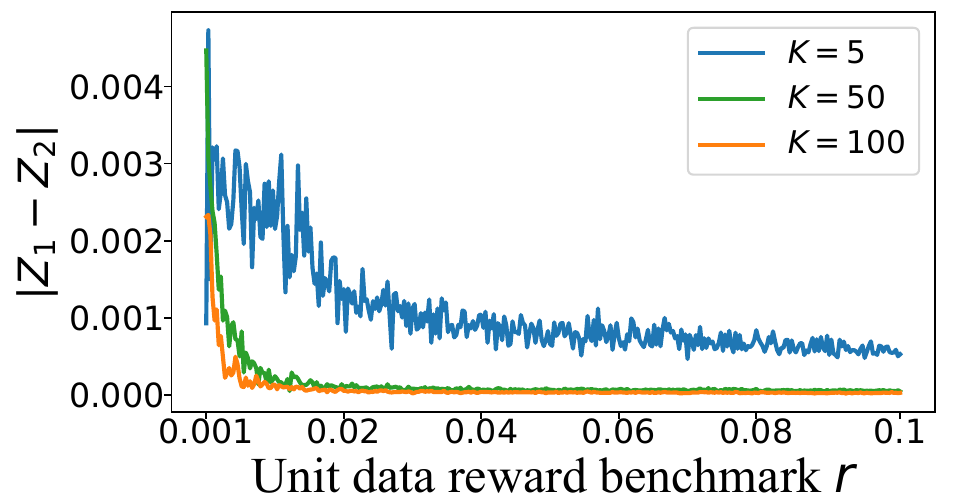}
	\caption{The value of $\vert Z_{1}-Z_{2} \vert$ for different unit data reward benchmark $r$.}
	\label{zgaps}
\end{figure}

To calculate $\mathbb{E}(Q)$ with high computational efficiency, we define a random variable $z_{k} \in \{0,1\}$ for each client $k \in \mathcal{K}$ to denote whether to participate in FL given the unit data reward benchmark $r$. Here, we let $z_{k}=1$ with probability $p$ and $z_{k}=0$ with probability $1-p$, where $p$ equals to $P_{1}$, $P_{2}$ and $P_{3}$ for cases \textbf{(\textit{i})}, \textbf{(\textit{ii})} and \textbf{(\textit{iii})}, respectively. Then, an estimator for random variable $Q$ is constructed by:
\begin{eqnarray}
\notag
Z=\frac{\sum_{k\in \mathcal{K}}z_{k}\sqrt{d_{k}}+\epsilon\cdot \prod_{k \in \mathcal{K}}(1-z_{k})}{\sum_{k\in \mathcal{K}}z_{k}d_{k}+\sqrt{\epsilon}\cdot \prod_{k \in \mathcal{K}}(1-z_{k})}+\Omega\cdot \prod_{k \in \mathcal{K}}(1-z_{k}),\!\!\!\!\!\!\!\!\!\!\!\!\\
\end{eqnarray}
where $\epsilon \to 0^+$. When no client participates in FL (i.e., $\mathcal{N}_r=\emptyset$), we have $Z=\sqrt{\epsilon}+\Omega\approx\Omega$. Otherwise, we have $Z=\sum_{k\in\mathcal{N}_r}\frac{\sqrt{d_{k}}}{d(\mathcal{N}_{r})}$. 
Then, we calculate $\mathbb{E}(Z)$ as follows: 
\begin{eqnarray}\label{Y_random}
\notag
\mathbb{E}(Z)\!\!\!\!\!\!&=&\!\!\!\!\!\!\underbrace{\mathbb{E}\left(\frac{\sum_{k\in \mathcal{K}}z_{k}\sqrt{d_{k}}+\epsilon\cdot \prod_{k \in \mathcal{K}}(1-z_{k})}{\sum_{k\in \mathcal{K}}z_{k}d_{k}+\sqrt{\epsilon}\cdot\prod_{k \in \mathcal{K}}(1-z_{k})}\right)}_{Z_1}+(1-p)^{K} \Omega \!\!\!\\
\notag
\!\!\!\!\!\!&\approx&\!\!\!\!\!\!\underbrace{\frac{\mathbb{E}(\sum_{k\in \mathcal{K}}z_{k}\sqrt{d_{k}}+\epsilon \cdot \prod_{k \in \mathcal{K}}(1-z_{k}))}{\mathbb{E}(\sum_{k\in \mathcal{K}}z_{k}d_{k}+\sqrt{\epsilon}\cdot \prod_{k \in \mathcal{K}}(1-z_{k}))}}_{Z_2}+(1-p)^{K} \Omega\\
\notag
\!\!\!\!\!\!&=&\!\!\!\!\!\! \frac{\sum_{k\in \mathcal{K}}p \sqrt{d_{k}}+\epsilon\cdot (1-p)^{K}}{\sum_{k\in \mathcal{K}}pd_{k}+\sqrt{\epsilon}\cdot(1-p)^{K}}+(1-p)^{K} \Omega\\
\!\!\!\!\!\!&\approx&\!\!\!\!\!\! \frac{\sum_{k\in \mathcal{K}} \sqrt{d_{k}}}{\sum_{k\in \mathcal{K}}d_{k}}+(1-p)^{K} \Omega.
\end{eqnarray}
In the second equation of \eqref{Y_random}, we use term $Z_{2}$ to approximate term $Z_{1}$ when $\mathcal{N}_r \neq \emptyset$. As illustrated in Fig. \ref{zgaps}, we show the difference between the real value of $Z_{1}$ and $Z_{2}$ under different values of unit data reward benchmark $r$. For parameter settings, we set $\lambda_{k}\sim Uniform(0,3)$, $s_{k}\sim Uniform(0,0.1)$ and $\epsilon=10^{-8}$. The other parameters' settings are the same as that in Section \ref{dis_section}. We can see that $\vert Z_{1}- Z_{2}\vert$ is small enough with different numbers of candidate clients $K$, indicating that our approximation is reasonable.

By formulating a two-tier nested estimation framework,
the quantification of $\mathbb{E}(Q)$ is attained through the estimation of $\mathbb{E}({Z})$, leading to the ultimate derivation of the value of $\boldsymbol{M}_1$.


\subsubsection{\textbf{Calculation of $\boldsymbol{M}_{2}$}} 
As $\boldsymbol{M}_{2}$ indicates the expected training gradient error caused by data quality, we have $\sum_{k\in\mathcal{N}^{L}}\frac{d_{k}}{d(\mathcal{N}_r)}\lambda_{k}+\sum_{k\in\mathcal{N}^{A}}\frac{d_{k}}{d(\mathcal{N}_{r})}\lambda_{k,A}=0$ when $d(\mathcal{N}_r)=0$ (i.e., $\mathcal{N}_{r}= \emptyset$). Recalling cases \textbf{(\textit{i})}, \textbf{(\textit{ii})} and \textbf{(\textit{iii})} discussed in Section \ref{subsec_client}, we have $\mathcal{N}_{r}^{L}= \emptyset$ or $\mathcal{N}_{r}^{A}=\emptyset$ when $\mathcal{N}_{r}=(\mathcal{N}_{r}^{L}\cup \mathcal{N}_{r}^{A}) \neq \emptyset$. 
Hence, we have $\boldsymbol{M}_{2}=\mathbb{E}\left[\sum_{k\in\mathcal{N}_{r}}\frac{d_{k}}{d(\mathcal{N}_{r})}\lambda_{k}\right]$ for cases \textbf{(\textit{i})} and \textbf{(\textit{ii})}. In case \textbf{(\textit{iii})}, we have $\boldsymbol{M}_{2}=\mathbb{E}\left[\sum_{k\in\mathcal{N}_{r}}\frac{d_{k}}{d(\mathcal{N})}\lambda_{k,A}\right]$.

For case \textbf{(\textit{i})}, we can obtain the value of $\boldsymbol{M}_{2}$ by calculating the following equation:
\begin{eqnarray}\label{lamd_rand1}
\mathbb{E}\left[\sum_{k\in\mathcal{N}_{r}}\frac{d_{k}}{d(\mathcal{N}_{r})}\lambda_{k}\right]\!\!=\!\!\mathbb{E}(\lambda_{k})\!\!=\iint\limits_{D_{1}} u(s_{k})v(\lambda_{k})\lambda_{k}{\rm d}s_{k}{\rm d}\lambda_{k}.
\end{eqnarray}
Here, the first equation originates from $\sum_{k\in\mathcal{N}_{r}}\frac{d_{k}}{d(\mathcal{N})}=1$ when $\mathcal{N}_{r}\neq 0$. Similarly, we replace the region $D_{1}$ with $D_{2}$ in \eqref{lamd_rand1} and obtain the value of $\boldsymbol{M}_2$ for case \textbf{(\textit{ii})}. As each recruited clients use AIGC-enhanced datasets for FL model training, the value of $\boldsymbol{M}_2$ for case \textbf{(\textit{iii})} can be calculated by:
\begin{eqnarray}\label{lamd_rand3}
\notag
\!\!\mathbb{E}\!\!\left[\sum_{k\in\mathcal{N}_{r}}\frac{d_{k}}{d(\mathcal{N})}\lambda_{k,A}\right]\!\!=\!\!\mathbb{E}(\lambda_{k,A})=\!\!\iint\limits_{D_{3}} u(s_{k})v(\lambda_{k})\theta \lambda_{k}{\rm d}s_{k}{\rm d}\lambda_{k}.\!\!\!\!\!\!\!\!\!\!\!\!\!\!
\\
\end{eqnarray}
Here, the second equation originates from $\lambda_{k,A}=\theta \lambda_{k}$.

\subsubsection{\textbf{Calculation of $\boldsymbol{M}_{3}$}} We have $\boldsymbol{M}_{3}=\mathbb{E}(\sum_{k\in \mathcal{K}}z_{k}d_{k})=p(\sum_{k\in \mathcal{K}}d_{k})$.

By substituting $\boldsymbol{M}_1$, $\boldsymbol{M}_2$ and $\boldsymbol{M}_3$ into \eqref{scost_in}, we can obtain the server's expected cost given fixed $T$ and $r$.

\subsection{Server's Optimal Strategy in Incomplete Information Scenario}

In light of the mutual coupling of $T$ and $r$ in server's strategy, we first aim to obtain the optimal $r$ by minimizing \eqref{Scost_problem_in} given a fixed $T$. Due to different clients' behaviors over different ranges of $r$, problem \eqref{Scost_problem_in} is naturally decomposed into three subproblems corresponding to cases \textbf{(\textit{i})}, \textbf{(\textit{ii})} and \textbf{(\textit{iii})}, denoted as $\mathbb{E}[\mathcal{C}_{cost, \it {(i)}}(r)], r\in(0, s_{max})$; $\mathbb{E}[\mathcal{C}_{cost, \it {(ii)}}(r)], r\in [s_{max},\zeta_{3})$ and $\mathbb{E}[\mathcal{C}_{cost, \it {(iii)}}(r)],r\in [\zeta_{3}, \frac{\lambda\delta+s_{max}}{1-\theta}]$, respectively.
Taking case \textbf{(\textit{i})} as an example, the server's cost under a fixed $T$ can be represented as:
\begin{eqnarray}\label{incom_r}
\notag
\!\!\!\!\!\!\!\!\!\!\!\!\!\!\!& &\!\!\!\mathbb{E}(\mathcal{C}_{cost,(i)})(r)=\gamma_{1}\phi^{hT}\Theta +\gamma_{1} (1-\phi^{hT})\kappa_{1}\frac{\psi}{\sqrt{h}}\mathbb{E}(Z)\\
\!\!\!\!\!\!\!\!\!\!\!\!\!\!\!& &\!\!\!+ \gamma_{1} (1-\phi^{hT})\kappa_{1}
\mathbb{E}(\lambda_{k})+\gamma_{2}Tr\mathbb{E}(\sum_{k\in \mathcal{N}_{r}}d_{k})(1-\frac{\mathbb{E}(\lambda_{k})}{\lambda}).
\end{eqnarray}
These three subproblems are all single-objective optimization problems in terms of $r$, each involving double integrals with respect to $s_{k}$ and $\lambda_{k}$, and can be easily solved by traditional optimizers. The solution corresponding to the minimum server's cost among these three subproblems signifies the optimal unit data reward benchmark of $r$ for a given fixed $T$.

\noindent \textbf{Remark} 5: 
A low unit data reward benchmark $r$ may lead to a large $\mathbb{E}(Z)$ in \eqref{incom_r}, which indicates that the server may suffer from high cost to hedge the risk of no or very few clients participating in FL model training. With a large unit data reward benchmark $r<\zeta_{3}$, there are more clients with low data quality participating in FL, resulting in degraded model performance and high payments. When $r\geq \zeta_{3}$, the server recruits clients to participate in FL with AIGC-enhanced datasets, leading to satisfactory training performance but high payment to clients. Hence, a proper decision of $r$ is significantly important to realize a nice balance between model performance and payment to the clients.

\begin{algorithm}[t]
	\small
	\caption{Server's optimal strategy in incomplete information scenario}
	\label{incompleteagl}
	\begin{algorithmic}[1]
		\Require The candidate client set $\mathcal{K}$.
		\Ensure Server's optimal strategy profile $(T_{o}, r_{o})$.
		\State Obtain $T_{max}$ by solving the problem \eqref{Scost_problem_tt}.
		\State $\mathcal{S}\gets \emptyset$.
		\For {$T=1$ to $T=T_{max}$}
		\State Calculate the minimum server's cost among the three subproblems $\mathbb{E}[\mathcal{C}_{cost, \it {(i)}}(r)]$; $\mathbb{E}[\mathcal{C}_{cost, \it {(ii)}}(r)]$ and $\mathbb{E}[\mathcal{C}_{cost, \it {(iii)}}(r)]$ and obtain the corresponding reward as the optimal unit data reward benchmark $r_T$.
		\State $\mathcal{S}\gets \mathcal{S}\cup \{(T,r_{T})\}$.
		\EndFor
		\State Search the optimal strategy profile in $\mathcal{S}$ such that minimizes server's cost, which is denoted as $(T_{o},r_{o})$.
		\State \textbf{return} strategy profile $(T_{o},r_{o})$.
	\end{algorithmic}
\end{algorithm}

We have determined the optimal $r$ for any fixed $T$, which inspires us to globally select the optimal $T$, since $T$ belongs to a limited integer set $[0,T_{max}]$. 
Next, we introduce how to determine the value of $T_{max}$.
Proposition \ref{pro3} reveals the optimal $T^{\ast}(r)$ given a unit data reward benchmark $r$. Then, the optimal number of global iterations $T_{o}$ must satisfy $T_{o}\leq T_{max}$, where $T_{max}=\max_{r}T^{\ast}(r),\forall r \in (0,\frac{\lambda\delta+s_{max}}{1-\theta}]$. We summarize this conclusion as follows: 
\begin{prop}\label{pro4}
	\textit{In incomplete information scenario, $T_{max}$ can be determined by solving the following optimization problem $T_{max}=\mathop{Maximize}\limits_{r} \  T^{\ast}$, i.e.,
		\begin{align}
		&\mathop{Maximize}\limits_{r} \log_{\phi^h}\left[\frac{\gamma_{2}\mathbb{E}(R(\mathcal{N}_r))}{\gamma_{1}(-\ln \phi^h)(\Theta-\kappa_{1}\mathbb{E}(\Lambda(\mathcal{N}_r)))}\right],\label{Scost_problem_tt}
		\\ & \qquad \qquad \qquad \mbox{s.t.}\quad r\in (0,\frac{\lambda\delta+s_{max}}{1-\theta}], \tag{\ref{Scost_problem_tt}{a}} \label{Scost_problem_tta}
		\end{align}
		The optimization problem can be decomposed to three subproblems, since $\mathbb{E}(R(\mathcal{N}_{r}))$ and $\mathbb{E}(\Lambda(\mathcal{N}_{r}))$ are different in cases \textbf{(\textit{i})}, \textbf{(\textit{ii})} and \textbf{(\textit{iii})}. Specifically, we have $\mathbb{E}(R(\mathcal{N}_{r}))=r\mathbb{E}(\sum_{k\in \mathcal{N}_{r}}d_{k})(1-\frac{\mathbb{E}(\lambda_{k})}{\lambda})$ and $\mathbb{E}(\Lambda(\mathcal{N}_{r}))=\gamma_{1}\kappa_{1}\frac{\psi}{\sqrt{h}}\mathbb{E}(Z)+\gamma_{1}\kappa_{1}
		\mathbb{E}(\lambda_{k})$ for cases \textbf{(\textit{i})} and \textbf{(\textit{ii})}. For case \textbf{(\textit{iii})}, we have $\mathbb{E}(R(\mathcal{N}_{r}))=r\mathbb{E}(\sum_{k\in \mathcal{N}_{r}}d_{k})(1-\frac{\mathbb{E}(\lambda_{k,A})}{\lambda})$ and $\mathbb{E}(\Lambda(\mathcal{N}_{r}))=\gamma_{1}\kappa_{1}\frac{\psi}{\sqrt{h}}\mathbb{E}(Z)+\gamma_{1}\kappa_{1}
		\mathbb{E}(\lambda_{k,A})$.
	}
\end{prop}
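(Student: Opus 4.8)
The plan is to leverage Proposition~\ref{pro3}, which already pins down, for every fixed unit data reward benchmark $r$, the continuous minimizer $T^{\ast}(r)$ of the expected cost viewed as a (convex) function of the single variable $T$. The crux of the argument is that the globally optimal pair $(T_o, r_o)$ must have its $T$-component equal to the per-$r$ optimum, so that scanning $T$ only up to $\max_r T^{\ast}(r)$ is guaranteed to capture the true minimizer; the remainder of the statement merely records the closed forms of $\mathbb{E}(R(\mathcal{N}_r))$ and $\mathbb{E}(\Lambda(\mathcal{N}_r))$ needed to evaluate this maximum.

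First I would fix an arbitrary feasible $r \in (0, \frac{\lambda\delta + s_{max}}{1-\theta}]$ and invoke Proposition~\ref{pro3} (whose hypothesis $\Theta > \kappa_1 \mathbb{E}(\Lambda(\mathcal{N}_r))$ is inherited from the discussion in Remark~2) to assert that $\mathbb{E}(\mathcal{C}_{cost})$ is convex in $T$ with unique continuous minimizer $T^{\ast}(r)$ given by \eqref{pro3_eq}. By convexity, the best integer choice of $T$ for this $r$ is $\lfloor T^{\ast}(r) \rfloor$ or $\lceil T^{\ast}(r) \rceil$, and hence never exceeds $\lceil T^{\ast}(r) \rceil$. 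Now let $(T_o, r_o)$ denote the global minimizer: since $T_o$ must be optimal for its own $r_o$ (otherwise one could reduce the cost by adjusting $T$ alone), we obtain $T_o \leq \lceil T^{\ast}(r_o) \rceil \leq \lceil \max_r T^{\ast}(r) \rceil$. Defining $T_{max}$ as the (rounded-up) maximizer of $T^{\ast}(r)$ over the feasible interval therefore guarantees that the global optimum is retained when the search is restricted to $T \in \{1, \ldots, T_{max}\}$, exactly the range used in Algorithm~\ref{incompleteagl}.

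Next I would reduce the computation of $T_{max} = \max_r T^{\ast}(r)$ to maximizing the expression in \eqref{pro3_eq} with $R(\mathcal{N}_r)$ and $\Lambda(\mathcal{N}_r)$ replaced by their expectations. Because the recruited set $\mathcal{N}_r$—and hence both $\mathbb{E}(R(\mathcal{N}_r))$ and $\mathbb{E}(\Lambda(\mathcal{N}_r))$—changes functional form across the participation regimes of Section~\ref{subsec_client}, the feasible interval splits at $s_{max}$ and $\zeta_3$ into the subintervals of cases \textbf{(\textit{i})}, \textbf{(\textit{ii})} and \textbf{(\textit{iii})}. On each piece I would substitute the already-derived estimators, namely $\mathbb{E}(R(\mathcal{N}_r)) = r\,\mathbb{E}(\sum_{k\in\mathcal{N}_r} d_k)(1 - \frac{\mathbb{E}(\lambda_k)}{\lambda})$ paired with $\mathbb{E}(\Lambda(\mathcal{N}_r)) = \gamma_1\kappa_1\frac{\psi}{\sqrt{h}}\mathbb{E}(Z) + \gamma_1\kappa_1\mathbb{E}(\lambda_k)$ for cases \textbf{(\textit{i})} and \textbf{(\textit{ii})}, and the analogous forms obtained via $\lambda_{k,A} = \theta\lambda_k$ for case \textbf{(\textit{iii})}, where $\mathbb{E}(Z)$, $\mathbb{E}(\lambda_k)$ and $\mathbb{E}(\sum_{k} d_k)$ are precisely the quantities computed for $\boldsymbol{M}_1$, $\boldsymbol{M}_2$ and $\boldsymbol{M}_3$. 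Each resulting univariate problem is then solved by a standard optimizer, and $T_{max}$ is taken as the largest of the three sub-maxima.

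The main obstacle I anticipate is twofold. First, one must confirm that $T^{\ast}(r)$ is well-defined across the whole feasible set rather than just at the optimum: the argument of $\log_{\phi^h}(\cdot)$ must remain in the regime yielding a positive value, and $\Theta > \kappa_1 \mathbb{E}(\Lambda(\mathcal{N}_r))$ must hold for every admissible $r$; the monotone dependence of $\mathbb{E}(\Lambda(\mathcal{N}_r))$ on $r$ makes this verifiable but constitutes the delicate step. Second, since $\mathbb{E}(Z)$ enters through the nested two-tier estimator of \eqref{Y_random}, the map $r \mapsto T^{\ast}(r)$ need not be continuous across the case boundaries at $s_{max}$ and $\zeta_3$; consequently the maximization genuinely must be carried out piecewise and the three candidate maxima compared, rather than resolved by a single first-order condition over the entire interval.
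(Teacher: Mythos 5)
Your proposal matches the paper's own justification, which is given entirely in the sentence preceding the proposition: Proposition~\ref{pro3} yields $T^{\ast}(r)$ for each fixed $r$, the global optimum $T_{o}$ must be optimal for its own $r_{o}$ and hence bounded by $\max_{r}T^{\ast}(r)$, and the maximization is carried out piecewise over cases \textbf{(\textit{i})}--\textbf{(\textit{iii})} using the expectations already computed for $\boldsymbol{M}_{1}$, $\boldsymbol{M}_{2}$ and $\boldsymbol{M}_{3}$. Your added care about integer rounding (taking $\lceil T^{\ast}(r_{o})\rceil$) and about verifying $\Theta>\kappa_{1}\mathbb{E}(\Lambda(\mathcal{N}_{r}))$ across the whole feasible interval is a mild strengthening of the same argument, not a different route.
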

The optimization problem \eqref{Scost_problem_tt} is a single-objective which can be solved by traditional optimizer. Based on the above dicussion, Algorithm 2 summarizes the procedure of obtaining server's optimal strategy in incomplete information scenario.

\section{Performance Evaluation}\label{PerE}
In this section, we conduct experiments to study the impact of  different distributions of parameters $s_{k}$ and $\lambda_{k}$ on server's strategy and cost. Further, we evaluate the performance of our mechanism in incomplete information scenario, compared with complete information scenario. Finally, we compare the training performance of our mechanism with two benchmarks mechanism on real datasets.
\subsection{Parameter Settings}
\textbf{Experimental environment.} We conduct experiments on the device equipped with Ubuntu 18.04.05, CUDA v12.0, GPU (Tesla P100-PCIE) and Intel(R) Xeon(R) CPU (E5-2678 v3).

\textbf{Datasets and models.} To gauge the effectiveness of our incentive mechanism, we consider image classification as the FL training task and conduct extensive evaluations with three widely-used real-world datasets for FL: MNIST \cite{lecun1998gradient}, CIFAR10 \cite{krizhevsky2009learning} and GTSRB \cite{gtsrb13}. We employ a multi-layer perception network consisting of a single hidden layer with 256 hidden units as the learning model for MNIST dataset. While we use LeNet \cite{lecun1998gradient}, which consists of two sets of convolution and pooling layers, then two fully-connected layers with ReLU activation, as the model trained on clients for the complex CIFAR10 dataset. The architecture of model for GTSRB dataset comprises two convolutional layers followed by max-pooling operations, culminating in two fully connected layers.

\textbf{Data synthesis by AIGC service.} Vision models such as diffusion model have demonstrated impressive capability in high-quality image synthesis \cite{ho2022classifier}. In this paper, we leverage a pre-trained diffusion model in \cite{ho2020denoising} to provision the AIGC service for the data synthesis of MNIST, CIFAR10 and GTSRB datasets.

\textbf{Parameter settings.} The values of $\rho$, $\mu$, and $\lambda_{k}$ are dictated by the specific loss function and dataset, which can be estimated within a concise FL training process empirically \cite{Adaptive19}. For the parameter estimation of MNIST dataset, we set $K = 10$ and $\psi=25$. Each client is randomly allocated with $5,000$ data samples under a uniform data distribution over 10 classes. With a learning rate of $\eta=0.01$ and global iteration number of $T=50$, we simulate the FL training process and obtain the estimated value for each parameter: $\rho=37.36$, $\mu=5.48$, and $\beta=0.57$. Similarly, we can obtain  $\rho=16.94$, $\mu=2.53$, and $\beta=0.28$ for CIFAR10 dataset by setting $K=10$ and $\psi=10$. Also, we have $\rho=12.49$, $\mu=3.40$, and $\beta=0.78$ for GTSRB dataset by setting $K=43$ and $\psi=10$.

\begin{table}[h]
	\setlength{\abovecaptionskip}{-0.02cm}
	\renewcommand{\arraystretch}{1.3}
	\caption{The range of $\lambda_{k}$ with $l$ classes for MNIST dataset.}
	\label{table_para1}
	\centering
	\scriptsize
	\begin{tabular}{c|c|c|c}
		\hline
		$l$ & $\lambda_{k}$ &$l$  & $\lambda_{k}$ \\
		\hline
		1  &$(0.8,3)$& 8& $(0.14,0.18)$\\
		\hline
		2,3  &$(0.42,0.8)$& 9& $(0.08,0.14)$\\
		\hline
		4,5  &$(0.32,0.42)$& 10& $(0,0.08)$\\
		\hline
		6,7  &$(0.18,0.32)$& & \\
		\hline
	\end{tabular}
\end{table}
\begin{table}[h]
	\setlength{\abovecaptionskip}{-0.03cm}
	\renewcommand{\arraystretch}{1.3}
	\caption{The range of $\lambda_{k}$ with $l$ classes for CIFAR10 dataset.}
	\label{table_para2}
	\centering
	\scriptsize
	\begin{tabular}{c|c|c|c}
		\hline
		$l$  &  $\lambda_{k}$ & $l$  &  $\lambda_{k}$ \\
		\hline
		1,2  &$(0.76,2)$& 6,7& $(0.35,0.46)$\\
		\hline
		3  &$(0.65,0.76)$& 8& $(0.23,0.35)$\\
		\hline
		4  &$(0.55,0.65)$& 9& $(0.15,0.23)$\\
		\hline
		5  &$(0.46,0.55)$& 10& $(0,0.15)$\\
		\hline
	\end{tabular}
\end{table}

For $g_{data}$ and $g_{diff}$ on MNIST datasets, we conduct a simple FL training process (e.g., training within 20 global iterations) for both local dataset and generated dataset where all data samples are generated by AIGC service. In order to evaluate the maximum value of the gradient and gradient difference, we simulate the highly non-IID scenario for the two datasets and simultaneously record the maximum gradient norm, and maximum gradient error between local and generated datasets. As a result, we obtain $g_{data}=\max_{t,k} \Vert \nabla F_{k}(\boldsymbol{w}^{t}; \mathcal{D}_{MNIST})\Vert=2.45$ and $g_{diff}=\max_{t,k} \Vert \nabla F_{k}(\boldsymbol{w}^{t}; \mathcal{D}_{MNIST})- \nabla F_{k}(\boldsymbol{w}^{t};\mathcal{D}_{AIGC})\Vert=1.05$ for MNIST dataset. Similarly, $g_{data}= 1.75$ and $g_{diff}= 0.54$ can be acquired for CIFAR10 dataset. In terms of GTSRB datasets, we have $g_{data}= 16.51$ and $g_{diff}= 12.07$.



As for the non-IID degree $\lambda_{k}$ which is dataset-specified, we estimate the rough range of $\lambda_{k}$ under different data partition cases. Specifically, for the data partition case where client $k$ possesses data samples from $l$ classes, we conduct a simple FL training process with $T = 20$ global iterations and record the maximum gradient error between the local gradient of client $k$ and the global gradient, i.e., $\lambda_{k}(l)=\max_{t}\Vert \nabla F_{k}(\boldsymbol{w}^{t})- \nabla F(\boldsymbol{w}^{t})\Vert$. Following this way, we are able to obtain the series values  $\lambda_{k}(l), k\in \{1,2,...,10\}$ and calculate the rough range of $\lambda_{k}$ for each data partition cases of MNIST and CIFAR10 datasets, as summarized in Table \ref{table_para1} and Table \ref{table_para2}.



\subsection{Discussion on AIGC-enhanced dataset}\label{PEdis}
In this paper, we employ the AIGC-enhanced dataset, which comprises a mixture of real-world and generated data samples for each client, i.e., retaining some local data samples while introducing generated ones. From a theoretical perspective, we have demonstrated that the data quality of the AIGC-enhanced dataset is superior to that of a dataset consisting of only generated samples (referred to as the AIGC-only dataset) in Section \ref{clients_att}. To further validate our conclusion, we have added experiments to evaluate the training performance of the AIGC-enhanced dataset on the MNIST and CIFAR10 datasets.

In our experiments, we set $K=10$ clients and $T=100$ global iterations for both MNIST and CIFAR10 dataset. Under identical training conditions, we compare training accuracy on original local datset, AIGC-enhanced datasets, and AIGC-only dataset, respectively. For instance, as shown in Fig. \ref{Mnist_200_1}, we randomly assign $200$ data samples from one class as local dataset for each client. Based on this, we construct AIGC-enhanced dataset (IID dataset) with $10$ classes for each client, comprising a mixture of real-world and generated data samples (discussed in Section II-A in revised paper). For comparison, we generate $200$ data samples with $10$ classes for each client as AIGC-only dataset (IID dataset). 

As illustrated in Fig. \ref{Mnist_200_1}, Fig. \ref{Mnist_200_5}, and Fig. \ref{Mnist_500_5}, the AIGC-enhanced dataset consistently achieves the highest accuracy across various settings. The poor performance of adoption AIGC-only dataset is attributed to the distribution differences between generated data samples (e.g., $\boldsymbol{p}_{a}$) and real-world data samples (e.g., $\boldsymbol{p}$).
On the MNIST dataset, when clients possess a small number of classes, the proportion of generated data samples is higher in the AIGC-enhanced dataset, resulting in similar performance compared with the AIGC-only dataset. 
Similarly, superior training accuracy of the AIGC-enhanced dataset on CIFAR10 datasets is evident in Fig. \ref{Cifar_500_3}, Fig. \ref{Cifar_500_5}, and Fig. \ref{Cifar_1000_5}.
Notably, the AIGC-enhanced dataset not only demonstrates excellent training accuracy but also significantly reduces the costs associated with the data generation process compared to the AIGC-only dataset. The results obtained from the MNIST and CIFAR10 datasets corroborate our theoretical findings and highlight the advantages of utilizing the AIGC-enhanced dataset.

\begin{figure*}[t]
	\begin{minipage}[t]{0.33\textwidth}
		\setlength{\abovecaptionskip}{-0.05cm}
		\centering
		\includegraphics[width=\linewidth]{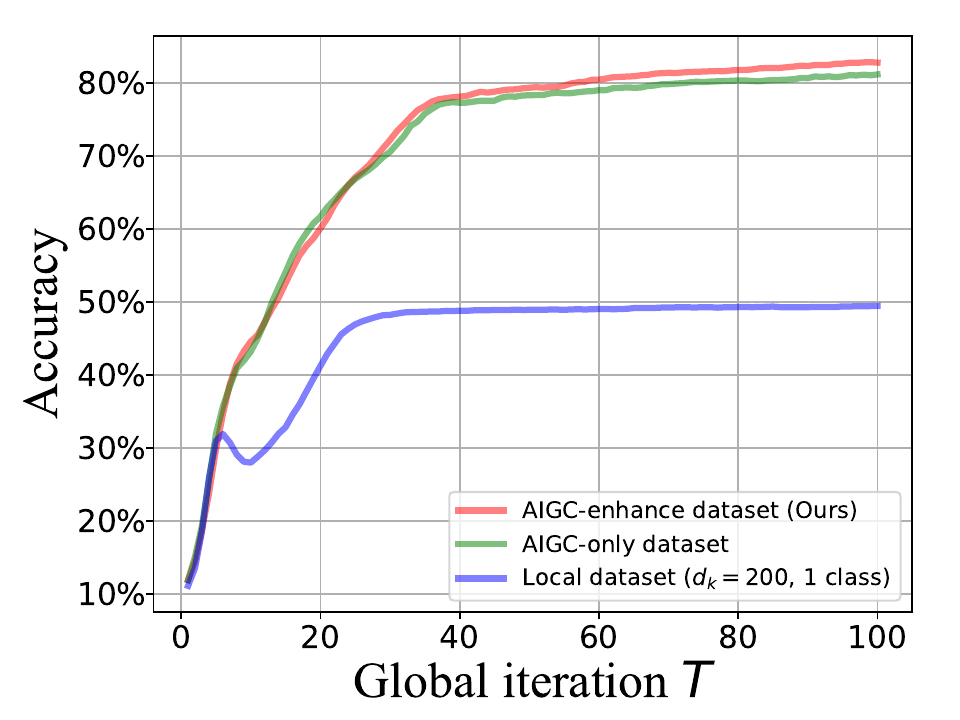}
		\caption{Accuracy of each generated datasets when local dataset with $d_{k}=200$ and $1$ class on MNIST dataset.}
		\label{Mnist_200_1}
	\end{minipage}
	\begin{minipage}[t]{0.33\textwidth}
		\setlength{\abovecaptionskip}{-0.05cm}
		\centering
		\includegraphics[width=\linewidth]{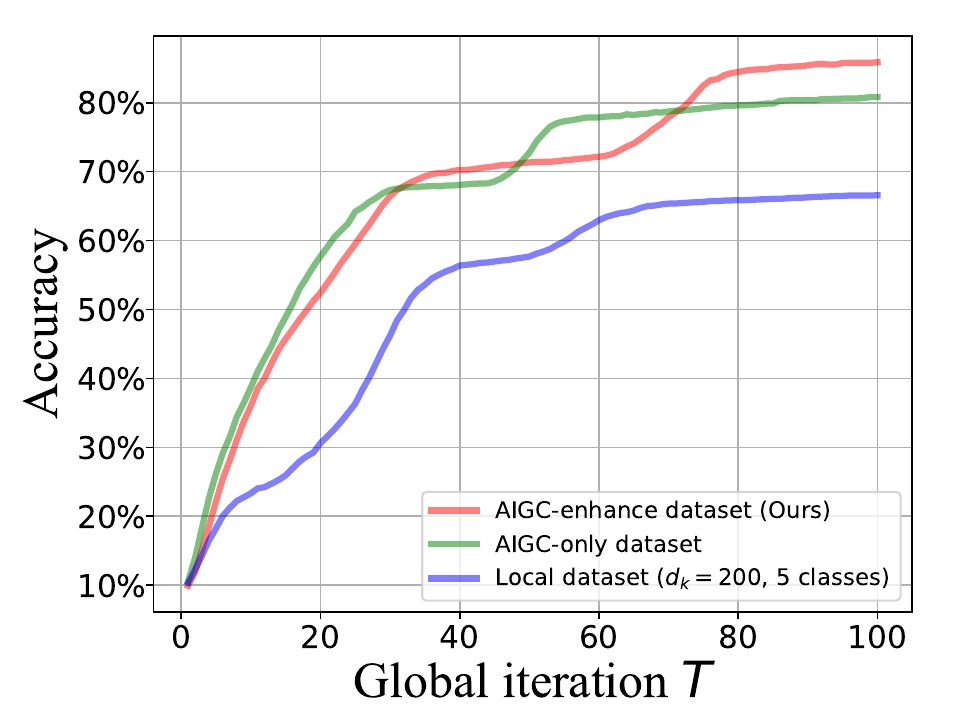}
		\caption{Accuracy of each generated datasets when local dataset with $d_{k}=200$ and $5$ classes on MNIST dataset.}
		\label{Mnist_200_5}
	\end{minipage}
	\begin{minipage}[t]{0.33\textwidth}
		\setlength{\abovecaptionskip}{-0.05cm}
		\centering
		\includegraphics[width=\linewidth]{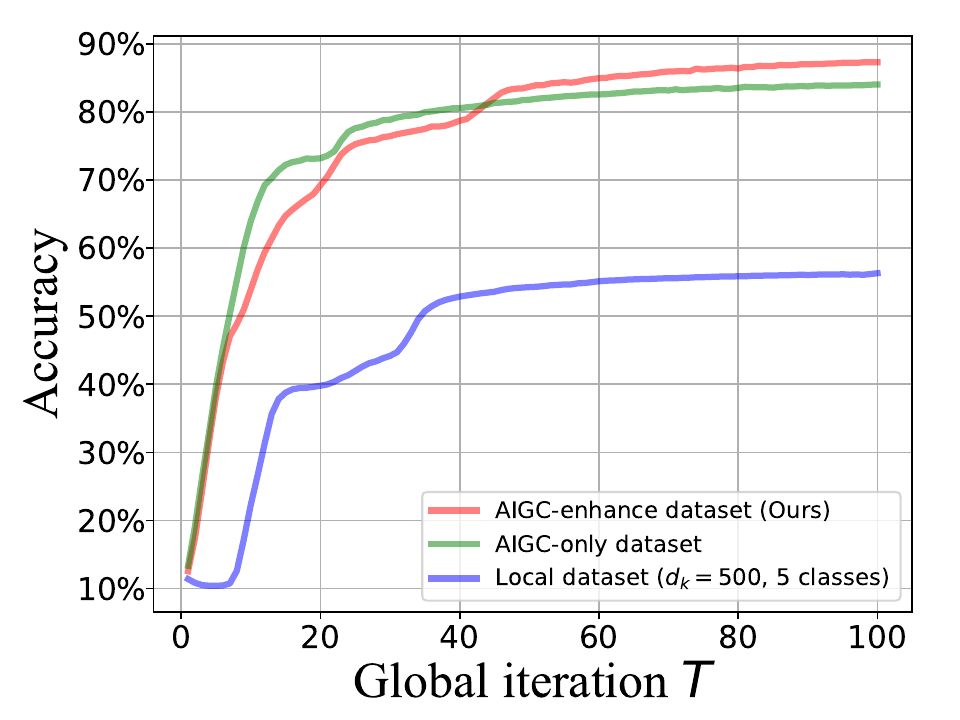}
		\caption{Accuracy of each generated datasets when local dataset with $d_{k}=500$ and $5$ classes on MNIST dataset.}
		\label{Mnist_500_5}
	\end{minipage}
	\vspace{-0.3cm}
\end{figure*}
\begin{figure*}[t]
	\begin{minipage}[t]{0.33\textwidth}
		\setlength{\abovecaptionskip}{-0.05cm}
		\centering
		\includegraphics[width=\linewidth]{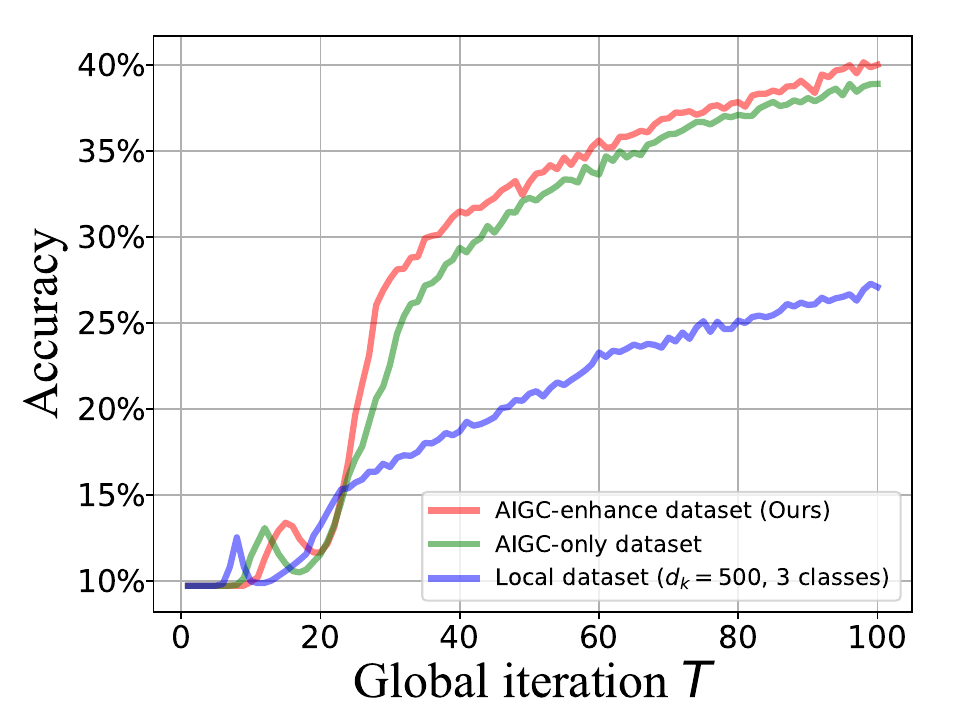}
		\caption{Accuracy of each generated dataset when local dataset with $d_{k}=500$ and $3$ classes on CIFAR10 dataset.}
		\label{Cifar_500_3}
	\end{minipage}
	\begin{minipage}[t]{0.33\textwidth}
		\setlength{\abovecaptionskip}{-0.05cm}
		\centering
		\includegraphics[width=\linewidth]{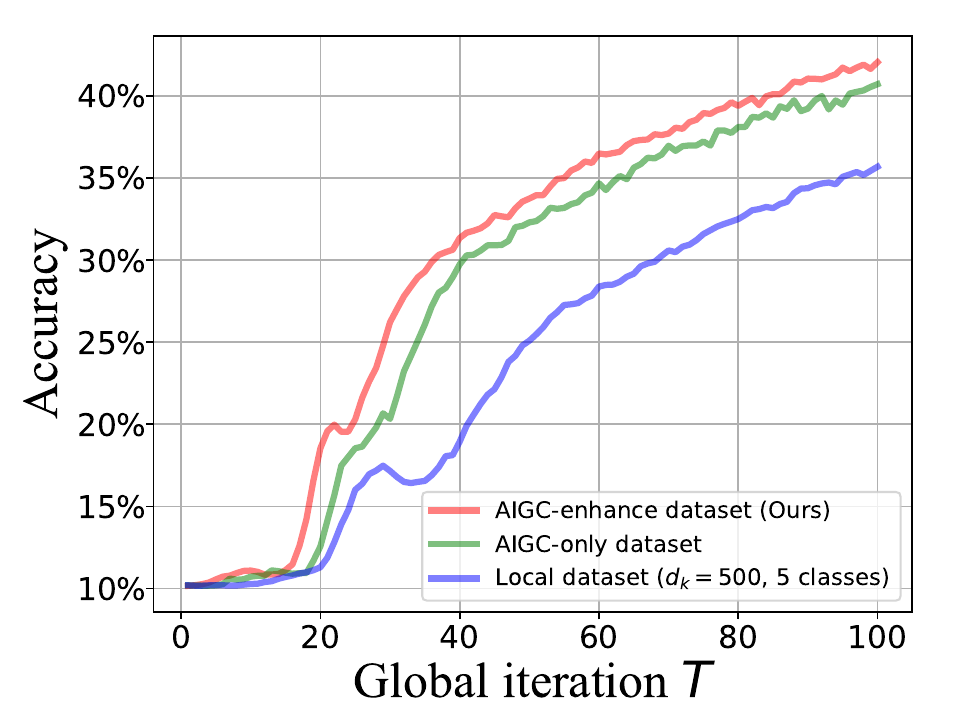}
		\caption{Accuracy of each generated dataset when local dataset with $d_{k}=500$ and $5$ classes on CIFAR10 dataset.}
		\label{Cifar_500_5}
	\end{minipage}
	\begin{minipage}[t]{0.33\textwidth}
		\setlength{\abovecaptionskip}{-0.05cm}
		\centering
		\includegraphics[width=\linewidth]{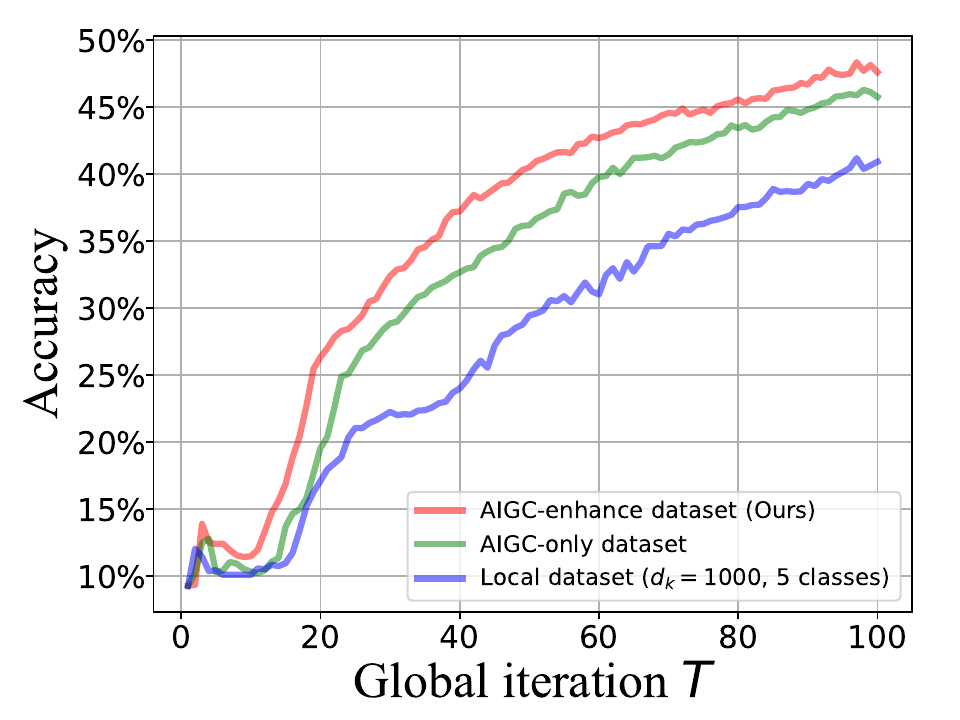}
		\caption{Accuracy of each generated dataset when local dataset with $d_{k}=1000$ and $5$ classes on CIFAR10 dataset.}
		\label{Cifar_1000_5}
	\end{minipage}
	\vspace{-0.3cm}
\end{figure*}

\begin{figure}[h]
	\centering
	\includegraphics[height=0.26\textwidth]{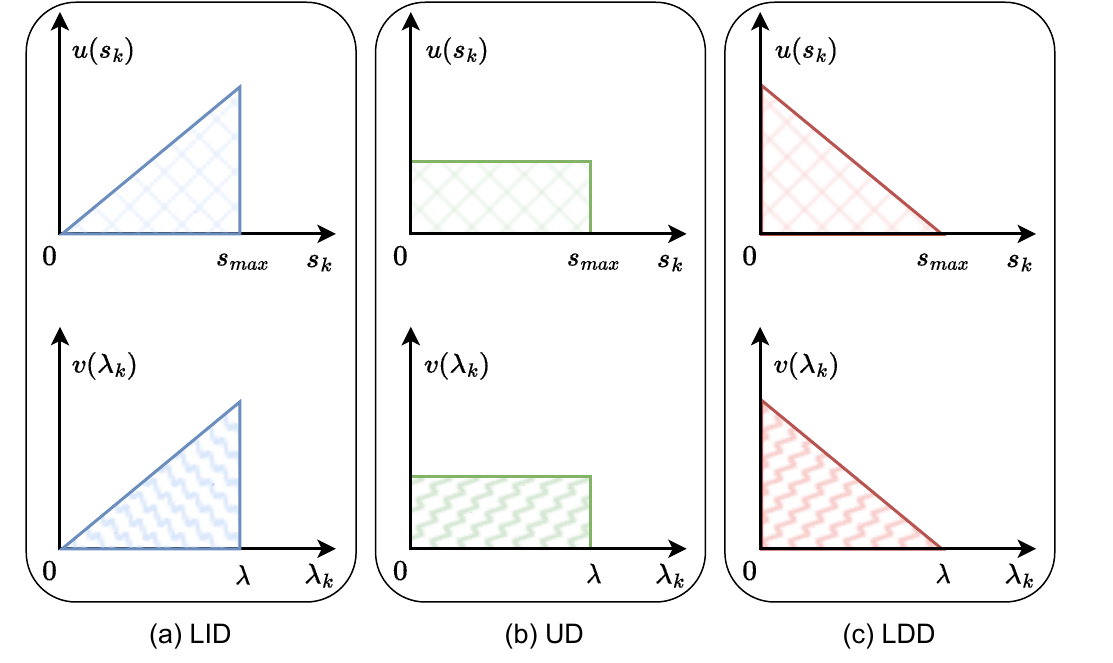}
	\caption{An illustration of three different kinds of distributions for $s_k$ and $\lambda_k$.}
	\label{fig_three_distribution}
\end{figure}

\subsection{Impact of Distributions of $s_{k}$ and $\lambda_{k}$ on Server's Strategy}\label{dis_section}
For the ease of investigating the impact of the distribution of client's attributes (e.g., $s_{k}$ and $\lambda_{k}$) on server's strategy, we consider the following three simple distributions for $s_{k}\in (0,s_{max}]$ and $\lambda_{k}\in (0,\lambda)$: 
\begin{itemize}
	\item Linear increasing distribution (LID): probability density functions of $s_{k}$ and $\lambda_{k}$ are set to $u(s_{k})=\frac{2}{s_{max}^{2}}s_{k}$ and $v(\lambda_{k})=\frac{2}{\lambda^{2}}\lambda_{k}$, respectively.
	\item Uniform distribution (UD): $u(s_{k})=\frac{1}{s_{max}}$ and $v(\lambda_{k})=\frac{1}{\lambda}$ are adopted for $s_{k}$ and $\lambda_{k}$, respectively.
	\item Linear decreasing distribution (LDD): probability density functions of $u(s_{k})=-\frac{2}{s_{max}^{2}}s_{k}+\frac{2}{s_{max}}$ and $v(\lambda_{k})=-\frac{2}{\lambda^{2}}\lambda_{k}+\frac{2}{\lambda}$ are utilized for $s_{k}$ and $\lambda_{k}$, respectively.
\end{itemize}

Fig. \ref{fig_three_distribution} gives an intuitive illustration of the three different kinds of distributions. In the experiments, we set $\lambda=3$ and $s_{max}=0.1$ based on the parameter evaluation results on MNIST dataset. For the parameter setting of the server, we set $\gamma_{1}=10^{5}$ and $\gamma_{2}=1$. The payment of using one generated data sample which is obtained by leveraging AIGC service is set as $s_{AI}=0.5$. For ease of comparison, datasize for each client $k$ is set to be $d_k \in (100, 300)$ for all the three cases with different distributions of $s_k$ and $\lambda_{k}$.

\begin{figure*}[t]
	\begin{minipage}[t]{0.33\textwidth}
		\setlength{\abovecaptionskip}{-0.05cm}
		\centering
		\includegraphics[width=0.93\linewidth]{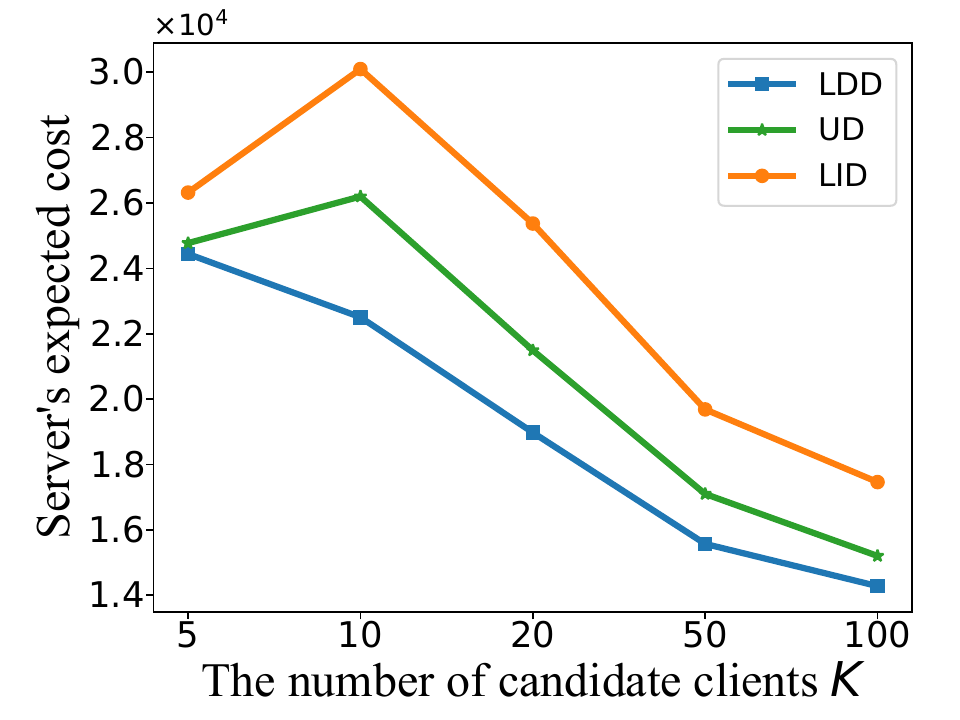}
		\caption{The server's expected cost for each distribution under different number of candidate clients $K$.}
		\label{2_1_cost}
	\end{minipage}
	\begin{minipage}[t]{0.33\textwidth}
		\setlength{\abovecaptionskip}{-0.05cm}
		\centering
		\includegraphics[width=0.93\linewidth]{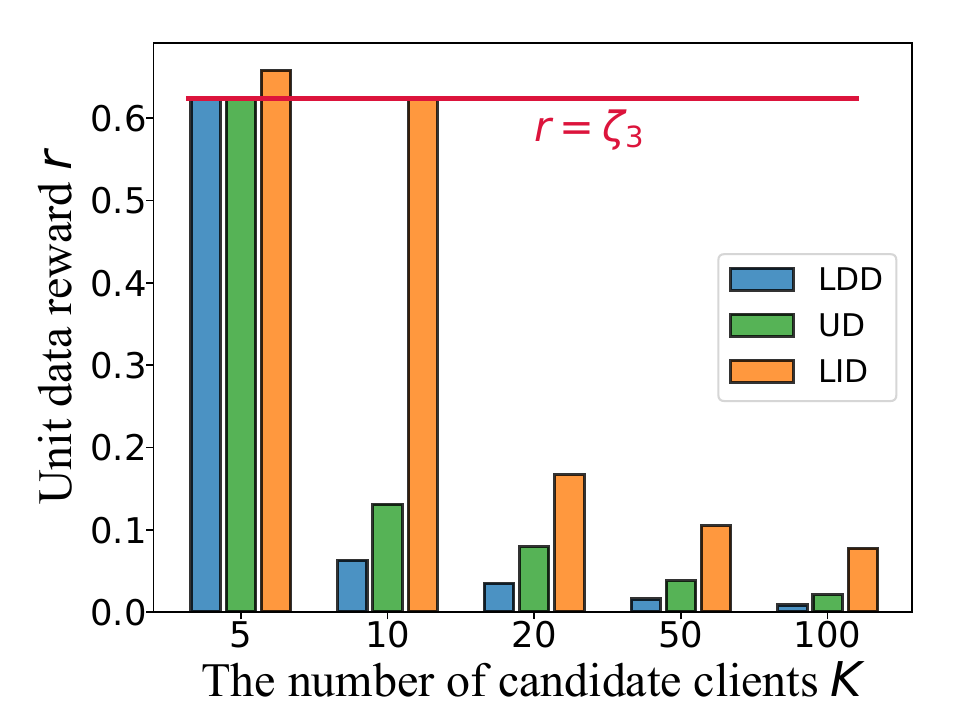}
		\caption{The unit data reward benchmark for each distribution under different number of candidate clients $K$.}
		\label{2_1_r}
	\end{minipage}
	\begin{minipage}[t]{0.33\textwidth}
		\setlength{\abovecaptionskip}{-0.05cm}
		\centering
		\includegraphics[width=0.93\linewidth]{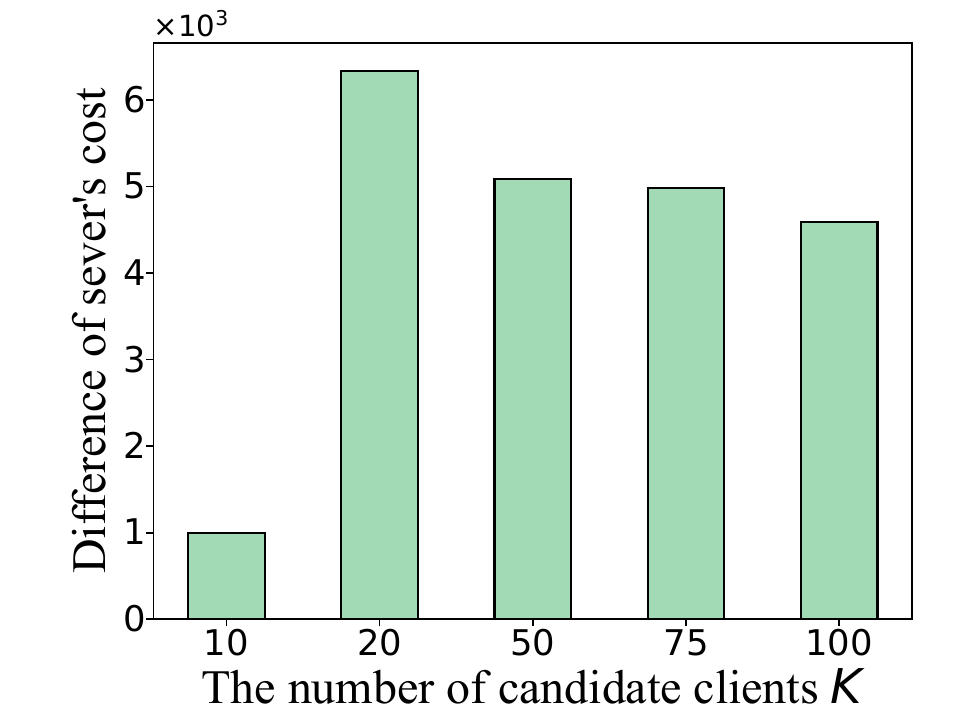}
		\caption{The difference of server's cost between complete and incomplete information scenarios.}
		\label{31_server_cost}
	\end{minipage}
	\vspace{-0.3cm}
\end{figure*}

We conduct experiments under three kinds of distributions of $s_k$ and $\lambda_k$ with different numbers of $K$, and show the changes of server's cost and unit data reward benchmark $r$. To reduce the error caused by randomness, we repeat the calculation process under each distribution case $10$ times and compute the average of the results. As depicted in Fig. \ref{2_1_cost} and Fig. \ref{2_1_r}, our incentive mechanism achieves the lowest server's expected cost in LDD, since the server in LDD scenario can easily recruit high-quality clients with  lower payment cost and hence tends to publish a lower $r$ than UD and LID scenarios. In contrast, the server has to publish the highest uniform unit data reward benchmark $r$ in LID scenario for fear of the case that there are no clients recruited in FL due to a low unit data reward benchmark. With the increase of the number of candidate clients $K$, the number of high-quality clients in the candidate client set increase accordingly. Due to the lower risk of not having enough high-quality clients available for recruitment, the server tends to publish a lower $r$. Besides, the difference of the server's expected costs among the three distributions decrease with the increase of the number of candidate clients when $K \geq 10$.

Interestingly, as shown in Fig. \ref{2_1_r}, the sever adopts $r\geq \zeta_{3}$ as the unit data reward benchmark when $K = 5$. The reason is twofold: 1) with a small number of candidate clients, the server faces the risk of not recruiting enough clients for FL model training if an adequately high reward is not offered; 2) since the number of candidate clients is small, the server is inclined to incentivize clients to participate in FL with their AIGC-enhanced datasets. This implies that introducing generated data in FL possesses the capacity to significantly reduce server expenses, especially in scenarios characterized by limited candidate clients with lower data quality.


\begin{table}[t]
	\setlength{\abovecaptionskip}{-0.02cm}
	\renewcommand{\arraystretch}{1.3}
	\caption{The average unit data reward benchmark $r$ for different information scenarios.}
	\label{table_r}
	\centering
	\scriptsize
	\begin{tabular}{c|c|c|c|c|c}
		\hline
		& $K=10$ & $K=20$  & $K=50$ & $K=75$&$K=100$\\
		\hline
		Complete  &$0.3910$& $0.2705$& $0.0209$& $  0.0154$&$0.0272$ \\
		\hline
		Incomplete  &$\boldsymbol{0.6238}$& $0.0812$&$0.0395$ &$0.0280$ &$0.0218$\\
		\hline
	\end{tabular}
\end{table}
\subsection{Difference of Server's Cost between Complete and Incomplete Information Scenarios}
To study the difference of server's cost under complete and incomplete information scenarios, the datasize of each client is set to be $d_k = 30$ to remove the impact of client's datasize. For each client $k$, we set $s_{k} \sim Uniform(0,0.1)$ and $\lambda_{k}\sim Uniform(0, 3)$ based on the parameter estimation results of MNIST dataset. For the parameter setting of the server, we set $\gamma_{1}=0.8\cdot 10^{5}$ and $\gamma_{2}=1$. The payment of using one data sample generated by AIGC service is set as  $s_{AI}=0.8$.

Fig. \ref{31_server_cost} and Table \ref{table_r} show the difference of server's cost between complete and incomplete information scenario and the corresponding averaged unit data reward benchmark $r$ for the two cases. To reduce the error caused by randomness, we conduct experiments $10$ times and average the results for the two cases. As depicted in Fig. \ref{31_server_cost}, when $K \geq 20$, the difference of server's costs between complete and incomplete information scenarios decreases with the increase of the number of candidate clients $K$. When $K = 10$, the difference of server's costs between the two information scenarios is the lowest, since the server recruits clients with AIGC-enhanced datasets by setting $r=\zeta_{3}=0.6238$ in incomplete information scenario. In essence, in instances where the amount of candidate clients is limited, the introduction of AIGC-enhanced datasets serves to alleviate server's costs stemming from information asymmetry. However, as the quantity of candidate clients escalates, there's a rise in the presence of high-quality clients (exhibiting lower $\lambda_{k}$ and lower $s_{k}$). Consequently, the server's reliance on generated data diminishes. Conversely, offering a uniformly increased $r$ to a larger number of clients which use AIGC-enhanced datasets will result in heightened payments to clients.

\begin{figure*}[t]
	\begin{minipage}[t]{0.33\textwidth}
		\setlength{\abovecaptionskip}{-0.05cm}
		\centering
		\includegraphics[width=\linewidth]{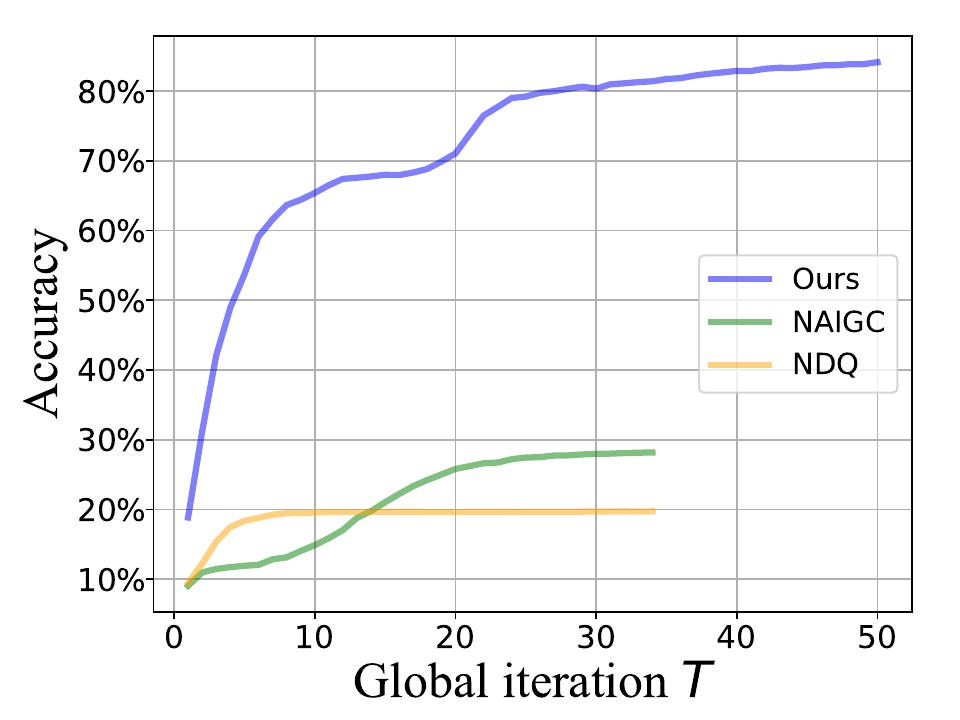}
		\caption{The accuracy of each mechanism on MNIST dataset.}
		\label{Mnist_acc}
	\end{minipage}
	\begin{minipage}[t]{0.33\textwidth}
		\setlength{\abovecaptionskip}{-0.05cm}
		\centering
		\includegraphics[width=\linewidth]{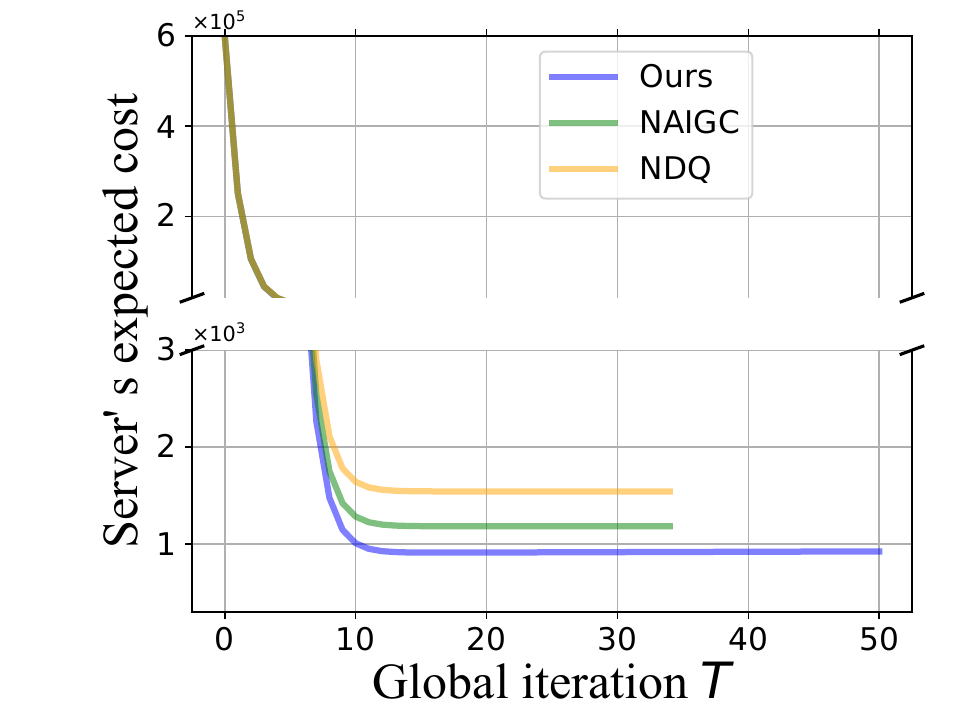}
		\caption{The server's expected cost of each mechanism on MNIST dataset.}
		\label{Mnist_cost}
	\end{minipage}
	\begin{minipage}[t]{0.33\textwidth}
		\setlength{\abovecaptionskip}{-0.05cm}
		\centering
		\includegraphics[width=\linewidth]{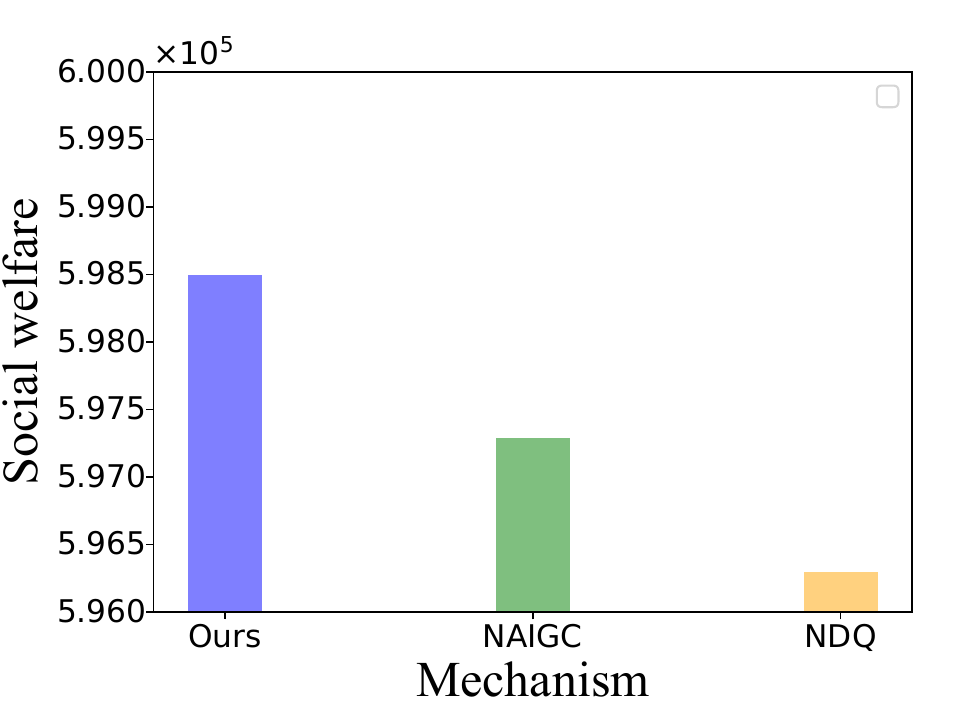}
		\caption{The social welfare of each mechanism on MNIST dataset.}
		\label{minist_social}
	\end{minipage}
	\vspace{-0.3cm}
\end{figure*}
\begin{figure*}[t]
	\begin{minipage}[t]{0.33\textwidth}
		\setlength{\abovecaptionskip}{-0.05cm}
		\centering
		\includegraphics[width=\linewidth]{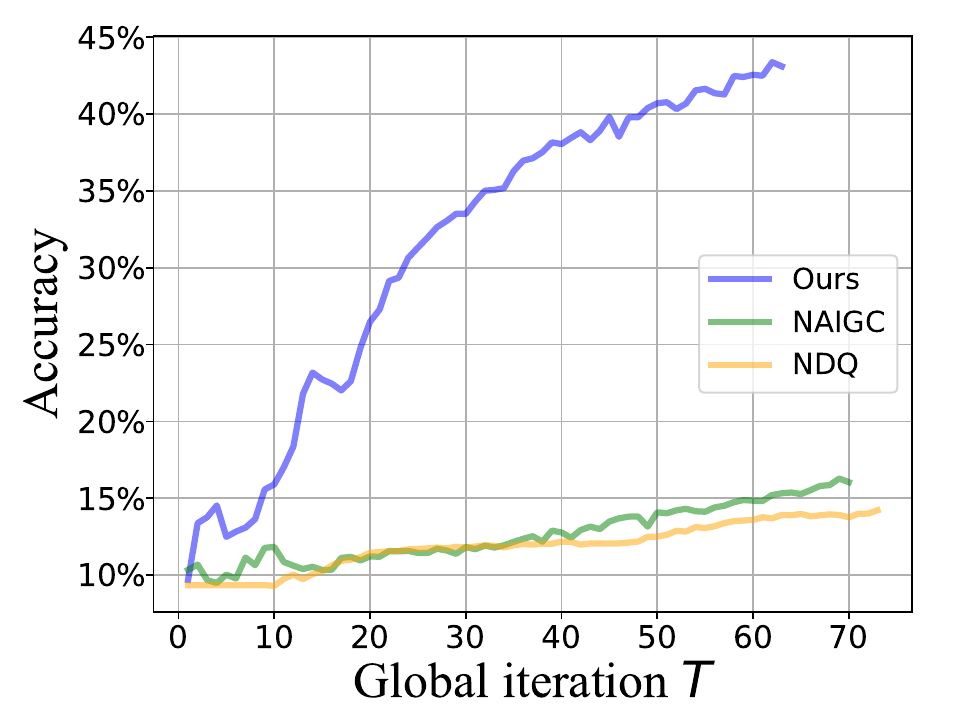}
		\caption{The accuracy of each mechanism on CIFAR10 dataset.}
		\label{cifar_acc}
	\end{minipage}
	\begin{minipage}[t]{0.33\textwidth}
		\setlength{\abovecaptionskip}{-0.05cm}
		\centering
		\includegraphics[width=\linewidth]{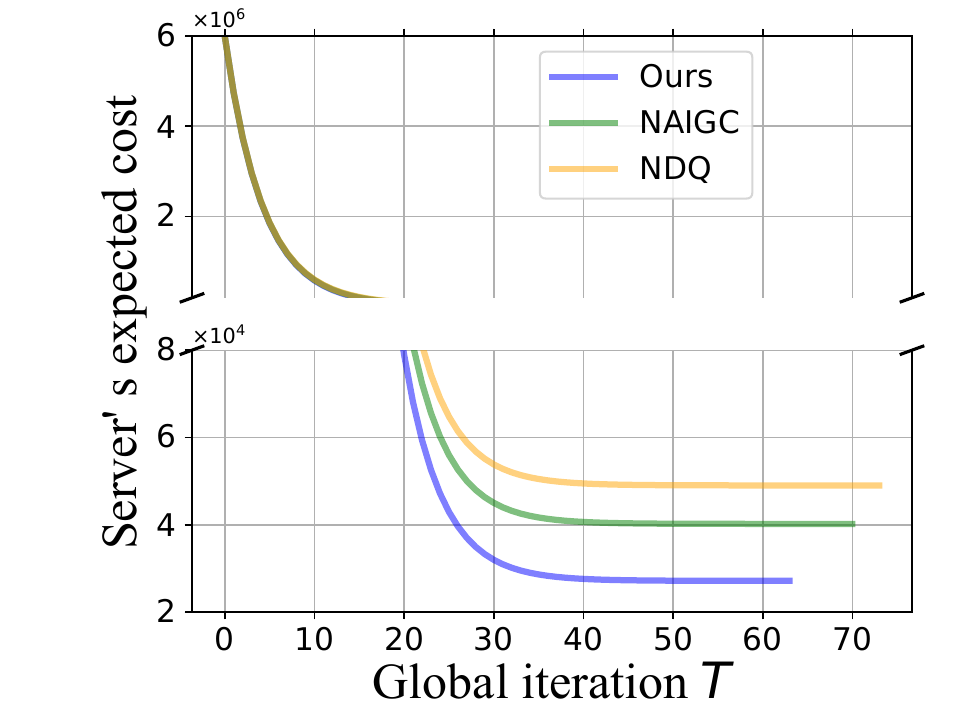}
		\caption{The server's expected cost of each mechanism on CIFAR10 dataset.}
		\label{cifar_cost}
	\end{minipage}
	\begin{minipage}[t]{0.33\textwidth}
		\setlength{\abovecaptionskip}{-0.05cm}
		\centering
		\includegraphics[width=\linewidth]{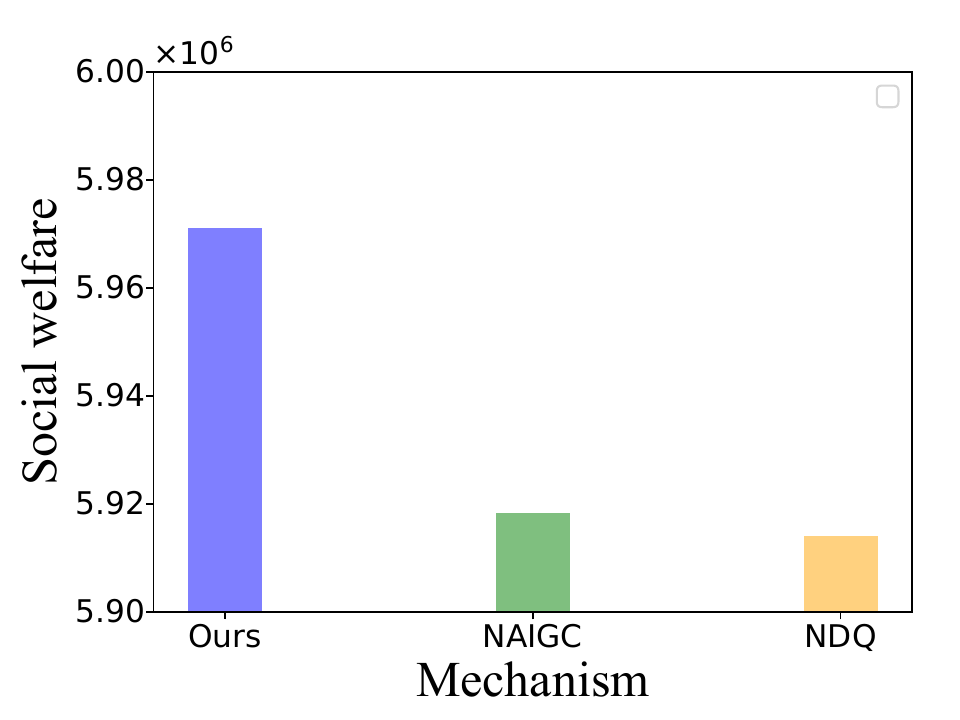}
		\caption{The social welfare of each mechanism on CIFAR10 dataset.}
		\label{cifar_social}
	\end{minipage}
	\vspace{-0.3cm}
\end{figure*}
\begin{figure*}[t]
	\begin{minipage}[t]{0.33\textwidth}
		\setlength{\abovecaptionskip}{-0.05cm}
		\centering
		\includegraphics[width=\linewidth]{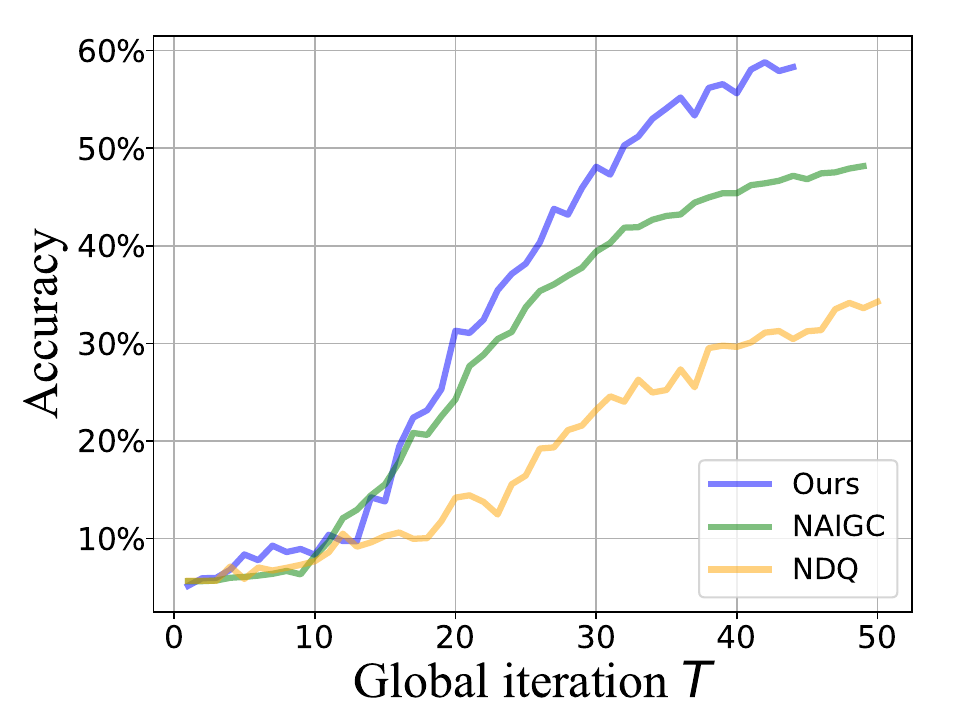}
		\caption{The accuracy of each mechanism on GTSRB dataset.}
		\label{gtsrb_acc}
	\end{minipage}
	\begin{minipage}[t]{0.33\textwidth}
		\setlength{\abovecaptionskip}{-0.05cm}
		\centering
		\includegraphics[width=\linewidth]{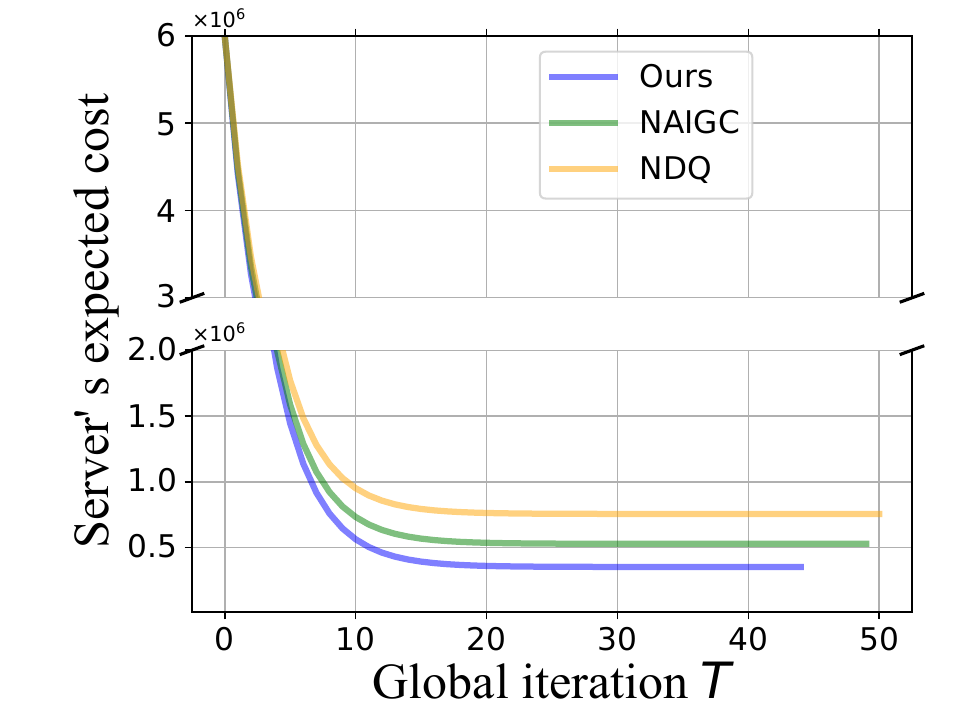}
		\caption{The server's expected cost of each mechanism on GTSRB dataset.}
		\label{gtsrb_cost}
	\end{minipage}
	\begin{minipage}[t]{0.33\textwidth}
		\setlength{\abovecaptionskip}{-0.05cm}
		\centering
		\includegraphics[width=\linewidth]{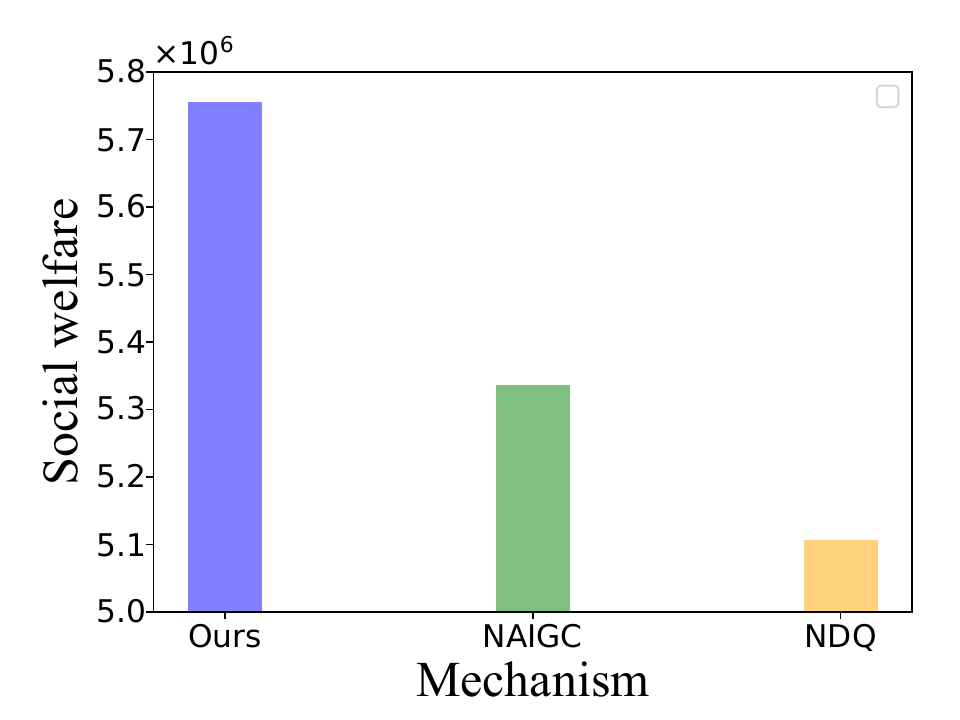}
		\caption{The social welfare of each mechanism on GTSRB dataset.}
		\label{gtsrb_social}
	\end{minipage}
	\vspace{-0.3cm}
\end{figure*}

\subsection{Training Performance Comparison}
We compare the training performance of our proposed mechanism with the following two benchmarks:

\begin{itemize}
	\item \textit{No AIGC (NAIGC)}: The server considers data quality of clients while neglecting that clients may choose to obtain a high-quality dataset by using AIGC service.
	\item \textit{No data quality (NDQ)}: The server neglects the data quality of the clients and assigns payment $rd_{k}$ to each client $k$.	
\end{itemize}

We consider FL scenarios with $K=30$ clients on MNIST, CIFAR10 and GTSRB datasets.
For the experiments on MNIST dataset, the server's parameters are set as $\gamma_{1}=10^{5}$ and $\gamma_{2}=0.01$. We set $\lambda_{k}\sim uniform(0,3)$ and $s_{k}\sim uniform(0,10^{-3})$ for each client $k$. Based on the value of $\lambda_{k}$, we can find the number of classes $l$ for the data samples of client $k$ from Table \ref{table_para1}, and hence assign $d_k \in  [100,300]$ data samples from $l$ classes randomly for each client $k$. As for CIFAR10 dataset, we set $\gamma_{1}=10^{6}$ and $\gamma_{2}=0.01$ for the server, and set $\lambda_{k}\sim uniform(0,2)$ and $s_{k}\sim uniform(0,5\cdot 10^{-3})$ for each client $k$. According to Table \ref{table_para2}, we can find the number of classes $l$ for the data samples of client $k$, and then assign $d_k \in  [300,500]$ data samples from $l$ classes randomly for each client $k$ based on the value of $\lambda_{k}$. As for GTSRB dataset, we set $\gamma_{1}=10^{6}$ and $\gamma_{2}=0.01$ for the server, and set $\lambda_{k}\sim uniform(0, 10)$ and $s_{k}\sim uniform(0,5\cdot 10^{-3})$ for each client $k$. In light of data samples with large number of classes (e.g., $43$ classes) on GTSRB dataset, to simulate the uniform distribution of $\lambda_{k}$, we assign $d_k \in [1000,2000]$ data samples with a random number of classes $l\in [4, 43]$ for each client $k$. The unit cost or payment for one generated data sample is set to $s_{AI}=0.01$ for MNIST, CIFAR10 and GTSRB dataset.

To show the performance of our proposed mechanism, we evaluate the server's expected cost, training accuracy and social welfare on MNIST, CIFAR10 and GTSRB dataset, respectively. Here, the social welfare is defined as the sum of server's cost reduction and the total utilities of all clients. As shown in Fig. \ref{Mnist_acc} and Fig. \ref{Mnist_cost}, our mechanism achieves the highest accuracy (i.e., $84.15\%$) and the lowest server's cost compared with NAIGC and NDQ methods on MNIST dataset. This is because that NAIGC only considers the data quality of the local data for the clients, without introducing the generated data for data quality improvement, thus resulting in a degraded model accuracy. Also, as NDQ selects clients based on data quantity while neglecting the heterogeneity of data quality among clients, NDQ is inclined to select clients with lower data quality and hence achieves the lowest training accuracy. By introducing data generation capability and data quality awareness in our mechanism, clients can utilize data generation technique to enhance data quality, enabling FL model to converge to superior performance compared to NAIGC and NDQ, while also achieving the highest social welfare.

Similarly, our mechanism surpasses NDQ and NAIGC on the CIFAR10 dataset, achieving the highest training accuracy and social welfare while reducing server costs by 44.61\% and 32.45\%, respectively, as shown in Fig. \ref{cifar_acc}, Fig. \ref{cifar_cost}, and Fig. \ref{cifar_social}. In presence of dataset with more classes, such as GTSRB dataset, our mechanism still exhibits superior performance in terms of training accuracy and social welfare and expected server's cost, as illustrated in Fig.
\ref{gtsrb_acc}, Fig. \ref{gtsrb_cost} and Fig.\ref{gtsrb_social}. Notably, our proposed mechanism achieves cost reductions of 53.34\% and 33.29\% compared to NDQ and NAIGC, respectively. These results underscore the robustness of our proposed mechanism across diverse datasets.

\section{Related Work}
A plethora of studies on federated learning concentrate on improving training efficiency and final model performance \cite{kairouz2021advances, lee2020accurate}. For instance, Jeong et al. devise federated augmentations approach to rectify non-IID dataset for performance improvement \cite{jeong2018communication}. 
By leveraging AIGC service, clients in FL can conduct data synthesis to mitigate data heterogeneity issue \cite{li2023filling}. However, most of the results are derived under an optimistic assumption that clients participate in FL voluntarily, and adopt AIGC services to generate data unconditionally, which may be unrealistic without proper incentives. 

Considering the data-computing cost and data generation cost (incurred by adopting AIGC service) of clients, incentive mechanism design is necessary for the server to compensate the cost of clients reasonably. However, existing works on incentive mechanism design for FL exhibit the following limitations. First, most of them considers only one or two dimensions of clients' information (e.g., data-computing cost, data amount, etc.) for contribution evaluation \cite{le2021incentive, zhang2022robust, wang2023incentive}, which may not be directly applicable in realistic FL scenarios with data and resource heterogeneity. Second, few studies consider to capture the relationship between the quality of local model updates (which is related to local data distribution) and the global learning performance \cite{pandey2020crowdsourcing, zhan2020learning,zeng2020fmore}. Third, most existing incentive mechanisms assume that the server is aware of all clients' attributes, which is unrealistic in practice \cite{Crow2019}, \cite{Auction21}. Fourth, faced the temptation of being rewarded, rational clients may optionally generate a high-quality data to participate in FL with a higher reward \cite{li2023filling}, which further complicates the incentive mechanism design due to the complex client behaviors and is less understood in existing studies.

There exist some emerging works attempt to deal with the aforementioned issues in different manners. For example, the authors study incentive mechanism with multi-dimensional clients' private information under different levels of information
asymmetry by contract theory \cite{ding2020optimal, ding2020incentive, zeng2020fmore}. The author propose a multi-dimensional procurement auction for incentive mechanism in FL based on auction analysis framework. Along a different line, in this paper we propose a lightweight data quality-aware incentive mechanism for AIGC-empowered FL.

\section{Conclusion}
In this paper, we propose a data quality-aware incentive mechanism to encourage clients' participation in AIGC-empowered FL scenario. With rigorous analysis of convergence performance of FL model trained using a blend of real-world and generated data samples, we derive the optimal server's incentive strategies both in complete and incomplete information scenarios. Extensive experimental results demonstrate that introducing AIGC service for FL scenarios enables significant cost reduction for the server.

\bibliographystyle{unsrt}
\bibliography{reference} 

\begin{thebibliography}{10}

\bibitem{nguyen20216g}
Dinh~C Nguyen, Ming Ding, Pubudu~N Pathirana, Aruna Seneviratne, Jun Li, Dusit
  Niyato, Octavia Dobre, and H~Vincent Poor.
\newblock 6g internet of things: A comprehensive survey.
\newblock {\em IEEE Internet of Things Journal}, 9(1):359--383, 2021.

\bibitem{zhou2019edge}
Zhi Zhou, Xu~Chen, En~Li, Liekang Zeng, Ke~Luo, and Junshan Zhang.
\newblock Edge intelligence: Paving the last mile of artificial intelligence
  with edge computing.
\newblock {\em Proceedings of the IEEE}, 107(8):1738--1762, 2019.

\bibitem{voigt2017eu}
Paul Voigt and Axel Von~dem Bussche.
\newblock The eu general data protection regulation (gdpr).
\newblock {\em A Practical Guide, 1st Ed., Cham: Springer International
  Publishing}, 2017.

\bibitem{mcmahan2017communication}
Brendan McMahan, Eider Moore, Daniel Ramage, Seth Hampson, and Blaise~Aguera
  y~Arcas.
\newblock Communication-efficient learning of deep networks from decentralized
  data.
\newblock In {\em Artificial Intelligence and Statistics}, pages 1273--1282.
  PMLR, 2017.

\bibitem{shayan2018biscotti}
Muhammad Shayan, Clement Fung, Chris~JM Yoon, and Ivan Beschastnikh.
\newblock Biscotti: A ledger for private and secure peer-to-peer machine
  learning.
\newblock {\em arXiv preprint arXiv:1811.09904}, 2018.

\bibitem{yu2020fairness}
Han Yu, Zelei Liu, Yang Liu, Tianjian Chen, Mingshu Cong, Xi~Weng, Dusit
  Niyato, and Qiang Yang.
\newblock A fairness-aware incentive scheme for federated learning.
\newblock In {\em Proceedings of the AAAI/ACM Conference on AI, Ethics, and
  Society}, pages 393--399, 2020.

\bibitem{21incentivetmc}
Tra~Huong Thi~Le, Nguyen~H. Tran, Yan~Kyaw Tun, Minh N.~H. Nguyen, Shashi~Raj
  Pandey, Zhu Han, and Choong~Seon Hong.
\newblock An incentive mechanism for federated learning in wireless cellular
  networks: An auction approach.
\newblock {\em IEEE Transactions on Wireless Communications}, 20(8):4874--4887,
  2021.

\bibitem{24incentive}
Miao Hu, Wenzhuo Yang, Zhenxiao Luo, Xuezheng Liu, Yipeng Zhou, Xu~Chen, and
  Di~Wu.
\newblock Autofl: A bayesian game approach for autonomous client participation
  in federated edge learning.
\newblock {\em IEEE Transactions on Mobile Computing}, 23(1):194--208, 2024.

\bibitem{pang2022incentive}
Jinlong Pang, Jieling Yu, Ruiting Zhou, and John~CS Lui.
\newblock An incentive auction for heterogeneous client selection in federated
  learning.
\newblock {\em IEEE Transactions on Mobile Computing}, 2022.

\bibitem{li2022federated}
Qinbin Li, Yiqun Diao, Quan Chen, and Bingsheng He.
\newblock Federated learning on non-iid data silos: An experimental study.
\newblock In {\em 2022 IEEE 38th International Conference on Data Engineering
  (ICDE)}, pages 965--978. IEEE, 2022.

\bibitem{20vehicles}
Yifang Ma, Zhenyu Wang, Hong Yang, and Lin Yang.
\newblock Artificial intelligence applications in the development of autonomous
  vehicles: A survey.
\newblock {\em IEEE/CAA Journal of Automatica Sinica}, 7(2):315--329, 2020.

\bibitem{24Synve}
Zhihang Song, Zimin He, Xingyu Li, Qiming Ma, Ruibo Ming, Zhiqi Mao, Huaxin
  Pei, Lihui Peng, Jianming Hu, Danya Yao, and Yi~Zhang.
\newblock Synthetic datasets for autonomous driving: A survey.
\newblock {\em IEEE Transactions on Intelligent Vehicles}, 9(1):1847--1864,
  2024.

\bibitem{wang2023applications}
Tongnian Wang, Yan Du, Yanmin Gong, Kim-Kwang~Raymond Choo, and Yuanxiong Guo.
\newblock Applications of federated learning in mobile health: scoping review.
\newblock {\em Journal of Medical Internet Research}, 25:e43006, 2023.

\bibitem{du2023enabling}
Hongyang Du, Zonghang Li, Dusit Niyato, Jiawen Kang, Zehui Xiong, Dong~In Kim,
  et~al.
\newblock Enabling ai-generated content (aigc) services in wireless edge
  networks.
\newblock {\em arXiv preprint arXiv:2301.03220}, 2023.

\bibitem{jeong2018communication}
Eunjeong Jeong, Seungeun Oh, Hyesung Kim, Jihong Park, Mehdi Bennis, and
  Seong-Lyun Kim.
\newblock Communication-efficient on-device machine learning: Federated
  distillation and augmentation under non-iid private data.
\newblock {\em arXiv preprint arXiv:1811.11479}, 2018.

\bibitem{ho2022classifier}
Jonathan Ho and Tim Salimans.
\newblock Classifier-free diffusion guidance.
\newblock {\em arXiv preprint arXiv:2207.12598}, 2022.

\bibitem{li2023filling}
Peichun Li, Hanwen Zhang, Yuan Wu, Liping Qian, Rong Yu, Dusit Niyato, and
  Xuemin Shen.
\newblock Filling the missing: Exploring generative ai for enhanced federated
  learning over heterogeneous mobile edge devices.
\newblock {\em IEEE Transactions on Mobile Computing}, 2024.

\bibitem{24Fedmatch}
Zijian Li, Yuchang Sun, Jiawei Shao, Yuyi Mao, Jessie~Hui Wang, and Jun Zhang.
\newblock Feature matching data synthesis for non-iid federated learning.
\newblock {\em IEEE Transactions on Mobile Computing}, pages 1--16, 2024.

\bibitem{nie2018stackelberg}
Jiangtian Nie, Jun Luo, Zehui Xiong, Dusit Niyato, and Ping Wang.
\newblock A stackelberg game approach toward socially-aware incentive
  mechanisms for mobile crowdsensing.
\newblock {\em IEEE Transactions on Wireless Communications}, 18(1):724--738,
  2018.

\bibitem{edirimannage2023qarma}
Shehan Edirimannage, Charitha Elvitigala, Ibrahim Khalil, Primal Wijesekera,
  and Xun Yi.
\newblock Qarma-fl: Quality-aware robust model aggregation for mobile
  crowdsourcing.
\newblock {\em IEEE Internet of Things Journal}, 2023.

\bibitem{2018non}
Zhao Yue, Li~Meng, Lai Liangzhen, Suda Naveen, Civin Damon, and Chandra Vikas.
\newblock Federated learning with non-iid data.
\newblock {\em arXiv preprint arXiv:1806.00582}, 2018.

\bibitem{ho2020denoising}
Jonathan Ho, Ajay Jain, and Pieter Abbeel.
\newblock Denoising diffusion probabilistic models.
\newblock {\em Advances in neural information processing systems},
  33:6840--6851, 2020.

\bibitem{bourtoule2021machine}
Lucas Bourtoule, Varun Chandrasekaran, Christopher~A Choquette-Choo, Hengrui
  Jia, Adelin Travers, Baiwu Zhang, David Lie, and Nicolas Papernot.
\newblock Machine unlearning.
\newblock In {\em 2021 IEEE Symposium on Security and Privacy (SP)}, pages
  141--159. IEEE, 2021.

\bibitem{ding2020optimal}
Ningning Ding, Zhixuan Fang, and Jianwei Huang.
\newblock Optimal contract design for efficient federated learning with
  multi-dimensional private information.
\newblock {\em IEEE Journal on Selected Areas in Communications},
  39(1):186--200, 2020.

\bibitem{tran2019federated}
Nguyen~H Tran, Wei Bao, Albert Zomaya, Minh~NH Nguyen, and Choong~Seon Hong.
\newblock Federated learning over wireless networks: Optimization model design
  and analysis.
\newblock In {\em IEEE INFOCOM 2019-IEEE conference on computer
  communications}, pages 1387--1395. IEEE, 2019.

\bibitem{20DPkang}
Kang Wei, Jun Li, Ming Ding, Chuan Ma, Howard~H. Yang, Farhad Farokhi, Shi Jin,
  Tony Q.~S. Quek, and H.~Vincent Poor.
\newblock Federated learning with differential privacy: Algorithms and
  performance analysis.
\newblock {\em IEEE Transactions on Information Forensics and Security},
  15:3454--3469, 2020.

\bibitem{torfi2020natural}
Amirsina Torfi, Rouzbeh~A Shirvani, Yaser Keneshloo, Nader Tavaf, and Edward~A
  Fox.
\newblock Natural language processing advancements by deep learning: A survey.
\newblock {\em arXiv preprint arXiv:2003.01200}, 2020.

\bibitem{dos2014deep}
Cicero Dos~Santos and Maira Gatti.
\newblock Deep convolutional neural networks for sentiment analysis of short
  texts.
\newblock In {\em Proceedings of COLING 2014, the 25th international conference
  on computational linguistics: technical papers}, pages 69--78, 2014.

\bibitem{johnson2015semi}
Rie Johnson and Tong Zhang.
\newblock Semi-supervised convolutional neural networks for text categorization
  via region embedding.
\newblock {\em Advances in neural information processing systems}, 28, 2015.

\bibitem{21DPmeetsFL}
Nima Mohammadi, Jianan Bai, Qiang Fan, Yifei Song, Yang Yi, and Lingjia Liu.
\newblock Differential privacy meets federated learning under communication
  constraints.
\newblock {\em IEEE Internet of Things Journal}, 9(22):22204--22219, 2021.

\bibitem{2012optimalbatch}
Ofer Dekel, Ran Gilad-Bachrach, Ohad Shamir, and Lin Xiao.
\newblock Optimal distributed online prediction using mini-batches.
\newblock {\em Journal of Machine Learning Research}, 13(1), 2012.

\bibitem{zhang2022robust}
Lefeng Zhang, Tianqing Zhu, Ping Xiong, Wanlei Zhou, and S~Yu Philip.
\newblock A robust game-theoretical federated learning framework with joint
  differential privacy.
\newblock {\em IEEE Transactions on Knowledge and Data Engineering},
  35(4):3333--3346, 2022.

\bibitem{ding2020incentive}
Ningning Ding, Zhixuan Fang, and Jianwei Huang.
\newblock Incentive mechanism design for federated learning with
  multi-dimensional private information.
\newblock In {\em 2020 18th International Symposium on Modeling and
  Optimization in Mobile, Ad Hoc, and Wireless Networks (WiOPT)}, pages 1--8.
  IEEE, 2020.

\bibitem{lecun1998gradient}
Yann LeCun, L{\'e}on Bottou, Yoshua Bengio, and Patrick Haffner.
\newblock Gradient-based learning applied to document recognition.
\newblock {\em Proceedings of the IEEE}, 86(11):2278--2324, 1998.

\bibitem{krizhevsky2009learning}
Alex Krizhevsky, Geoffrey Hinton, et~al.
\newblock Learning multiple layers of features from tiny images.
\newblock 2009.

\bibitem{gtsrb13}
Sebastian Houben, Johannes Stallkamp, Jan Salmen, Marc Schlipsing, and
  Christian Igel.
\newblock Detection of traffic signs in real-world images: The german traffic
  sign detection benchmark.
\newblock In {\em The 2013 International Joint Conference on Neural Networks
  (IJCNN)}, pages 1--8, 2013.

\bibitem{Adaptive19}
Shiqiang Wang, Tiffany Tuor, Theodoros Salonidis, Kin~K. Leung, Christian
  Makaya, Ting He, and Kevin Chan.
\newblock Adaptive federated learning in resource constrained edge computing
  systems.
\newblock {\em IEEE Journal on Selected Areas in Communications},
  37(6):1205--1221, 2019.

\bibitem{kairouz2021advances}
Peter Kairouz, H~Brendan McMahan, Brendan Avent, Aur{\'e}lien Bellet, Mehdi
  Bennis, Arjun~Nitin Bhagoji, Kallista Bonawitz, Zachary Charles, Graham
  Cormode, Rachel Cummings, et~al.
\newblock Advances and open problems in federated learning.
\newblock {\em Foundations and Trends{\textregistered} in Machine Learning},
  14(1--2):1--210, 2021.

\bibitem{lee2020accurate}
Jin-woo Lee, Jaehoon Oh, Yooju Shin, Jae-Gil Lee, and Se-Young Yoon.
\newblock Accurate and fast federated learning via iid and communication-aware
  grouping.
\newblock {\em arXiv preprint arXiv:2012.04857}, 2020.

\bibitem{le2021incentive}
Tra Huong~Thi Le, Nguyen~H Tran, Yan~Kyaw Tun, Minh~NH Nguyen, Shashi~Raj
  Pandey, Zhu Han, and Choong~Seon Hong.
\newblock An incentive mechanism for federated learning in wireless cellular
  networks: An auction approach.
\newblock {\em IEEE Transactions on Wireless Communications}, 20(8):4874--4887,
  2021.

\bibitem{wang2023incentive}
Zhilin Wang, Qin Hu, Ruinian Li, Minghui Xu, and Zehui Xiong.
\newblock Incentive mechanism design for joint resource allocation in
  blockchain-based federated learning.
\newblock {\em IEEE Transactions on Parallel and Distributed Systems},
  34(5):1536--1547, 2023.

\bibitem{pandey2020crowdsourcing}
Shashi~Raj Pandey, Nguyen~H Tran, Mehdi Bennis, Yan~Kyaw Tun, Aunas Manzoor,
  and Choong~Seon Hong.
\newblock A crowdsourcing framework for on-device federated learning.
\newblock {\em IEEE Transactions on Wireless Communications}, 19(5):3241--3256,
  2020.

\bibitem{zhan2020learning}
Yufeng Zhan, Peng Li, Zhihao Qu, Deze Zeng, and Song Guo.
\newblock A learning-based incentive mechanism for federated learning.
\newblock {\em IEEE Internet of Things Journal}, 7(7):6360--6368, 2020.

\bibitem{zeng2020fmore}
Rongfei Zeng, Shixun Zhang, Jiaqi Wang, and Xiaowen Chu.
\newblock Fmore: An incentive scheme of multi-dimensional auction for federated
  learning in mec.
\newblock In {\em 2020 IEEE 40th international conference on distributed
  computing systems (ICDCS)}, pages 278--288. IEEE, 2020.

\bibitem{Crow2019}
S.~R. {Pandey}, N.~H. {Tran}, M.~{Bennis}, Y.~K. {Tun}, Z.~{Han}, and C.~S.
  {Hong}.
\newblock Incentivize to build: A crowdsourcing framework for federated
  learning.
\newblock In {\em 2019 IEEE Global Communications Conference (GLOBECOM)}, pages
  1--6, Dec. 2019.

\bibitem{Auction21}
Tra Huong~Thi Le, Nguyen~H. Tran, Yan~Kyaw Tun, Minh N.~H. Nguyen, Shashi~Raj
  Pandey, Zhu Han, and Choong~Seon Hong.
\newblock An incentive mechanism for federated learning in wireless cellular
  network: An auction approach.
\newblock {\em IEEE Transactions on Wireless Communications}, pages 1--1, 2021.

\end{thebibliography}

\vfill

\end{document}